\theoremstyle{plain}
\newtheorem{theorem}{Theorem}[section]
\newtheorem{proposition}[theorem]{Proposition}
\newtheorem{lemma}[theorem]{Lemma}
\newtheorem{corollary}[theorem]{Corollary}
\theoremstyle{definition}
\newtheorem{assumption}[theorem]{Assumption}
\newtheorem{remark}[theorem]{Remark}
\icmltitlerunning{Stochastic Gradient Succeeds for Bandits}
\crefname{proposition}{Proposition}{Propositions}
\crefname{theorem}{Theorem}{Theorems}
\crefname{lemma}{Lemma}{Lemmas}
\crefname{update_rule}{Update}{Updates}
\crefname{algorithm}{Algorithm}{Algorithms}
\crefname{figure}{Figure}{Figures}
\crefname{claim}{Claim}{Claims}
\crefname{table}{Table}{Tables}
\def\eqref#1{equation~\ref{#1}}
\def\1{\bm{1}}
\DeclareMathAlphabet{\mathsfit}{\encodingdefault}{\sfdefault}{m}{sl}
\SetMathAlphabet{\mathsfit}{bold}{\encodingdefault}{\sfdefault}{bx}{n}
\def\gA{{\mathcal{A}}}
\def\gD{{\mathcal{D}}}
\def\gE{{\mathcal{E}}}
\def\gF{{\mathcal{F}}}
\def\gN{{\mathcal{N}}}
\def\gP{{\mathcal{P}}}
\def\sI{{\mathbb{I}}}
\def\sR{{\mathbb{R}}}
\newcommand{\R}{\mathbb{R}}
\newcommand{\softmax}{\mathrm{softmax}}
\DeclareMathOperator*{\argmax}{arg\,max}
\theoremstyle{definition}
\newtheorem{update_rule}{Update}
\newtheorem{claim}{Claim}
\DeclareMathOperator*{\expectation}{\mathbb{E}}
\def\rvzero{{\mathbf{0}}}
\def\diagonalmatrix{\text{diag}}
\DeclareMathOperator*{\probability}{Pr}
\newcommand{\cA}{\mathcal{A}}
\newcommand{\cF}{\mathcal{F}}
\newcommand{\chE}{\mathbb{E}}
\newcommand{\EE}[1]{\mathbb{E}[#1]}
\newcommand{\PP}{\mathbb{P}}
\newcommand{\EEt}[1]{\mathbb{E}_t[#1]}
\newlength\tocrulewidth
\newcommand{\calA}{\mathcal{A}}
\newcommand{\calE}{\mathcal{E}}
\newcommand{\bbE}{\mathbb{E}}
\newcommand{\bbP}{\mathbb{P}}
\newcommand{\thickhline}{%
    \noalign {\ifnum 0=`}\fi \hrule height 0.8pt
    \futurelet \reserved@a \@xhline
}
\newcolumntype{"}{@{\hskip\tabcolsep\vrule width 1.5pt\hskip\tabcolsep}}
\begin{document}

\twocolumn[
\icmltitle{Stochastic Gradient Succeeds for Bandits}



\icmlsetsymbol{equal}{*}

\begin{icmlauthorlist}
\icmlauthor{Jincheng Mei}{equal,brain}
\icmlauthor{Zixin Zhong}{equal,ua}
\icmlauthor{Bo Dai}{brain,gt}
\icmlauthor{Alekh Agarwal}{gr}
\icmlauthor{Csaba Szepesv{\' a}ri}{ua}
\icmlauthor{Dale Schuurmans}{brain,ua}

\end{icmlauthorlist}

\icmlaffiliation{brain}{Google DeepMind}
\icmlaffiliation{ua}{University of Alberta}
\icmlaffiliation{gt}{Georgia Tech}
\icmlaffiliation{gr}{Google Research}

\icmlcorrespondingauthor{Jincheng Mei}{jcmei@google.com}
\icmlcorrespondingauthor{Zixin Zhong}{zzhong10@ualberta.ca}
\icmlcorrespondingauthor{Csaba Szepesv{\' a}ri}{szepi@google.com}

\icmlkeywords{Machine Learning, ICML}

\vskip 0.3in
]



\printAffiliationsAndNotice{$^*$Equal contribution. This version corrects a mistake in the proofs for \cref{thm:asymp_global_converg_gradient_bandit_sampled_reward} by adding a martingale concentration inequality of \cref{thm:conc_new}. The authors highly appreciate the help from Sharan Vaswani and Michael Lu at Simon Fraser University, and Anant Raj at SIERRA Project Team (Inria), Coordinated Science Laboratory (CSL), UIUC, for spotting the mistake in a previous version.} 

\begin{abstract}
We show that the \emph{stochastic gradient} bandit algorithm converges to a \emph{globally optimal} policy at an $O(1/t)$ rate,
even with a \emph{constant} step size.
Remarkably, global convergence of the stochastic gradient bandit algorithm has not been previously established,
even though 
it is an old algorithm known to be applicable to bandits. 
The new result is achieved by establishing two novel technical findings: 
first, the noise of the stochastic updates in the gradient bandit algorithm satisfies a strong ``growth condition'' 
property, where the variance diminishes whenever progress becomes small, 
implying that additional noise control via diminishing step sizes is unnecessary;
second, a form of ``weak exploration'' is automatically achieved through the stochastic gradient updates, since they prevent the action probabilities from decaying faster than $O(1/t)$, thus ensuring that every action is sampled infinitely often with probability $1$. 
These two findings 
can be used to show that the stochastic gradient update is already ``sufficient'' for bandits in the sense that exploration versus exploitation is automatically balanced in a manner that ensures almost sure convergence to a global optimum.
These novel theoretical findings are further verified by experimental results.
\end{abstract}

\section{Introduction}
\label{sec:introduction}

Algorithms for multi-armed bandits (MABs) need to balance exploration and exploitation to achieve desirable performance properties \citep{lattimore2020bandit}. Well known bandit algorithms generally introduce auxiliary mechanisms to control the exploration-exploitation trade-off. For example, the upper confidence bound strategies (UCB, \citet{lai1985asymptotically, auer2002finite}), manage exploration explicitly by designing auxiliary bonuses that induce optimism under uncertainty; Thompson sampling \citep{thompson1933likelihood,agrawal2012analysis} maintains a posterior over rewards that has to be updated and sampled from tractably. 
The theoretical analysis of such algorithms typically focuses on bounding their regret, i.e., showing that the average reward obtained by the algorithm approaches that of the optimal action in hindsight, at a rate that is statistically optimal. However, managing exploration bonuses via uncertainty quantification is difficult in all but simple environments~\citep{gawlikowski2021survey}, while posterior updating and sampling can also be computationally difficult in practice, which make the UCB and Thompson sampling challenging to apply in real world scenarios. 

Meanwhile, stochastic gradient-based techniques have witnessed widespread use across the breadth of machine learning. For bandit and reinforcement learning problems, stochastic gradient yields a particularly simple algorithm---the stochastic gradient bandit algorithm~\citep[Section 2.8]{sutton2018reinforcement}---that omits any \emph{explicit} control mechanism over \emph{ exploration}. This algorithm is naturally compatible with deep neural networks, in stark contrast to UCB and Thompson sampling, and has seen significant practical success~\citep{schulman2015trust,schulman2017proximal,ouyang2022training}. 
Surprisingly, the theoretical understanding of this algorithm remains under-developed: the global convergence and regret properties of the stochastic gradient bandit algorithm is still open, which naturally raises the question:
\begin{center}
    \emph{Is the stochastic gradient bandit algorithm able to balance exploration vs. exploitation to identify an optimal action?}
\end{center}
Such an understanding is paramount to further improving the underlying approach. In this paper, we take a step in this direction by studying the convergence properties of the canonical stochastic gradient bandit algorithm in the simplest setting of multi-armed bandits, and answer the question affirmatively: the distribution over the arms maintained by this algorithm almost surely concentrates  asymptotically on a globally optimal action. 

\begin{table*}[t]
\begin{center}
\begin{small}
\renewcommand{\arraystretch}{1.3}
\begin{tabular}{  l | c | c | c  }
\thickhline
Reference  & Convergence rate & Learning rate $\eta$ & Remarks \\
\hline
\citet{zhang2020sample} &  $\vphantom{\Big(}\tilde{O}(1/\sqrt{t})$  & $\tilde{O}(1/\sqrt{t})$ & Log-barrier regularization \\ 
\citet{ding2021beyond} &  $O(1/\sqrt{t})$ & $O(1/t)$ & Entropy regularization \\
\citet{zhang2021convergence} & $O(1/\sqrt{t})$ & $O(1/\log{t})$ & Gradient truncation + variance reduction\\
\citet{yuan2022general} & $\tilde{O}(1/t^{1/3})$ & $O(1/t)$ &   ABC assumptions      \\
\citet{mei2022role} & $O(C_{\Delta}/t)$ &  $O(1/t)$ & Oracle baseline, $C_{\Delta}$ is initialization and problem dependent \\
\textbf{This paper} & $\bm{O(C_{\Delta}/t)}$ & $\bm{O(1)}$ & \textbf{$C_{\Delta}$ is initialization and problem dependent} \\
\thickhline
\end{tabular}
\renewcommand{\arraystretch}{1}
\end{small}
\end{center}
\caption{Global convergence results for gradient based methods. To simplify comparison, regret or sample complexity bounds have been converted to convergence rates through the usual translations.  (This comparison does not capture differences in other metrics like regret.) We also compare the learning rates used in the underlying algorithm, since the ability to use a constant learning rate is a key insight of the analysis presented 
and is generally preferred in practice. Note that \citet{zhang2021convergence} claimed a constant learning rate; however, their learning rate must follow a decaying ``truncation threshold'', which means the actual learning rate in \citep{zhang2021convergence} is decaying.}
\label{table:stochastic_policy_gradient_results}
\end{table*}

Of course, the broader literature on the use of stochastic gradient techniques in reinforcement learning has a long tradition, dating back to stochastic approximation 
\citep{robbins1951stochastic} with likelihood ratios (``log trick'') \citep{glynn1990likelihood} and REINFORCE policy gradient estimation~\citep{williams1992simple}. 
It is well known that REINFORCE \citep{williams1992simple} with on-line Monte Carlo sampling provides an unbiased gradient estimator of bounded variance, which is sufficient to guarantee convergence to a stationary point if the learning rates are decayed appropriately \citep{robbins1951stochastic,zhang2020global}. However, convergence to a stationary point is an extremely weak guarantee for bandits, since any deterministic policy, whether optimal or sub-optimal, has a zero gradient and is therefore a stationary point.  Convergence to a stationary point, on its own, is insufficient to ensure convergence to a globally optimal policy or even to establish regret bounds. 

Recently, it has been shown that if true gradients are used, softmax policy gradient (PG) methods converge to a globally optimal policy asymptotically \citep{agarwal2021theory}, with an $O(1/t)$ rate \citep{mei2020global}, albeit with initialization and problem dependent constants \citep{mei2020escaping,li2021softmax}. However, these results do not apply to the gradient bandit algorithm as it uses \emph{stochastic} gradients, 
and a key theoretical challenge is to account for the effects of stochasticity from on-policy sampling and reward noise.

More recent results on the global convergence of PG methods with \emph{stochastic} updates have been established, as summarized in
\cref{table:stochastic_policy_gradient_results}. 
In particular, \citet{zhang2020global} showed that REINFORCE \citep{williams1992simple} with $O(1/\sqrt{t})$ learning rates and log-barrier regularization has $\tilde{O}(1/\sqrt{t})$ average regret. 
\citet{ding2021beyond} proved that softmax PG with $O(1/t)$ learning rates and entropy regularization has $\tilde{O}(\epsilon^{-2})$ sample complexity. 
\citet{zhang2021convergence} showed that with gradient truncation softmax PG gives $O(\epsilon^{-2})$ sample complexity.
Under extra assumptions, 
\citet{yuan2022general} obtained $\tilde{O}(\epsilon^{-3})$ sample complexity with $O(1/t)$ learning rates. 
\citet{mei2022role} analyzed on-policy natural PG with value baselines and $O(1/t)$ learning rates and proved $O(1/t)$ convergence rate.
The results in these works are expressed in different metrics,
such as average regret, sample complexity, or convergence rate, which can sometimes make comparisons difficult. 
However, for the bandit case, where
only one example is used in each iteration, these metrics become comparable, such that $O(\epsilon^{-n})$ sample complexity is equal to $O(t^{-1/n})$ convergence rate, which is stronger than $O(t^{-1/n})$ average regret, but not necessarily vice versa.%
\footnote{
It is worth noting that these works also contain results for general Markov decision processes (MDPs).  
We express their results for bandits here by treating this case as a single state MDP. 
}

The two key
shortcomings in these existing results are,
\textbf{first},
they introduce
decaying learning rates (or regularization and/or variance reduction) to explicitly control noise, 
and \textbf{second},
such auxiliary techniques generally incur additional computation and 
decelerate convergence to an $O(1/\sqrt{t})$ or slower rate. 
The only exception to 
the latter is \citet{mei2022role}, which considers an aggressive $O(1/t)$ learning rate decay and still establishes $O(1/t)$ convergence, but leverages an unrealistic baseline to achieve this.

In this paper, we provide a new global convergence analysis of stochastic gradient bandit algorithms \emph{with constant learning rates} by establishing novel properties and techniques. There are two main benefits to the results presented.
\begin{itemize}
    \item 
    By considering only constant learning rates, we show that auxiliary forms of noise control such as learning rate decay, regularization and variance reduction are \emph{unnecessary} to achieve global convergence, which justifies the use of far simpler algorithms in practice.
        \item Unlike previous work, we show that a \emph{practical} and general algorithm can achieve an optimal $O(1/t)$ convergence rate and $O(\log{T})$ regret asymptotically.
\end{itemize}
The remaining paper is organized as follows. 
\cref{sec:gradient_bandit_algorithms,sec:standard_stochastic_gradient_analysis} introduce the gradient bandit algorithms and standard stochastic gradient analysis respectively. 
\cref{sec:vanishing_noise} presents our novel technical findings, characterizing the automatic noise cancellation effect and global landscape properties,
which is then leveraged in \cref{sec:global_convergence_analysis} to establish novel global convergence results. 
\cref{sec:effect_of_baselines} discusses the effect of using baselines. \cref{sec:simulation_results} presents a simulation study to verify the theoretical findings, and \cref{sec:further_discussions}  provides further discussions. \cref{sec:conclusions} briefly concludes this work.

\section{Gradient Bandit Algorithms}
\label{sec:gradient_bandit_algorithms}

A multi-armed bandit (MAB) problem is specified by an action set $[K] \coloneqq \{ 1, 2, \cdots, K \}$ and random rewards with mean vector $r \in \sR^K$. For each action $a \in [K]$, the mean reward $r(a)$ is the expectation of a bounded reward distribution, 
\begin{align}
\label{eq:true_mean_reward_expectation_bounded_sampled_reward}
    r(a) = \int_{-R_{\max}}^{R_{\max}}{ x \cdot P_a(x) \mu(d x)},
\end{align}
where $\mu$ is a finite measure over $[-R_{\max}, R_{\max}]$, $P_a(x) \ge 0$ is a probability density function with respect to $\mu$, and $R_{\max} > 0$ is the reward range. Since the sampled reward is bounded, we also have $r \in [-R_{\max}, R_{\max}]^K$. We make the following assumption on $r$.
\begin{assumption}[True mean reward has no ties]
\label{assp:reward_no_ties}
For all $i, j \in [K]$, if $i \not= j$, then $r(i) \not= r(j)$.
\end{assumption}

\begin{remark}
\label{rmk:reward_no_ties_1}
\cref{assp:reward_no_ties} is used in the proofs for \cref{thm:asymp_global_converg_gradient_bandit_sampled_reward}. In particular, ``convergence toward strict one-hot policies'' above \cref{thm:asymp_global_converg_gradient_bandit_sampled_reward} is needed as a result of \cref{assp:reward_no_ties}. We discuss intuition later for establishing the same result without \cref{assp:reward_no_ties}.
\end{remark}

According to \citet[Section 2.8]{sutton2018reinforcement}, the gradient bandit algorithm maintains a softmax distribution over actions $\probability{\left( a_t = a \right)} = \pi_{\theta_t}(a)$ such that $\pi_{\theta_t} = \softmax(\theta_t)$, where
\begin{align}
\label{eq:softmax}
    \pi_{\theta_t}(a) = \frac{ \exp\{ \theta_t(a) \} }{ \sum_{a^\prime \in [K]}{ \exp\{ \theta_t(a^\prime) } \} }, \mbox{ \quad   for all } a \in [K],
\end{align}
and $\theta_t \in \sR^{K}$ is the parameter vector to be updated. 

\begin{algorithm}[ht]
   \caption{Gradient bandit algorithm (without baselines)}
\begin{algorithmic}
   \STATE {\bfseries Input:} initial parameters $\theta_1 \in \sR^K$, learning rate $\eta > 0$.
   \STATE {\bfseries Output:} policies $\pi_{\theta_t} = \softmax(\theta_t)$.
   \WHILE{$t \ge 1$}
   \STATE Sample one action $a_t \sim \pi_{\theta_t}(\cdot)$.
   \STATE Observe one reward sample $R_t(a_t)\sim P_{a_t}$.
   \FOR{all $a \in [K]$}
   \IF{$a = a_t$}
   \STATE $\theta_{t+1}(a) \gets \theta_t(a) + \eta \cdot \left( 1 - \pi_{\theta_t}(a) \right) \cdot R_t(a_t)$.
   \ELSE
   \STATE $\theta_{t+1}(a) \gets \theta_t(a) - \eta \cdot \pi_{\theta_t}(a) \cdot R_t(a_t)$.
   \ENDIF
   \ENDFOR
   \ENDWHILE
\end{algorithmic}
\label{alg:gradient_bandit_algorithm_sampled_reward}
\end{algorithm}

It is obvious that \cref{alg:gradient_bandit_algorithm_sampled_reward} is an instance of stochastic gradient ascent 
with an unbiased gradient estimator~\citep{nemirovski2009robust}, as shown below for completeness.
\begin{proposition}
\label{prop:gradient_bandit_algorithm_equivalent_to_stochastic_gradient_ascent_sampled_reward}
\cref{alg:gradient_bandit_algorithm_sampled_reward} is equivalent to the following stochastic gradient ascent update on $\pi_{\theta}^\top r$,
\begin{align}
    \theta_{t+1} &\gets  \theta_{t} + \eta \cdot \frac{d \pi_{\theta_t}^\top \hat{r}_t}{d \theta_t} \label{eq:stochastic_gradient_ascent_sampled_reward}\\
    &= \theta_t + \eta \cdot \left(  \diagonalmatrix{(\pi_{\theta_t})} - \pi_{\theta_t} \pi_{\theta_t}^\top \right) \hat{r}_t,
    \label{eq:stochastic_gradient_form}
\end{align}
where $\mathbb{E}_t{ \Big[ \frac{d \pi_{\theta_t}^\top \hat{r}_t }{d \theta_t} \Big] } = \frac{d \pi_{\theta_t}^\top r}{d \theta_t }$, and $\left( \frac{d \pi_{\theta}}{d \theta} \right)^\top = \diagonalmatrix{(\pi_{\theta})} - \pi_{\theta} \pi_{\theta}^\top $ is the Jacobian of $\theta \mapsto \pi_\theta \coloneqq \softmax(\theta)$, 
$\hat{r}_t(a) \coloneqq \frac{ \sI\left\{ a_t = a \right\} }{ \pi_{\theta_t}(a) } \cdot R_t(a)$ for all $a \in [K]$ is the importance sampling (IS) estimator, and we set $R_t(a)=0$ for all $a \not= a_t$. 
\end{proposition}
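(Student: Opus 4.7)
The plan is to verify the three claims in \cref{prop:gradient_bandit_algorithm_equivalent_to_stochastic_gradient_ascent_sampled_reward} by direct computation: the Jacobian formula for the softmax, the per-coordinate equivalence between \cref{alg:gradient_bandit_algorithm_sampled_reward} and the compact stochastic gradient form, and the unbiasedness of the importance-sampling estimator.

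First I would derive the Jacobian of $\theta \mapsto \softmax(\theta)$ by differentiating $\pi_\theta(a) = \exp\{\theta(a)\}/\sum_{a'} \exp\{\theta(a')\}$ entry-wise: the diagonal entries evaluate to $\pi_\theta(a)\,(1 - \pi_\theta(a))$ and the off-diagonal entries to $-\pi_\theta(a)\,\pi_\theta(a')$, which assemble into the symmetric matrix $\diagonalmatrix{(\pi_\theta)} - \pi_\theta \pi_\theta^\top$. Applying the chain rule to the linear map $\pi \mapsto \pi^\top \hat{r}_t$ then yields
\begin{align*}
\frac{d\, \pi_{\theta_t}^\top \hat{r}_t}{d\, \theta_t} = \left( \diagonalmatrix{(\pi_{\theta_t})} - \pi_{\theta_t} \pi_{\theta_t}^\top \right) \hat{r}_t,
\end{align*}
which establishes the second line of the claimed identity.

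Next I would verify that the $a$-th component of this gradient agrees with the per-coordinate update rule in \cref{alg:gradient_bandit_algorithm_sampled_reward} by a two-case split on whether $a = a_t$. The key simplification is that, by the convention $\hat{r}_t(a') = 0$ for $a' \neq a_t$, the weighted sum collapses to $\sum_{a'} \pi_{\theta_t}(a')\,\hat{r}_t(a') = \pi_{\theta_t}(a_t) \cdot R_t(a_t) / \pi_{\theta_t}(a_t) = R_t(a_t)$. For $a = a_t$ the $a$-th coordinate of the gradient therefore equals $\pi_{\theta_t}(a_t)\,\hat{r}_t(a_t) - \pi_{\theta_t}(a_t)\,R_t(a_t) = (1 - \pi_{\theta_t}(a_t))\,R_t(a_t)$, and for $a \neq a_t$ it equals $0 - \pi_{\theta_t}(a)\,R_t(a_t) = -\pi_{\theta_t}(a)\,R_t(a_t)$. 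These match exactly the two branches of the algorithm, so the first line of the identity also holds.

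Finally, unbiasedness of $\hat{r}_t$ follows by conditioning on the history $\gF_t$ available at step $t$: since $a_t \sim \pi_{\theta_t}(\cdot)$ and, given $a_t = a$, the reward $R_t(a) \sim P_a$ is drawn independently with mean $r(a)$, one has $\mathbb{E}_t[\hat{r}_t(a)] = \pi_{\theta_t}(a) \cdot r(a) / \pi_{\theta_t}(a) = r(a)$. Linearity of expectation combined with the Jacobian formula above then gives $\mathbb{E}_t\big[ \frac{d\, \pi_{\theta_t}^\top \hat{r}_t}{d\, \theta_t} \big] = ( \diagonalmatrix{(\pi_{\theta_t})} - \pi_{\theta_t} \pi_{\theta_t}^\top ) \, r = \frac{d\, \pi_{\theta_t}^\top r}{d\, \theta_t}$. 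There is no substantive obstacle here: the proposition is essentially a bookkeeping identity recasting \cref{alg:gradient_bandit_algorithm_sampled_reward} as stochastic gradient ascent on the scalar objective $\pi_\theta^\top r$ with the standard REINFORCE/importance-sampling estimator. The only step that requires mild care is making the convention $\hat{r}_t(a) = 0$ for $a \neq a_t$ explicit, without which the cross-terms in the two gradient branches would not collapse to the compact expressions appearing in the algorithm.
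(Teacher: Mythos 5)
Your proof is correct and follows essentially the same route as the paper's: the per-coordinate two-case split hinging on the collapse $\pi_{\theta_t}^\top \hat{r}_t = R_t(a_t)$ is exactly the paper's argument for the update equivalence. The only difference is that the paper defers the unbiasedness claim to the proof of \cref{prop:unbiased_stochastic_gradient_bounded_scale_sampled_reward}, where it is verified coordinate-by-coordinate via conditioning on $a_t$, whereas you obtain it more compactly from $\mathbb{E}_t[\hat{r}_t] = r$ together with the $\gF_t$-measurability of the softmax Jacobian; both are valid.
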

Based on \cref{prop:gradient_bandit_algorithm_equivalent_to_stochastic_gradient_ascent_sampled_reward}, \citet[Section 2.8]{sutton2018reinforcement} assert that \cref{alg:gradient_bandit_algorithm_sampled_reward} has ``robust convergence properties'' toward stationary points without rigorous justification. 
However, as mentioned in \cref{sec:introduction}, convergence to stationary points is a very weak assertion in a MAB, since this does not guarantee sub-optimal local maxima are avoided. Hence this claim does not assure global convergence or sub-linear regret for the gradient bandit algorithm.

\section{Preliminary Stochastic Gradient Analysis}
\label{sec:standard_stochastic_gradient_analysis}
In this section, we start with local convergence of stochastic gradient bandit algorithm. The understanding of the behavior of the algorithm involves assessing whether optimization progress is able to overcome the effects of the sampling noise. This trade-off reveals inability of the vanilla analysis and inspires our refined analysis.

To illustrate the basic ideas, we first recall some known results about the form of $\pi_{\theta}^\top r$ and the behavior of \cref{alg:gradient_bandit_algorithm_sampled_reward} and make a preliminary attempt to establish convergence to a stationary point. 
\textbf{First}, $\pi_{\theta}^\top r$ is a $5/2$-smooth function of $\theta \in \sR^K$ \citep[Lemma 2]{mei2020global}, which implies that,
\begin{align*}
\MoveEqLeft
    \pi_{\theta_t}^\top r - \pi_{\theta_{t+1}}^\top r \le - \Big\langle \frac{d \pi_{\theta_t}^\top r}{d \theta_t}, \theta_{t+1} - \theta_t \Big\rangle + \frac{5}{4} \cdot \| \theta_{t+1} - \theta_{t} \|_2^2 \\
    &= - \eta \cdot \Big\langle \frac{d \pi_{\theta_t}^\top r}{d \theta_t}, \frac{d \pi_{\theta_t}^\top \hat{r}_t}{d \theta_t} \Big\rangle + \frac{5}{4} \cdot \eta^2 \cdot \bigg\| \frac{d \pi_{\theta_t}^\top \hat{r}_t}{d \theta_t} \bigg\|_2^2,
\end{align*}
where the last equation follows from \cref{eq:stochastic_gradient_ascent_sampled_reward}. 
\textbf{Second}, as is well known, the on-policy stochastic gradient is unbiased, and its variance / scale is uniformly bounded over all $\pi_{\theta}$.
\begin{proposition}[Unbiased stochastic gradient with bounded variance / scale]
\label{prop:unbiased_stochastic_gradient_bounded_scale_sampled_reward}
Using \cref{alg:gradient_bandit_algorithm_sampled_reward}, we have, for all $t \ge 1$,
\begin{align*}
    \mathbb{E}_t{ \bigg[ \frac{d \pi_{\theta_t}^\top \hat{r}_t }{d \theta_t} \bigg] } = \frac{d \pi_{\theta_t}^\top r}{d \theta_t }, \text{ and } \mathbb{E}_t{ \left[ \bigg\| \frac{d \pi_{\theta_t}^\top \hat{r}_t }{d \theta_t} \bigg\|_2^2 \right] } \le 2 \,  R_{\max}^2,
\end{align*}
where $\EEt{\cdot}$ is on randomness from the on-policy sampling $a_t \sim \pi_{\theta_t}(\cdot)$ and reward sampling $R_t(a_t)\sim P_{a_t}$.
\end{proposition}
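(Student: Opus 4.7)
The plan is to verify both claims by direct computation, exploiting the sparsity of the importance sampling estimator $\hat{r}_t$ (which has only one nonzero coordinate, namely the one at $a_t$).

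For the unbiasedness claim, I would first condition on the history up to time $t$ so that only $a_t\sim\pi_{\theta_t}$ and the reward draw $R_t(a_t)\sim P_{a_t}$ are random. For any action $a\in[K]$, a tower-property calculation gives
\begin{align*}
\mathbb{E}_t\!\left[\hat{r}_t(a)\right]
&= \sum_{a'\in[K]} \pi_{\theta_t}(a')\cdot \mathbb{E}\!\left[\frac{\mathbb{I}\{a'=a\}}{\pi_{\theta_t}(a)}\,R_t(a')\,\Big|\,a_t=a'\right]\\
&= \pi_{\theta_t}(a)\cdot\tfrac{1}{\pi_{\theta_t}(a)}\cdot r(a) = r(a),
\end{align*}
so $\mathbb{E}_t[\hat{r}_t]=r$. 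Since the Jacobian $\frac{d\pi_{\theta_t}}{d\theta_t}=\diagonalmatrix(\pi_{\theta_t})-\pi_{\theta_t}\pi_{\theta_t}^\top$ is deterministic given $\theta_t$, linearity of conditional expectation yields the stated unbiasedness.

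For the bounded-second-moment claim, I would plug the sparse form of $\hat{r}_t$ into \eqref{eq:stochastic_gradient_form}. The coordinate-wise expressions
\begin{align*}
\Big[\tfrac{d\pi_{\theta_t}^\top\hat{r}_t}{d\theta_t}\Big](a)
= \begin{cases}
(1-\pi_{\theta_t}(a_t))\,R_t(a_t), & a=a_t,\\
-\pi_{\theta_t}(a)\,R_t(a_t), & a\neq a_t,
\end{cases}
\end{align*}
yield the pointwise identity
\begin{align*}
\Big\|\tfrac{d\pi_{\theta_t}^\top\hat{r}_t}{d\theta_t}\Big\|_2^2
= R_t(a_t)^2 \Big[(1-\pi_{\theta_t}(a_t))^2 + \sum_{a\neq a_t}\pi_{\theta_t}(a)^2\Big].
\end{align*}
Using $|R_t(a_t)|\le R_{\max}$ and the two inequalities $(1-\pi_{\theta_t}(a_t))^2\le 1-\pi_{\theta_t}(a_t)$ and $\sum_{a\neq a_t}\pi_{\theta_t}(a)^2\le \sum_{a\neq a_t}\pi_{\theta_t}(a)=1-\pi_{\theta_t}(a_t)$, the bracket is at most $2(1-\pi_{\theta_t}(a_t))\le 2$. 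Thus the squared norm is bounded by $2R_{\max}^2$ almost surely, and the same bound passes to $\mathbb{E}_t[\cdot]$.

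There is no real technical obstacle: the key simplification is recognizing that $\hat{r}_t$ is supported on the single coordinate $a_t$, which collapses the Jacobian product to an explicit two-case formula and turns the second-moment bound into an elementary inequality on $\pi_{\theta_t}$. The only care needed is to be explicit about what the conditional expectation $\mathbb{E}_t$ averages over (both the on-policy action sample and the reward noise) so that the unbiasedness step is rigorous.
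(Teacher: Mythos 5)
Your proposal is correct and follows essentially the same route as the paper: the second-moment bound is the identical computation (expand the sparse coordinates $(\mathbb{I}\{a_t=a\}-\pi_{\theta_t}(a))R_t(a_t)$, bound the resulting sum of squares by $2$, multiply by $R_{\max}^2$), differing only in which elementary inequality closes the bracket. For unbiasedness you factor the argument through $\mathbb{E}_t[\hat r_t]=r$ and the $\mathcal{F}_t$-measurability of the Jacobian rather than computing the conditional expectation of each gradient coordinate directly as the paper does, but this is the same calculation reorganized, not a different argument.
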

Therefore, according to \cref{prop:unbiased_stochastic_gradient_bounded_scale_sampled_reward}, we have,
\begin{align}
\label{eq:smoothness_progress_special}
    \pi_{\theta_t}^\top r - \EEt{ \pi_{\theta_{t+1}}^\top r } \le - \eta \cdot \bigg\| \frac{d \pi_{\theta_t}^\top r}{d \theta_t} \bigg\|_2^2 + \frac{5 \, R_{\max}^2}{2} \cdot \eta^2.
\end{align}

Since the goal is to maximize $\pi_{\theta}^\top r$, we want the first term on the r.h.s. of \cref{eq:smoothness_progress_special} (``progress'') to overcome the second term (``noise'') to ensure that $\EEt{ \pi_{\theta_{t+1}}^\top r } > \pi_{\theta_t}^\top r$. 
Unfortunately, this is not achievable using a constant learning rate $\eta \in \Theta(1)$, since the ``progress'' contains a vanishing term of $\Big\| \frac{d \pi_{\theta_t}^\top r}{d \theta_t} \Big\|_2 \to 0$ as $t \to \infty$ while the ``noise'' term will remain at constant level. 
Therefore, based on this bound, it seems necessary to use a diminishing learning rate $\eta \to 0$ to control the effect of noise for local convergence. In fact, with appropriate learning rate control \citep{robbins1951stochastic, ghadimi2013stochastic,zhang2020global}, it can be shown that minimum gradient norm converge to zero. From \cref{eq:smoothness_progress_special}, by algebra and telescoping, we have,
{
\begin{align*}
\resizebox{\columnwidth}{!}{%
    $\min\limits_{1 \le t \le T}{ \expectation{ \left[ \Big\| \frac{d \pi_{\theta_t}^\top r}{d \theta_t} \Big\|_2^2 \right]} } 
    \le \frac{ \EE{ \pi_{\theta_{T+1}}^\top r } - \EE{ \pi_{\theta_1}^\top r} }{ \sum_{t=1}^{T}{\eta_t} } + \frac{5 \, R_{\max}^2}{2}  \frac{ \sum_{t=1}^{T}{\eta_t^2} }{ \sum_{t=1}^{T}{\eta_t} }$.
    }
\end{align*}
}%
Choosing $\eta_t \in \Theta(1/\sqrt{t})$, the r.h.s. of the above inequality is in $\tilde{O}(1/\sqrt{T})$, i.e.,  the minimum gradient norm approaches zero as $T \to \infty$ \citep{ghadimi2013stochastic,zhang2020global}.
However, the decaying learning rate will slow down the convergence as is seen. Next, we will present our novel technical characterization of the noise in stochastic gradient bandit that can allow us to avoid this choice.



\section{New Analysis: Noise Vanishes Automatically}
\label{sec:vanishing_noise}

As discussed in \cref{sec:standard_stochastic_gradient_analysis}, noise control is at the heart of standard stochastic gradient analysis, and different techniques (entropy or log-barrier regularization, learning rate schemes, momentum) are used to explicitly combat noise in stochastic updates. 
Here we take a different perspective by asking whether the sampling noise will automatically diminish in a way that there is no need to explicitly control it. In particular, we investigate whether the constant order of the second term in \cref{eq:smoothness_progress_special} (``noise'') is accurately characterizing the sampling noise, or whether this bound can be improved.

Note that the ``noise'' constant $5 \, R_{\max}^2$ in \cref{eq:smoothness_progress_special} arises from two quantities: the standard smoothness constant of $5/2$, 
and the variance upper bound of $2 \, R_{\max}^2$ in \cref{prop:unbiased_stochastic_gradient_bounded_scale_sampled_reward}. 
It turns out that both of these quantities can be improved.

\subsection{Non-uniform Smoothness: Landscape Properties}

The first key observation is a landscape property originally derived in \citep{mei2021leveraging} for true policy gradient settings, which is also applicable for stochastic gradient update. 

\begin{lemma}[Non-uniform smoothness (NS), Lemma 2 in \citep{mei2021leveraging}]
\label{lem:non_uniform_smoothness_softmax_special}
For all $\theta \in \sR^K$, and for all $r\in \sR^K$, the spectral radius of the Hessian matrix $\frac{d^2 \{ \pi_{\theta}^\top r \} }{d {\theta^2 }} \in \sR^{K \times K} $ is upper bounded by $3 \cdot \Big\| \frac{d \pi_{\theta}^\top r}{d {\theta}} \Big\|_2$, i.e., for all $y \in \sR^K$,
\begin{align}
    \left| y^\top \ \frac{d^2 \{ \pi_{\theta}^\top r \} }{d {\theta^2 }} \ y \right| \le 3 \cdot \bigg\| \frac{d \pi_{\theta}^\top r}{d {\theta}} \bigg\|_2 \cdot \| y \|_2^2.
\end{align}
\end{lemma}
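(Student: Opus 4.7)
The bound concerns the quadratic form $y^\top H y$ where $H \coloneqq \frac{d^2 \{ \pi_\theta^\top r \}}{d \theta^2}$. Since $H$ is symmetric, its spectral radius equals $\sup_{\| y \|_2 = 1} | y^\top H y |$, so the plan is to fix an arbitrary $y \in \sR^K$ and show $| y^\top H y | \le 3 \cdot \big\| \frac{d \pi_\theta^\top r}{d \theta} \big\|_2 \cdot \| y \|_2^2$ via a one-dimensional reduction. Concretely, define $f(s) \coloneqq \pi_{\theta + s y}^\top r$. Then $f''(0) = y^\top H y$, and the goal reduces to bounding $| f''(0) |$ by $3 \cdot \big\| \frac{d \pi_\theta^\top r}{d \theta} \big\|_2 \cdot \| y \|_2^2$.

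The core step is to derive a compact formula for $f''(s)$ using the softmax identity $\frac{d \pi_{\theta + s y}(a)}{d s} = \pi_{\theta + s y}(a) \, ( y_a - \bar{y}(s) )$ with $\bar{y}(s) \coloneqq \sum_b \pi_{\theta + s y}(b) \, y_b$. A direct computation yields $f'(s) = \sum_a \pi_{\theta + s y}(a) \, ( r(a) - f(s) ) \, y_a$. Differentiating once more and collecting terms, with repeated use of the same softmax identity, produces the clean expression
\begin{align*}
    f''(s) = \sum_{a \in [K]} g_a(s) \cdot ( y_a - \bar{y}(s) )^2 ,
\end{align*}
where $g_a(s) \coloneqq \pi_{\theta + s y}(a) \, ( r(a) - f(s) )$ is the $a$-th component of the gradient $\frac{d \pi_{\theta + s y}^\top r}{d \theta}$. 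Evaluating at $s = 0$ gives $y^\top H y = \sum_a g_a \, ( y_a - \bar{y} )^2$, where $g_a$ and $\bar{y}$ are now taken at $\theta$.

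Finally, I would exploit the simplex tangent-space identity $\sum_a g_a = 0$, which holds because the Jacobian $\diagonalmatrix(\pi_\theta) - \pi_\theta \pi_\theta^\top$ annihilates the all-ones vector, so the components of $\frac{d \pi_\theta^\top r}{d \theta}$ sum to zero. Expanding $( y_a - \bar{y} )^2 = y_a^2 - 2 \bar{y} y_a + \bar{y}^2$ and using $\sum_a g_a = 0$ removes the $\bar{y}^2$ term, leaving $y^\top H y = \sum_a g_a \, y_a^2 - 2 \bar{y} \sum_a g_a \, y_a$. Two Cauchy--Schwarz estimates, together with $\sum_a y_a^4 \le \| y \|_2^4$ and $| \bar{y} | \le ( \sum_a \pi_\theta(a) \, y_a^2 )^{1/2} \le \| y \|_\infty \le \| y \|_2$, then yield $| y^\top H y | \le \| g \|_2 \, \| y \|_2^2 + 2 \, \| y \|_2 \cdot \| g \|_2 \, \| y \|_2 = 3 \, \| g \|_2 \, \| y \|_2^2$, which is the claim. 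The main obstacle is the second-derivative computation itself: the nonlinearity of softmax generates several mixed terms, and the whole argument hinges on arranging them into the form $\sum_a g_a ( y_a - \bar{y} )^2$, in which the gradient appears explicitly as a multiplicative factor; once that structure is in hand, the constant $3$ drops out from the zero-sum property of $g$ and routine Cauchy--Schwarz.
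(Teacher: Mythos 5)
Your proof is correct and follows essentially the same route as the paper: after expanding $(y_a-\bar y)^2$ and using $\sum_a g_a=0$, your centered identity collapses to exactly the paper's expression $y^\top H y=\left( H(\pi_\theta)r\right)^\top (y\odot y)-2\left( H(\pi_\theta)r\right)^\top y\cdot(\pi_\theta^\top y)$, and both arguments then bound the two terms by $\|g\|_2\|y\|_2^2$ and $2\|g\|_2\|y\|_2^2$ respectively (the paper via H\"older with $\left\| g\right\|_\infty\le\left\| g\right\|_2$ and $\|y\odot y\|_1=\|y\|_2^2$, you via Cauchy--Schwarz with $\|y\odot y\|_2\le\|y\|_2^2$). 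The only cosmetic difference is that you obtain the quadratic form by differentiating $s\mapsto\pi_{\theta+sy}^\top r$ twice rather than computing the Hessian entries $S_{(i,j)}$ directly; I checked your identity $f''(s)=\sum_a g_a(s)\,(y_a-\bar y(s))^2$ and it is correct.
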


It is useful to understand the intuition behind 
this lemma.
Note that when the PG norm $\vphantom{\hat{\Big|}}\Big\| \frac{d \pi_{\theta}^\top r}{d {\theta}} \Big\|_2$ is small, the policy $\pi_{\theta}$ is close to a one-hot policy, and the objective $\pi_{\theta}^\top r$ has a flat local landscape; ultimately implying that the the Hessian magnitude is upper bounded by the gradient. 

However, directly using \cref{lem:non_uniform_smoothness_softmax_special} remains challenging: Consider two iterates $\theta_t$ and $\theta_{t+1}$.  Then in a Taylor expansion the PG norm of an intermediate point $\theta_\zeta \coloneqq \theta_t + \zeta \cdot (\theta_{t+1} - \theta_t)$,  $\zeta \in [0, 1]$, will appear, which is undesirable since $\zeta$ is unknown. 
Therefore, we require an additional lemma to assert that for a sufficiently small learning rate, the PG norm of $\theta_\zeta$ will be controlled by that of $\theta_t$.
\begin{lemma}[NS between iterates]
\label{lem:non_uniform_smoothness_special_two_iterations}
Using \cref{alg:gradient_bandit_algorithm_sampled_reward} with $\eta \in \big(0, \frac{2}{9 \cdot R_{\max}} \big)$, we have, for all $t \ge 1$,
\begin{align}
    D(\theta_{t+1}, \theta_t) \le \frac{\beta(\theta_t)}{2} \cdot \| \theta_{t+1} - \theta_t \|_2^2,
\end{align}
where $D(\theta_{t+1}, \theta_t)$ is Bregman divergence defined as,
\begin{align*}
    D(\theta_{t+1}, \theta_t) \coloneqq \bigg| ( \pi_{\theta_{t+1}} - \pi_{\theta_t})^\top r - \Big\langle \frac{d \pi_{\theta_t}^\top r}{d \theta_t}, \theta_{t+1} - \theta_t \Big\rangle \bigg|,
\end{align*}
and $\beta(\theta_t) = \frac{6}{2 - 9 \cdot R_{\max} \cdot \eta } \cdot \Big\| \frac{d \pi_{\theta_t}^\top r}{d \theta_t} \Big\|_2$.
\end{lemma}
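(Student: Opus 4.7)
The plan is to combine \cref{lem:non_uniform_smoothness_softmax_special} with a Taylor remainder expansion of the Bregman divergence, and then resolve the self-referential dependence on the intermediate gradient norm by a bootstrap argument. First, I would expand the Bregman divergence in integral-remainder form: setting $\theta_\zeta := \theta_t + \zeta \cdot (\theta_{t+1} - \theta_t)$ for $\zeta \in [0,1]$,
\[
D(\theta_{t+1},\theta_t) \,=\, \bigg| \int_0^1 (1-\zeta) \cdot (\theta_{t+1}-\theta_t)^\top \frac{d^2 \pi_{\theta_\zeta}^\top r}{d \theta_\zeta^2} (\theta_{t+1}-\theta_t)\, d\zeta \bigg|.
\]
Applying \cref{lem:non_uniform_smoothness_softmax_special} pointwise to the Hessian quadratic form then yields
\[
D(\theta_{t+1},\theta_t) \,\le\, \frac{3}{2} \cdot M \cdot \|\theta_{t+1}-\theta_t\|_2^2, \qquad M := \sup_{\zeta \in [0,1]} \bigg\| \frac{d \pi_{\theta_\zeta}^\top r}{d \theta_\zeta} \bigg\|_2.
\]

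The central task, and the main obstacle, is to transfer the gradient-norm control from the intermediate points $\theta_\zeta$ back to $\theta_t$ so that the effective smoothness constant depends only on algorithmically accessible quantities at iteration $t$. I would write, via the fundamental theorem of calculus,
\[
\frac{d \pi_{\theta_\zeta}^\top r}{d \theta_\zeta} - \frac{d \pi_{\theta_t}^\top r}{d \theta_t} \,=\, \int_0^\zeta \frac{d^2 \pi_{\theta_s}^\top r}{d \theta_s^2} (\theta_{t+1}-\theta_t) \, ds,
\]
take norms on both sides, invoke \cref{lem:non_uniform_smoothness_softmax_special} pointwise in $s$, and pass to the supremum over $\zeta \in [0,1]$. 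This yields the self-bounding inequality
\[
M \cdot \Big( 1 - 3 \cdot \|\theta_{t+1}-\theta_t\|_2 \Big) \,\le\, \bigg\| \frac{d \pi_{\theta_t}^\top r}{d \theta_t} \bigg\|_2,
\]
which is only useful once the step length is shown to be strictly smaller than $1/3$; this is precisely where the learning-rate hypothesis $\eta < 2/(9 R_{\max})$ will enter.

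To close the argument, I would use the explicit form of the update from \cref{alg:gradient_bandit_algorithm_sampled_reward}: since $\theta_{t+1} - \theta_t = \eta \cdot R_t(a_t) \cdot (e_{a_t} - \pi_{\theta_t})$ and $|R_t(a_t)| \le R_{\max}$, an elementary bound gives $\|\theta_{t+1}-\theta_t\|_2 \le \frac{3}{2} \cdot \eta \cdot R_{\max}$. Substituting this into the self-bounding inequality, the assumption $\eta < 2/(9 R_{\max})$ guarantees the strictly positive denominator $1 - 9 \eta R_{\max}/2 > 0$, so
\[
M \,\le\, \frac{2}{2 - 9 \eta R_{\max}} \cdot \bigg\| \frac{d \pi_{\theta_t}^\top r}{d \theta_t} \bigg\|_2.
\]
Plugging this back into the Taylor-remainder bound for $D(\theta_{t+1},\theta_t)$ then delivers the advertised constant $\beta(\theta_t) = \frac{6}{2 - 9 R_{\max} \eta} \cdot \| \frac{d \pi_{\theta_t}^\top r}{d \theta_t} \|_2$. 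The subtlety worth flagging is the bootstrap step itself: it is circular in structure (the supremum $M$ bounds itself through the Hessian along the segment), and the learning-rate condition is exactly what makes this circle contractive; continuity of $\zeta \mapsto \| \frac{d \pi_{\theta_\zeta}^\top r}{d \theta_\zeta} \|_2$ on the compact interval $[0,1]$ ensures $M$ is finite and attained, so the rearrangement is legitimate.
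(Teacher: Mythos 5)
Your proposal is correct and follows essentially the same route as the paper's proof: a Taylor expansion of the Bregman divergence, \cref{lem:non_uniform_smoothness_softmax_special} applied along the segment $\theta_\zeta$, and the step-size condition $\eta < 2/(9 R_{\max})$ used to transfer the intermediate gradient norm back to $\big\| \frac{d \pi_{\theta_t}^\top r}{d \theta_t} \big\|_2$. The only difference is how the self-referential bound is closed: you take the supremum $M$ over the segment and rearrange the single inequality $M \cdot (1 - 3\|\theta_{t+1}-\theta_t\|_2) \le \big\| \frac{d \pi_{\theta_t}^\top r}{d \theta_t} \big\|_2$ (legitimate, since $M$ is finite by continuity on $[0,1]$ and $3\|\theta_{t+1}-\theta_t\|_2 \le \tfrac{9}{2}\eta R_{\max} < 1$), whereas the paper unrolls the same recursion into an infinite geometric series with ratio $3\eta \big\| \frac{d \pi_{\theta_t}^\top \hat{r}_t}{d \theta_t} \big\|_2 < 1$; both resolutions rely on the identical step-length bound and yield the same constant $\beta(\theta_t)$.
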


With the learning rate requirement, 
\cref{lem:non_uniform_smoothness_special_two_iterations} is no longer only a landscape property, but also depends on updates. Using \cref{lem:non_uniform_smoothness_special_two_iterations} rather than standard smoothness, $5 \, R_{\max}^2$ in \cref{eq:smoothness_progress_special} can be replaced with $\beta(\theta_t)$, which implies that the ``noise'' is also vanishing since $\Big\| \frac{d \pi_{\theta_t}^\top r}{d \theta_t} \Big\|_2 \to 0$ as $t \to \infty$. However, with simple constant upper bound of the variance of noise in \cref{prop:unbiased_stochastic_gradient_bounded_scale_sampled_reward}, the progress term in \cref{eq:smoothness_progress_special} still decays faster than the noise term since $\Big\| \frac{d \pi_{\theta_t}^\top r}{d {\theta_t}} \Big\|_2^2 \in o\Big( \Big\| \frac{d \pi_{\theta_t}^\top r}{d {\theta_t}} \Big\|_2\Big)$. 
Unfortunately, this suggests that a decaying learning rate is still necessary to control the noise. Therefore, a further refined analysis of the noise variance is required.

\subsection{Growth Conditions: Softmax Jacobian Behavior}

\begin{figure*}[ht]
\begin{center}
\centerline{\includegraphics[width=0.85\linewidth]{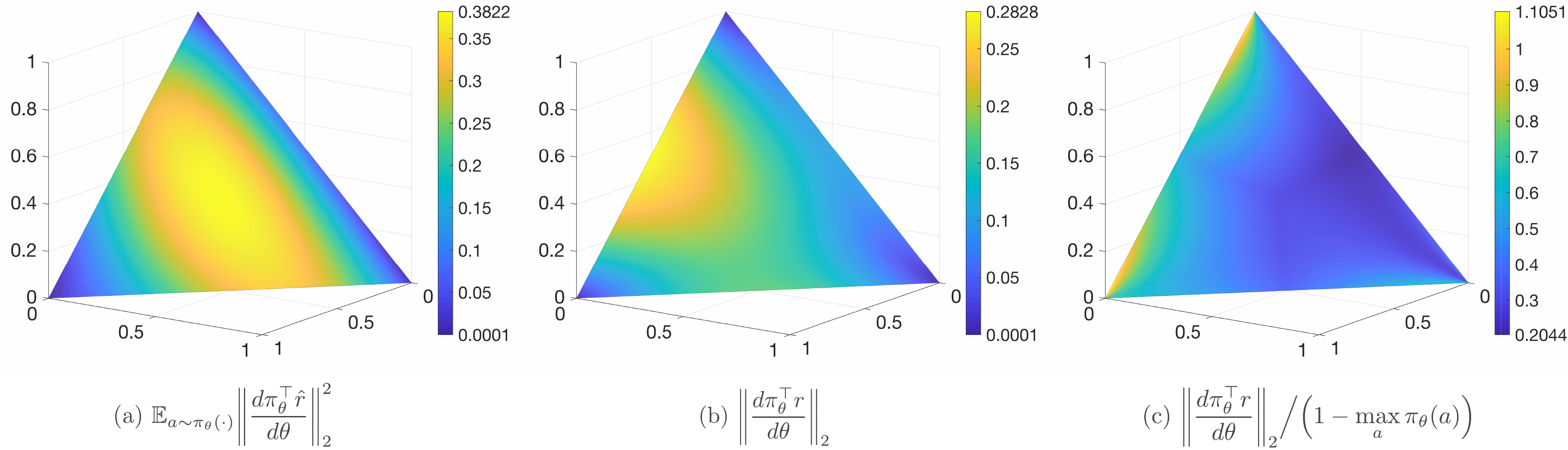}}
\caption{Visualization and intuition for  \cref{lem:strong_growth_conditions_sampled_reward}. \textbf{(a)} Stochastic gradient scale. \textbf{(b)} True gradient norm. \textbf{(c)} The ratio of true gradient norm over $1$ minus largest action probability. Color bars contain minimum and maximum values of corresponding quantities.}
\label{fig:growth_condition_visualization}
\end{center}
\vskip -0.2in
\end{figure*}

Since only \cref{lem:non_uniform_smoothness_softmax_special,lem:non_uniform_smoothness_special_two_iterations} are still insufficient to guarantee progress without learning rate control, we need to develop a more refined variance bound of the noise that was previously unknown for gradient bandit algorithms. 
We first consider an example to intuitively explain why a tighter bound on the noise variance might be possible.

{\bf Illustration.}
Consider \cref{fig:growth_condition_visualization}(a), which depicts the probability simplex containing all policies $\pi_{\theta}$ over $K = 3$ actions.
\cref{fig:growth_condition_visualization}(a) shows the scale of the stochastic gradient
$\expectation_{a \sim \pi_{\theta}(\cdot)} \Big\| \frac{d \pi_\theta^\top \hat{r}}{d \theta} \Big\|_2^2$, 
illustrating
that when $\pi_{\theta}$ is close to any corner of the simplex, the stochastic gradient scale becomes close to $0$.
This suggests that the $2 \, R_{\max}^2$ in \cref{prop:unbiased_stochastic_gradient_bounded_scale_sampled_reward} is quite loose and improvable. \cref{fig:growth_condition_visualization}(b) presents a similar visualization for the true gradient norm $\Big\| \frac{d \pi_\theta^\top r}{d \theta} \Big\|_2$, demonstrating a similar behavior to the stochastic gradient scale. 

We formalize this observation by proving that the stochastic gradient scale is controlled by the true gradient norm, significantly improving \cref{prop:unbiased_stochastic_gradient_bounded_scale_sampled_reward}.

\begin{lemma}[Strong growth condition; self-bounding noise property]
\label{lem:strong_growth_conditions_sampled_reward}
Using \cref{alg:gradient_bandit_algorithm_sampled_reward}, we have, for all $t \ge 1$,
\begin{align}
    \mathbb{E}_t{ \left[ \bigg\| \frac{d \pi_{\theta_t}^\top \hat{r}_t}{d \theta_t} \bigg\|_2^2 \right] } 
    \le \frac{8 \cdot R_{\max}^3 \cdot K^{3/2} }{ \Delta^2 } \cdot  \bigg\| \frac{d \pi_{\theta_t}^\top r}{d \theta_t} \bigg\|_2,
\end{align}
where $\Delta \coloneqq \min_{i \not= j}{ | r(i) - r(j) | } $.
\end{lemma}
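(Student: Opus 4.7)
The plan is to combine an exact evaluation of the stochastic gradient's second moment with a non-uniform \L{}ojasiewicz-type lower bound on the true gradient norm. Writing $p = \pi_{\theta_t}$ and $p_{\max} \coloneqq \max_{a \in [K]} p(a)$, the per-coordinate formula from \cref{alg:gradient_bandit_algorithm_sampled_reward} expresses the squared norm of the stochastic gradient as $R_t(a_t)^2 \cdot \bigl[(1 - p(a_t))^2 + \sum_{a \ne a_t} p(a)^2\bigr]$. Taking expectation first over $a_t \sim p$ and then over the reward with $R_t(a_t)^2 \le R_{\max}^2$, a short cancellation using $\sum_{a} p(a)(1-p(a))^2 + \sum_{a} p(a)^2(1-p(a)) = \sum_{a} p(a)(1-p(a))$ gives
\[
\mathbb{E}_t\!\left[\left\| \tfrac{d \pi_{\theta_t}^\top \hat{r}_t}{d \theta_t} \right\|_2^2\right] \le R_{\max}^2 \,(1 - \|p\|_2^2) \le 2\,R_{\max}^2\,(1 - p_{\max}),
\]
where the last step uses $\|p\|_2^2 \ge p_{\max}^2$. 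This already sharpens \cref{prop:unbiased_stochastic_gradient_bounded_scale_sampled_reward} by a factor of $(1-p_{\max})$ and is the source of the self-bounding behavior.

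Next I would translate $1 - p_{\max}$ into $\|d\pi_{\theta_t}^\top r / d\theta_t\|_2$ using \cref{assp:reward_no_ties}. Let $a_{\max} \coloneqq \arg\max_{a} p(a)$. For any $a \ne a_{\max}$, the no-ties assumption gives $|r(a) - r(a_{\max})| \ge \Delta$, so the triangle inequality yields $(r(a) - p^\top r)^2 + (r(a_{\max}) - p^\top r)^2 \ge \Delta^2 / 2$. Since $p(a) \le p(a_{\max})$, the true-gradient coordinates $[d \pi_\theta^\top r / d \theta](b) = p(b)(r(b) - p^\top r)$ at $b \in \{a, a_{\max}\}$ jointly contribute at least $p(a)^2 \Delta^2 / 2$ to $\|d \pi_{\theta_t}^\top r / d\theta_t\|_2^2$. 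Taking the maximum over $a \ne a_{\max}$ and using $1 - p_{\max} = \sum_{a \ne a_{\max}} p(a) \le (K-1) \max_{a \ne a_{\max}} p(a)$ produces
\[
\left\| \tfrac{d\pi_{\theta_t}^\top r}{d\theta_t} \right\|_2 \ge \tfrac{\Delta}{\sqrt{2}\,(K-1)}\,(1 - p_{\max}).
\]

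Chaining the two bounds gives $\mathbb{E}_t[\|\cdot\|_2^2] \le \tfrac{2\sqrt{2}\,R_{\max}^2\,(K-1)}{\Delta}\,\|d \pi_{\theta_t}^\top r / d\theta_t\|_2$. The stated constant $\tfrac{8\,R_{\max}^3\,K^{3/2}}{\Delta^2}$ then follows after absorbing $\Delta \le 2 R_{\max}$ and $K-1 \le K \le K^{3/2}$, which is a convenient looser form that suffices for the global-convergence analysis in \cref{sec:global_convergence_analysis}.

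The main obstacle I expect is the \L{}ojasiewicz step. Standard non-uniform \L{}ojasiewicz inequalities for softmax policies \citep{mei2020global,mei2021leveraging} bound the true gradient from below by $\pi(a^*)(V^* - \pi^\top r)$, and both factors can vanish along trajectories that concentrate on a sub-optimal action (where $\pi(a^*) \to 0$). The fix is to index the argument by $a_{\max}$ rather than by $a^*$ and to exploit the gap $\Delta$ symmetrically via the triangle inequality; this is precisely where \cref{assp:reward_no_ties} enters, and it handles both the ``$a_{\max}$ optimal'' and ``$a_{\max}$ sub-optimal'' cases uniformly without any case split.
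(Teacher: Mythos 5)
Your proof is correct, and the route differs from the paper's in the step that matters most. For the second-moment bound you evaluate the expectation exactly, using $\sum_a p(a)(1-p(a))^2+\sum_a p(a)^2(1-p(a))=1-\|p\|_2^2\le 2(1-p_{\max})$, which recovers the paper's \cref{eq:strong_growth_conditions_sampled_reward_intuition_2} with a constant $2$ in place of $4$; this part is essentially the same idea. The real divergence is in lower-bounding the true gradient norm by $1-p_{\max}$. The paper goes through the reward ``variance'' $\sum_a \pi(a)(r(a)-\pi^\top r)^2$: it first applies Jensen's inequality to get $\|d\pi_{\theta_t}^\top r/d\theta_t\|_2 \ge \frac{1}{2\sqrt{K}R_{\max}}\sum_a\pi(a)(r(a)-\pi^\top r)^2$ (costing a factor $\sqrt{K}R_{\max}$), then expands the variance as $\sum_{i<j}\pi(i)\pi(j)(r(i)-r(j))^2$ and keeps only the terms involving $k_t$ together with $\pi_{\theta_t}(k_t)\ge 1/K$ to get the $\Delta^2(1-\pi(k_t))/K$ lower bound. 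You instead pick out just the two coordinates $a$ and $a_{\max}$, use $|r(a)-r(a_{\max})|\ge\Delta$ and $(x-y)^2\le 2(x^2+y^2)$ to get $\|d\pi_{\theta_t}^\top r/d\theta_t\|_2\ge \frac{\Delta}{\sqrt{2}}\max_{a\ne a_{\max}}p(a)\ge\frac{\Delta}{\sqrt{2}(K-1)}(1-p_{\max})$. (Calling this the triangle inequality is a slight misnomer, but the inequality itself is right.) Your argument is more elementary — no Jensen step, no full pairwise decomposition — and yields the strictly sharper intermediate bound $\frac{2\sqrt{2}R_{\max}^2(K-1)}{\Delta}\|d\pi_{\theta_t}^\top r/d\theta_t\|_2$, improving the dependence from $K^{3/2}R_{\max}^3/\Delta^2$ to $K R_{\max}^2/\Delta$; your relaxation to the stated constant via $\Delta\le 2R_{\max}$ and $K-1\le K^{3/2}$ is also correct. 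What the paper's variance-based route buys is reusability: the inequalities \cref{eq:strong_growth_conditions_sampled_reward_intermedita_4,eq:strong_growth_conditions_sampled_reward_intermedita_6} are invoked again verbatim in the baseline version (\cref{lem:strong_growth_conditions_sampled_reward_baselines}), whereas your two-coordinate trick would need to be restated there, though it would work equally well.
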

The proof sketch of \cref{lem:strong_growth_conditions_sampled_reward} is as follows. For any $t \ge 1$, let $k_t$ denote the action with the largest probability, i.e.,
\begin{align}
\label{eq:strong_growth_conditions_sampled_reward_intuition_1}
    k_t \coloneqq \argmax_{a \in [K]}{ \pi_{\theta_t}(a) }.
\end{align}
Note that $1 - \pi_{\theta_t}(k_t)$ characterizes how close $\pi_{\theta_t}$ is to any corner of the probability simplex. 
We first prove that,
\begin{align}
\label{eq:strong_growth_conditions_sampled_reward_intuition_2}
    \mathbb{E}_t{ \left[ \bigg\| \frac{d \pi_{\theta_t}^\top \hat{r}_t }{d \theta_t} \bigg\|_2^2 \right] } &\le 4 \cdot R_{\max}^2 \cdot \left( 1 - \pi_{\theta_t}(k_t) \right),
\end{align}
which formalizes the observation in \cref{fig:growth_condition_visualization}(a). 
Additionally, \cref{fig:growth_condition_visualization}(c) shows that $\Big\| \frac{d \pi_{\theta_t}^\top r}{d \theta_t} \Big\|_2  \big/ \left( 1 - \pi_{\theta_t}(k_t) \right)$ is larger than about $0.2044$, which suggests that the true gradient norm also characterizes a distance of $\pi_{\theta}$ from any corner of probability simplex since it has a ``variance-like'' structure (\cref{eq:softmax_policy_gradient_norm_squared}), which is formalized by proving that,
\begin{align}
\label{eq:strong_growth_conditions_sampled_reward_intuition_3}
     1 - \pi_{\theta_t}(k_t) \le \frac{2 \cdot R_{\max} \cdot K^{3/2} }{ \Delta^2 } \cdot
    \bigg\| \frac{d \pi_{\theta_t}^\top r}{d \theta_t} \bigg\|_2.
\end{align}
Combining \cref{eq:strong_growth_conditions_sampled_reward_intuition_2,eq:strong_growth_conditions_sampled_reward_intuition_3} allows one to establish \cref{lem:strong_growth_conditions_sampled_reward}, verifying the intuitive observation in \cref{fig:growth_condition_visualization}.

From this explanation, whenever the true PG norm $\Big\| \frac{d \pi_{\theta}^\top r}{d {\theta}} \Big\|_2$ is small and $\pi_{\theta}$ is close to a one-hot policy, the stochastic PG norm $\Big\| \frac{d \pi_{\theta}^\top \hat{r}}{d \theta} \Big\|_2$ in \cref{eq:stochastic_gradient_ascent_sampled_reward} will also be small. 
A deeper explanation is that the softmax Jacobian $  \diagonalmatrix{(\pi_{\theta})} - \pi_{\theta} \pi_{\theta}^\top $ is involved in the stochastic PG in~\cref{eq:stochastic_gradient_form}, which cancels and annihilates the unbounded noise in reward estimator $\hat{r}$ arising from the use of importance sampling.

\begin{remark}
The ``strong growth condition'' was first proposed by \citet{schmidt2013fast} and later found to be satisfied in supervised learning with over-parameterized neural networks (NNs) \citep{allen2019convergence}. There, given a dataset $\gD \coloneqq \{ (x_1, y_1), (x_2, y_2), \cdots (x_N, y_N) \}$, the goal is to minimize a composite loss function,
\begin{align}
    \min_{w}{f(w)} = \min_{w} \sum_{i \in [N]}{ f_{i}( w ) }.
\end{align}
Since over-parameterized NNs fit all the data points, i.e., $f(w) = 0$ for some $w$, each individual loss is also $f_i(w) = 0$ since $f_i(w) \ge 0$ for all $i \in [N]$ (e.g., squared loss and cross entropy). This guarantees that when the true gradient $\nabla f(w) = \rvzero$, the stochastic gradient $\nabla f_i(w) = \rvzero$ for all $i \in [N]$. Hence the strong growth conditions are satisfied, and stochastic gradient descent (SGD) attains the same convergence speed as gradient descent (GD) with an over-parameterized NN \citep{allen2019convergence}.
\end{remark}

\begin{remark}
In probability and statistics, the property of a variance bounded by the expectation is also called a self-bounding property, which can be used to get strong variance bounds \citep{mcdiarmid2006concentration,boucheron2009concentration}.
\end{remark}

\cref{lem:strong_growth_conditions_sampled_reward}
proves that such strong growth conditions are also satisfied in the stochastic gradient bandit algorithm, but for a different reason. For over-parameterized NNs, since the model fits every data point, zero gradient implies that the stochastic gradient is also zero. Here, in \cref{lem:strong_growth_conditions_sampled_reward}, the strong growth condition alternatively arises because of the landscape; that is, the presence of the softmax Jacobian in the stochastic gradient update annihilates the sampling noise and leads to the strong growth condition being satisfied.

\subsection{No Learning Rate Decay}

Finally, from
\cref{lem:non_uniform_smoothness_special_two_iterations,lem:strong_growth_conditions_sampled_reward} we reach the result that expected progress can be guaranteed with a \emph{constant} learning rate for the stochastic gradient bandit update.
\begin{lemma}[Constant learning rates] 
\label{lem:constant_learning_rates_expected_progress_sampled_reward}
Using \cref{alg:gradient_bandit_algorithm_sampled_reward} with $\eta = \frac{\Delta^2}{40 \cdot K^{3/2} \cdot R_{\max}^3 }$, we have, for all $t \ge 1$,
\begin{align*}
    \pi_{\theta_t}^\top r - \EEt{ \pi_{\theta_{t+1}}^\top r } \le - \frac{\Delta^2}{80 \cdot K^{3/2} \cdot R_{\max}^3 } \cdot \bigg\| \frac{d \pi_{\theta_t}^\top r}{d \theta_t} \bigg\|_2^2.
\end{align*}
\end{lemma}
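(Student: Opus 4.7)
The plan is to combine the non-uniform smoothness between iterates (\cref{lem:non_uniform_smoothness_special_two_iterations}) with the self-bounding noise property (\cref{lem:strong_growth_conditions_sampled_reward}) and the unbiased gradient estimate (\cref{prop:unbiased_stochastic_gradient_bounded_scale_sampled_reward}) to show that, conditionally on the history, expected progress dominates the noise contribution when $\eta$ is the constant specified. The choice of $\eta$ in the lemma is easily checked to satisfy the precondition $\eta \in \bigl(0, \tfrac{2}{9 R_{\max}}\bigr)$ required by \cref{lem:non_uniform_smoothness_special_two_iterations}, using $\Delta \le 2 R_{\max}$ and $K \ge 2$.

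First, I would apply the Bregman divergence bound from \cref{lem:non_uniform_smoothness_special_two_iterations} to the pair $(\theta_t, \theta_{t+1})$, which, after substituting the update rule $\theta_{t+1} - \theta_t = \eta \cdot \frac{d \pi_{\theta_t}^\top \hat r_t}{d \theta_t}$ and using $\pi_{\theta_t}^\top r - \pi_{\theta_{t+1}}^\top r \le - \langle \frac{d \pi_{\theta_t}^\top r}{d \theta_t}, \theta_{t+1}-\theta_t\rangle + D(\theta_{t+1},\theta_t)$, produces
\begin{align*}
\pi_{\theta_t}^\top r - \pi_{\theta_{t+1}}^\top r \le - \eta \cdot \Big\langle \tfrac{d \pi_{\theta_t}^\top r}{d \theta_t}, \tfrac{d \pi_{\theta_t}^\top \hat r_t}{d \theta_t}\Big\rangle + \tfrac{\beta(\theta_t)\eta^2}{2} \cdot \Big\|\tfrac{d \pi_{\theta_t}^\top \hat r_t}{d \theta_t}\Big\|_2^2 .
\end{align*}
Since $\theta_t$ is measurable with respect to the past, $\beta(\theta_t)$ pulls out of $\EEt{\cdot}$. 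Taking this conditional expectation and using the unbiasedness from \cref{prop:unbiased_stochastic_gradient_bounded_scale_sampled_reward} on the linear term and the strong growth condition from \cref{lem:strong_growth_conditions_sampled_reward} on the quadratic term yields
\begin{align*}
\pi_{\theta_t}^\top r - \EEt{\pi_{\theta_{t+1}}^\top r} \le -\eta \Big\|\tfrac{d \pi_{\theta_t}^\top r}{d\theta_t}\Big\|_2^2 + \tfrac{\beta(\theta_t)\eta^2}{2} \cdot \tfrac{8 R_{\max}^3 K^{3/2}}{\Delta^2} \Big\|\tfrac{d \pi_{\theta_t}^\top r}{d\theta_t}\Big\|_2 .
\end{align*}
Substituting $\beta(\theta_t) = \frac{6}{2 - 9 R_{\max}\eta} \Big\|\frac{d \pi_{\theta_t}^\top r}{d \theta_t}\Big\|_2$ collects everything into a single coefficient times $\big\|\frac{d\pi_{\theta_t}^\top r}{d\theta_t}\big\|_2^2$:
\begin{align*}
\pi_{\theta_t}^\top r - \EEt{\pi_{\theta_{t+1}}^\top r} \le \Big( - \eta + \tfrac{24 R_{\max}^3 K^{3/2}\,\eta^2}{(2 - 9 R_{\max}\eta)\,\Delta^2}\Big) \Big\|\tfrac{d\pi_{\theta_t}^\top r}{d \theta_t}\Big\|_2^2.
\end{align*}

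The remaining step is the arithmetic verification, which I expect to be the only mildly delicate part. I would show the bracket above is at most $-\eta/2$, which is equivalent to $48 R_{\max}^3 K^{3/2}\eta \le (2 - 9 R_{\max}\eta)\Delta^2$. Plugging in $\eta = \tfrac{\Delta^2}{40 K^{3/2} R_{\max}^3}$, the left side equals $\tfrac{6}{5}\Delta^2$, so the inequality reduces to $2 - 9 R_{\max}\eta \ge \tfrac{6}{5}$, i.e.\ $R_{\max}\eta \le \tfrac{4}{45}$. Using $\Delta \le 2R_{\max}$ and $K \ge 2$, one checks $R_{\max}\eta \le \tfrac{1}{10 K^{3/2}} \le \tfrac{1}{10 \cdot 2\sqrt{2}} < \tfrac{4}{45}$, so the constant is indeed admissible. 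Combining this with the previous display gives exactly the conclusion
\begin{align*}
\pi_{\theta_t}^\top r - \EEt{\pi_{\theta_{t+1}}^\top r} \le -\tfrac{\eta}{2}\Big\|\tfrac{d \pi_{\theta_t}^\top r}{d\theta_t}\Big\|_2^2 = -\tfrac{\Delta^2}{80 K^{3/2} R_{\max}^3}\Big\|\tfrac{d \pi_{\theta_t}^\top r}{d\theta_t}\Big\|_2^2.
\end{align*}
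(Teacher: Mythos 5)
Your proposal is correct and follows essentially the same route as the paper's proof: check the learning-rate precondition, apply the non-uniform smoothness between iterates together with unbiasedness and the strong growth condition under the conditional expectation, and then verify the arithmetic for the specific constant $\eta$. The only cosmetic difference is that the paper first bounds $\tfrac{3}{2-9R_{\max}\eta}\le \tfrac{5}{2}$ and then collects terms, whereas you keep $\beta(\theta_t)$ symbolic until the final inequality; both yield the identical coefficient $-\eta/2 = -\tfrac{\Delta^2}{80\,K^{3/2}R_{\max}^3}$.
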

\cref{lem:non_uniform_smoothness_special_two_iterations} indicates that $\{ \pi_{\theta_t}^\top r \}_{t \ge 1}$ is a sub-martingale. 
Since the reward is bounded $r \in [ - R_{\max}, R_{\max } ]$, by Doob's super-martingale convergence theorem we have that, almost surely, $\pi_{\theta_t}^\top r \to c \in [ - R_{\max}, R_{\max } ]$ as $t \to \infty$.
\begin{corollary}
\label{cor:almost_sure_convergence_gradient_bandit_algorithms_sampled_reward}
   Using \cref{alg:gradient_bandit_algorithm_sampled_reward}, we have, the sequence $\{ \pi_{\theta_t}^\top r \}_{t\ge 1}$ converges w.p. $1$.
\end{corollary}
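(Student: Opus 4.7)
The plan is to set up the natural filtration $\{\mathcal{F}_t\}_{t\ge 1}$ generated by the history $(a_1, R_1(a_1), \ldots, a_{t-1}, R_{t-1}(a_{t-1}))$, so that $\theta_t$ and $\pi_{\theta_t}$ are $\mathcal{F}_t$-measurable, and then invoke Doob's sub-martingale convergence theorem on the real-valued sequence $X_t := \pi_{\theta_t}^\top r$.

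The key mechanical step is to rearrange \cref{lem:constant_learning_rates_expected_progress_sampled_reward} as
\begin{equation*}
\mathbb{E}_t\!\left[X_{t+1}\right] - X_t \;\ge\; \frac{\Delta^2}{80\cdot K^{3/2}\cdot R_{\max}^3}\cdot \bigg\| \frac{d \pi_{\theta_t}^\top r}{d \theta_t} \bigg\|_2^2 \;\ge\; 0,
\end{equation*}
which shows that $\{X_t\}_{t\ge 1}$ is a sub-martingale adapted to $\{\mathcal{F}_t\}$. Because $r \in [-R_{\max}, R_{\max}]^K$ and $\pi_{\theta_t}$ is a probability distribution over $[K]$, Hölder's inequality gives $|X_t|\le R_{\max}$ deterministically, so in particular $\sup_{t\ge 1} \mathbb{E}[(X_t)^+] \le R_{\max} < \infty$.

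Doob's sub-martingale convergence theorem (see, e.g., standard references on martingale theory) then yields the existence of an $\mathcal{F}_\infty$-measurable random variable $X_\infty$ with $|X_\infty|\le R_{\max}$ almost surely, such that $X_t \to X_\infty$ almost surely as $t\to\infty$. This is exactly the statement of \cref{cor:almost_sure_convergence_gradient_bandit_algorithms_sampled_reward}, with the limit $c$ in the preceding text identified with $X_\infty$. Since boundedness also gives uniform integrability, one could additionally conclude $L^1$-convergence, but this is not needed for the corollary.

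There is no real obstacle here: once \cref{lem:constant_learning_rates_expected_progress_sampled_reward} is available, the sub-martingale property is immediate and the bounded-range hypothesis is trivial. The only care required is in specifying the filtration carefully so that all measurability statements hold, and in noting that the limit $X_\infty$ is a random variable, not a deterministic constant. The substantive work already lives in \cref{lem:non_uniform_smoothness_special_two_iterations,lem:strong_growth_conditions_sampled_reward}, which combined to produce the expected-progress inequality of \cref{lem:constant_learning_rates_expected_progress_sampled_reward}; identifying the specific value of $X_\infty$ (in particular, showing it equals $\max_a r(a)$) is left to the global convergence theorem, not to this corollary.
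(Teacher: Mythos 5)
Your proof is correct and follows essentially the same route as the paper: the paper applies Doob's super-martingale convergence theorem (\cref{thm:smc}) to $Y_t = r(a^*) - \pi_{\theta_t}^\top r$, which is just the negated-and-shifted version of your bounded sub-martingale $X_t = \pi_{\theta_t}^\top r$, with the same filtration and the same appeal to \cref{lem:constant_learning_rates_expected_progress_sampled_reward} for the monotonicity in conditional expectation.
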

Therefore, from \cref{lem:constant_learning_rates_expected_progress_sampled_reward,cor:almost_sure_convergence_gradient_bandit_algorithms_sampled_reward} it follows that $\Big\| \frac{d \pi_{\theta_t}^\top r}{d \theta_t} \Big\|_2 \to 0$ as $t \to 0$ almost surely, which implies that convergence to a stationary point is achieved without a decaying learning rate \citep{robbins1951stochastic}. From \cref{lem:constant_learning_rates_expected_progress_sampled_reward}, using telescoping we immediately have,
\begin{align}
\label{eq:almost_sure_convergence_gradient_bandit_algorithms_sampled_reward_intuition_1}
    \frac{1}{T} \cdot \sum\limits_{1 \le t \le T}{ \expectation{ \left[ \bigg\| \frac{d \pi_{\theta_t}^\top r}{d \theta_t} \bigg\|_2^2 \right]} } \le \frac{ \EE{ \pi_{\theta_{T+1}}^\top r } - \EE{ \pi_{\theta_1}^\top r} }{ C \cdot T },
\end{align}
where $C \coloneqq \frac{\Delta^2}{80 \cdot K^{3/2} \cdot R_{\max}^3 }$. Comparing to \cref{sec:standard_stochastic_gradient_analysis}, \cref{eq:almost_sure_convergence_gradient_bandit_algorithms_sampled_reward_intuition_1} is a stronger result of averaged gradient norm approaches zero, in terms of faster $O(1/T)$ rate and constant learning rate. An interesting observation is that the average gradient convergence rate is independent with the initialization, which is different with the global convergence results in later sections.
The key reason behind this outcome is that \cref{lem:non_uniform_smoothness_softmax_special,lem:strong_growth_conditions_sampled_reward} establish that the ``noise'' in \cref{eq:smoothness_progress_special} decays on the same order as the ``progress'' $\Big\| \frac{d \pi_{\theta_t}^\top r}{d \theta_t} \Big\|_2^2$, so that a constant learning rate is sufficient for the ``progress'' term to overcome the ``noise'' term (\cref{lem:constant_learning_rates_expected_progress_sampled_reward}).

\section{New Global Convergence Results}
\label{sec:global_convergence_analysis}

Given the refined stochastic analysis from \cref{sec:vanishing_noise} we are now ready to establish new global convergence results for the stochastic gradient bandit in \cref{alg:gradient_bandit_algorithm_sampled_reward}.

\subsection{Asymptotic Global Convergence}

First, note that the true gradient norm takes the following ``variance-like'' expression,
\begin{align}
\label{eq:softmax_policy_gradient_norm_squared}
    \bigg\| \frac{d \pi_{\theta}^\top r}{d \theta} \bigg\|_2^2 = \sum_{a \in [K]}{ \pi_{\theta}(a)^2 \cdot \left( r(a) - \pi_{\theta}^\top r \right)^2 }.    
\end{align}
According to $\Big\| \frac{d \pi_{\theta_t}^\top r}{d \theta_t} \Big\|_2 \to 0$ as $t \to 0$, we have that $\pi_{\theta_t}$ approaches a one-hot policy, i.e., $\pi_{\theta_t}(i) \to 1$ for some $i \in [K]$ as $t \to \infty$. Asymptotic global convergence is then proved by constructing contradictions against the assumption that the algorithm converges to a sub-optimal one-hot policy.
 
\begin{theorem}[Asymptotic global convergence]
\label{thm:asymp_global_converg_gradient_bandit_sampled_reward}
Using 
\cref{alg:gradient_bandit_algorithm_sampled_reward}, we have, almost surely, 
\begin{align}
    \pi_{\theta_t}(a^*) \to 1, \text{ as } t \to \infty,
\end{align}
which implies that $\inf_{t \ge 1}{ \pi_{\theta_t}(a^*) } > 0$.
\end{theorem}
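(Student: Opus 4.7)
The plan is to first reduce the statement to proving convergence to some random one-hot policy, and then rule out convergence to any suboptimal arm via a martingale drift contradiction. For the reduction, summing the one-step progress inequality of \cref{lem:constant_learning_rates_expected_progress_sampled_reward}, taking expectations, and using $|\pi_{\theta}^\top r|\le R_{\max}$ together with monotone convergence gives $\sum_{t\ge 1}\bigl\|\tfrac{d\pi_{\theta_t}^\top r}{d\theta_t}\bigr\|_2^2<\infty$ almost surely; in particular the gradient norm vanishes almost surely. Combined with \cref{cor:almost_sure_convergence_gradient_bandit_algorithms_sampled_reward}, which yields $\pi_{\theta_t}^\top r\to c$ a.s.\ for some random $c$, the variance-like identity \cref{eq:softmax_policy_gradient_norm_squared} forces $\pi_{\theta_t}(a)\cdot|r(a)-c|\to 0$ a.s.\ for every $a\in[K]$. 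Under \cref{assp:reward_no_ties}, at most one $i^*\in[K]$ satisfies $r(i^*)=c$, and since the $\pi_{\theta_t}(a)$'s sum to $1$, we must have $\pi_{\theta_t}\to e_{i^*}$ a.s.\ for a random index $i^*\in[K]$.

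Next, assuming for contradiction that $\mathbb{P}(E_i)>0$ for some suboptimal $i\ne a^*$, with $E_i:=\{\pi_{\theta_t}\to e_i\}$, I would track the log-ratio $Z_t:=\theta_t(a^*)-\theta_t(i)=\log(\pi_{\theta_t}(a^*)/\pi_{\theta_t}(i))$, which satisfies $Z_t\to-\infty$ on $E_i$. A direct computation from \cref{eq:stochastic_gradient_form} yields the conditional drift
\begin{align*}
\mathbb{E}_t[Z_{t+1}-Z_t]=\eta\bigl[\pi_{\theta_t}(a^*)(r(a^*)-\pi_{\theta_t}^\top r)-\pi_{\theta_t}(i)(r(i)-\pi_{\theta_t}^\top r)\bigr].
\end{align*}
On $E_i$ the limits $\pi_{\theta_t}^\top r\to r(i)$ and $\pi_{\theta_t}(i)\to 1$ make this drift asymptotically at least $\tfrac{\eta}{2}(r(a^*)-r(i))\,\pi_{\theta_t}(a^*)>0$. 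Moreover, each single-step increment is bounded by $2\eta R_{\max}$ in magnitude and has conditional second moment that is self-bounded by $O(\eta^2(1-\pi_{\theta_t}(i)))$, via the same softmax-Jacobian annihilation behind \cref{lem:strong_growth_conditions_sampled_reward}.

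To close the contradiction, I split on the sum $S:=\sum_{t\ge 1}\pi_{\theta_t}(a^*)$. If $S<\infty$ on $E_i$, then the cumulative drift is finite and the martingale part of $Z_t$ has summable conditional variance, so Doob's $L^2$-martingale convergence theorem forces $Z_t$ to converge to a finite limit a.s.\ on $E_i$, contradicting $Z_t\to-\infty$. If $S=\infty$ on $E_i$ --- the ``weak exploration'' regime of the abstract, where by conditional Borel--Cantelli $a^*$ is sampled infinitely often --- the cumulative drift diverges to $+\infty$, and the martingale concentration inequality \cref{thm:conc_new} applied to the centered increments of $Z_t$, together with their self-bounded quadratic variation, shows that the martingale fluctuations are $o(\text{drift})$, so $Z_t\to+\infty$ a.s.\ on $E_i$, again contradicting $Z_t\to-\infty$. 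Either way $\mathbb{P}(E_i)=0$ for every suboptimal $i$, yielding $\pi_{\theta_t}(a^*)\to 1$ a.s.; the bound $\inf_{t\ge 1}\pi_{\theta_t}(a^*)>0$ then follows because the softmax is strictly positive at every finite $t$ and the limit is $1$. The hardest part will be verifying that \cref{thm:conc_new} applies with a fluctuation bound genuinely dominated by the drift: per-step increments $|Z_{t+1}-Z_t|$ can reach $\eta R_{\max}$ when $a_t\in\{a^*,i\}$, so uniform bounded-difference reasoning is far too crude, and one must really exploit the self-bounding conditional variance in the spirit of \cref{lem:strong_growth_conditions_sampled_reward} to make the concentration beat the drift on the $S=\infty$ branch.
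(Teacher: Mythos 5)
Your reduction to convergence toward a (random) one-hot policy is correct, and in fact slightly more direct than the paper's: you obtain $\sum_t \big\| d\pi_{\theta_t}^\top r/d\theta_t \big\|_2^2<\infty$ a.s.\ by telescoping \cref{lem:constant_learning_rates_expected_progress_sampled_reward} and Tonelli, whereas the paper routes through a quasi-martingale result (Corollary 3 of \citet{mei2022role}); both then use \cref{eq:softmax_policy_gradient_norm_squared} and \cref{assp:reward_no_ties} in the same way. The architecture of your second half (contradiction, drift-plus-noise decomposition, case split on whether $\sum_t\pi_{\theta_t}(a^*)$ is finite or infinite, \cref{thm:conc_new} on the infinite branch) also mirrors the paper, which splits on whether good actions are sampled finitely or infinitely often and identifies that event with $\{\sum_t\pi_{\theta_t}(a^+)=\infty\}$ via the extended Borel--Cantelli lemma.

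The genuine gap is your drift bound for $Z_t=\theta_t(a^*)-\theta_t(i)$. You claim the conditional drift is eventually at least $\tfrac{\eta}{2}(r(a^*)-r(i))\,\pi_{\theta_t}(a^*)$, but the term $-\eta\,\pi_{\theta_t}(i)\big(r(i)-\pi_{\theta_t}^\top r\big)$ contains the contribution $-\eta\,\pi_{\theta_t}(i)\sum_{a^-\in\gA^-(i)}\pi_{\theta_t}(a^-)\big(r(i)-r(a^-)\big)$ from actions strictly worse than $i$, which is negative and of order $\sum_{a^-}\pi_{\theta_t}(a^-)$ --- not a priori $o(\pi_{\theta_t}(a^*))$. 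Concretely, for $K=3$, $r=(1,\,0.5,\,0)$, $i=2$, and $\pi_{\theta_t}=(\epsilon^2,\,1-\epsilon-\epsilon^2,\,\epsilon)$, the drift of $Z_t$ is approximately $\eta(0.5\epsilon^2-0.5\epsilon)<0$, so the claimed lower bound fails. The same uncontrolled $\gA^-(i)$ mass breaks the ``summable conditional variance'' step on your $S<\infty$ branch (the conditional variance of the $\theta_t(i)$ increment is of order $1-\pi_{\theta_t}(i)$, which need not be summable when some $\sum_t\pi_{\theta_t}(a^-)=\infty$) and the ``fluctuations are $o(\text{drift})$'' step on the $S=\infty$ branch. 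The paper avoids this by never forming the log-ratio: it tracks $\theta_t(a^+)$ for good actions alone, whose drift $\eta\,\pi_{\theta_t}(a^+)\big(r(a^+)-\pi_{\theta_t}^\top r\big)$ is unambiguously eventually positive on the bad event, proves the two-sided bounds $\inf_t\theta_t(a^+)\ge c_1>-\infty$ and $\sup_t\theta_t(a^-)\le c_2<\infty$ via \cref{thm:conc_new}, and only then uses $e^{\theta_t(a^-)}\le e^{c_2-c_1}e^{\theta_t(a^+)}$ to dominate the bad actions' probabilities by the good ones'. You would need to establish an analogue of these bounds before your drift and variance estimates become valid; as written the contradiction does not go through for $K\ge 3$ when $\gA^-(i)\ne\emptyset$.
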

It is highly challenging to prove almost surely global convergence because \textbf{(i)} the iteration $\{ \theta_t \}_{t \ge 1}$ is a stochastic process, which it different with the true gradient settings \citep{agarwal2021theory}, and \textbf{(ii)} the iteration $\theta_t \in \sR^K$ is unbounded, which makes Doob's super-martingale convergence results not applicable, unlike \cref{cor:almost_sure_convergence_gradient_bandit_algorithms_sampled_reward}. The strategy and insights of \cref{thm:asymp_global_converg_gradient_bandit_sampled_reward} are as follows. According to \cref{prop:unbiased_stochastic_gradient_bounded_scale_sampled_reward}, we have, for all $a \in [K]$,
\begin{align}
\label{eq:asymp_global_converg_gradient_bandit_sampled_reward_intuition_1}
    \EEt{ \theta_{t+1}(a) } = \theta_t(a) + \eta \cdot \pi_{\theta_t}(a) \cdot \left( r(a) - \pi_{\theta_t}^\top r \right).
\end{align}
Now we suppose that using \cref{alg:gradient_bandit_algorithm_sampled_reward}, there exists a sub-optimal action $i \in [K]$, $r(i) < r(a^*)$, such that,
\begin{align}
\label{eq:asymp_global_converg_gradient_bandit_sampled_reward_intuition_2}
    \pi_{\theta_t}(i) \to 1, \text{ as } t \to \infty,
\end{align}
which implies that,
\begin{align}
\label{eq:asymp_global_converg_gradient_bandit_sampled_reward_intuition_3}
    \pi_{\theta_t}^\top r \to r(i), \text{ as } t \to \infty.
\end{align}
Since $r(i) < r(a^*)$, there exists a ``good'' action set,
\begin{align}
\label{eq:asymp_global_converg_gradient_bandit_sampled_reward_intuition_4}
    \gA^+(i) &\coloneqq \left\{ a^+ \in [K]: r(a^+) > r(i) \right\}.
\end{align}
By \cref{eq:asymp_global_converg_gradient_bandit_sampled_reward_intuition_1,eq:asymp_global_converg_gradient_bandit_sampled_reward_intuition_3,eq:asymp_global_converg_gradient_bandit_sampled_reward_intuition_4}, for all large enough $t \ge 1$,
\begin{align}
\label{eq:asymp_global_converg_gradient_bandit_sampled_reward_intuition_5}
    \EEt{ \theta_{t+1}(a^+) } \ge \theta_t(a^+),
\end{align}
which means a ``good'' action's parameter $\{ \theta_t(a^+) \}_{t \ge 1}$ is a sub-martingale. The major part of the proofs are devoted to the following key results. We have, almost surely,
\begin{align}
\label{eq:asymp_global_converg_gradient_bandit_sampled_reward_intuition_6a}
    \inf_{t \ge 1}{ \theta_t(a^+) } &\ge c_1 > - \infty, \text{ and} \\
\label{eq:asymp_global_converg_gradient_bandit_sampled_reward_intuition_6b}
    \sup_{t \ge 1}{ \theta_t(i) } &\le c_2 < \infty.
\end{align}
\cref{eq:asymp_global_converg_gradient_bandit_sampled_reward_intuition_6a} is non-trivial since an unbounded sub-martingale is not necessarily lower bounded and could have positive probability of approaching negative infinity\footnote{Consider a random walk, $Y_{t+1} = Y_t + Z_t$, where $Z_t = \pm 1$ with equal chance of $1/2$. We have $\EEt{Y_{t+1}} \ge Y_t$. However, we also have $\liminf_{t \to \infty}{Y_t} = - \infty$, and with positive probability, $\inf_{t \ge 1}{Y_t} $ is not lower bounded.}, while \cref{eq:asymp_global_converg_gradient_bandit_sampled_reward_intuition_6b} is non-trivial since the behavior of $\theta_t(i)$ depends on different cases of how many times ``good'' actions are sampled as $t \to \infty$. With the above results, we have,
{\small
\begin{align*}
    \pi_{\theta_t}(i) 
    &< \frac{ e^{ \theta_t(i) } }{ e^{ \theta_t(i) } + \sum_{a^+ \in \gA^+(i)}{ e^{ \theta_t(a^+) } } } \qquad \big( e^{ \theta_t(a^-) } > 0 \big) \\
    &\le \frac{ e^{ \theta_t(i) } }{ e^{ \theta_t(i) } + e^{c_1} } \qquad \left( \text{by \cref{eq:asymp_global_converg_gradient_bandit_sampled_reward_intuition_6a}} \right) \\
    &\le \frac{ e^{c_2} }{ e^{c_2}  + e^{c_1} } \qquad \left( \text{by \cref{eq:asymp_global_converg_gradient_bandit_sampled_reward_intuition_6b}} \right) \\
    &\not\to 1,
\end{align*}
}%
which is a contradiction with the assumption of \cref{eq:asymp_global_converg_gradient_bandit_sampled_reward_intuition_2}. Therefore, the asymptotically convergent one-hot policy has to satisfy $r(i) = r(a^*)$, proving \cref{thm:asymp_global_converg_gradient_bandit_sampled_reward}.
The detailed proof is provided in \cref{pf:thm_asymp_global_converg_gradient_bandit_sampled_reward}.

\begin{remark}
\label{rmk:reward_no_ties_2}
As mentioned in \cref{rmk:reward_no_ties_1}, the arguments above \cref{thm:asymp_global_converg_gradient_bandit_sampled_reward}, i.e.,  $\pi_{\theta_t}$ approaches a one-hot policy, is based on \cref{assp:reward_no_ties}. With this result, \cref{thm:asymp_global_converg_gradient_bandit_sampled_reward} proves asymptotic global convergence by contradiction with the assumption of \cref{eq:asymp_global_converg_gradient_bandit_sampled_reward_intuition_2}. In general, without \cref{assp:reward_no_ties}, \cref{eq:softmax_policy_gradient_norm_squared} approaches zero can only imply $\pi_{\theta_t}$ approaches a ``generalized one-hot policy'' (rather than a strict one-hot policy). The definition of generalized one-hot policies can be found in \cref{eq:asymp_global_converg_gradient_bandit_sampled_reward_intermediate_1_b}.
\end{remark}

\begin{remark}
\label{rmk:reward_no_ties_3}
It is true that \cref{eq:softmax_policy_gradient_norm_squared} approaches zero is not enough for showing $\pi_{\theta_t}$ approaches a one-hot policy. However, \cref{alg:gradient_bandit_algorithm_sampled_reward} is special that it is always making one action's probability dominate others' when there are ties. Consider $r \in \sR^K$ with $r(1) = r(2)$. If $\pi_{\theta_t}(1) > \pi_{\theta_t}(2)$, then using the expected softmax PG update $\theta_{t+1}(a) \gets \theta_t(a) + \eta \cdot \pi_{\theta_t}(a) \cdot (r(a) - \pi_{\theta_t}^\top r )$, we have,
{\small
\begin{align*}
    \frac{\pi_{\theta_{t+1}}(1)}{\pi_{\theta_{t+1}}(2)} = \frac{\pi_{\theta_{t}}(1)}{\pi_{\theta_{t}}(2)} \cdot e^{\eta \cdot (\pi_{\theta_t}(1) - \pi_{\theta_t}(2)) \cdot ( r(1) - \pi_{\theta_t}^\top r) } > \frac{\pi_{\theta_{t}}(1)}{\pi_{\theta_{t}}(2)},
\end{align*}
}%
which means that after one update $\pi_{\theta_{t+1}}(1)$ will be even larger than $\pi_{\theta_{t+1}}(2)$. Therefore, we have,
\begin{align}
    \theta_{t+1}(1) - \theta_{t+1}(2) > \theta_{t}(1) - \theta_{t}(2) + C/t,
\end{align}
for some $C > 0$, which implies that,
\begin{align}
    \lim_{t \to \infty}{ \frac{\pi_{\theta_{t}}(1)}{\pi_{\theta_{t}}(2)} } = \lim_{t \to \infty}{ e^{\theta_{t}(1) - \theta_{t}(2)} } = \infty,
\end{align}
i.e., $\pi_{\theta_t}(1) \to 1$ as $t \to \infty$. The above arguments illustrate the ``self-reinforcing'' nature of \cref{alg:gradient_bandit_algorithm_sampled_reward}, such that whenever two (or more) actions have the same mean reward, the update will make only one one of their probabilities larger and larger, until one eventually dominates the others as $t \to \infty$. Generalizing the arguments to stochastic updates will remove \cref{assp:reward_no_ties} in the proofs for \cref{thm:asymp_global_converg_gradient_bandit_sampled_reward}.
\end{remark}

\subsection{Convergence Rate}

Given \cref{thm:asymp_global_converg_gradient_bandit_sampled_reward}, a convergence rate result can then be proved using the following inequality \citep{mei2020global}.
\begin{lemma}[Non-uniform \L{}ojasiewicz (N\L{}), Lemma 3 of \citet{mei2020global}]
\label{lem:non_uniform_lojasiewicz_softmax_special}
Assume $r$ has a unique maximizing action $a^*$. Let $\pi^* = \argmax_{\pi \in \Delta}{ \pi^\top r}$. Then, 
\begin{align}
    \bigg\| \frac{d \pi_\theta^\top r}{d \theta} \bigg\|_2 \ge \pi_\theta(a^*) \cdot ( \pi^* - \pi_\theta )^\top r\,.
\end{align}
\end{lemma}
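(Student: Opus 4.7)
The plan is to leverage the explicit ``variance-like'' form of the softmax policy gradient already recorded in \cref{eq:softmax_policy_gradient_norm_squared}, namely
\[
\bigg\| \frac{d \pi_{\theta}^\top r}{d \theta} \bigg\|_2^2 = \sum_{a \in [K]} \pi_\theta(a)^2 \cdot \left( r(a) - \pi_\theta^\top r \right)^2,
\]
and simply retain the single summand corresponding to the unique maximizer $a^*$. Since every term in this sum is non-negative, dropping the other $K-1$ summands yields
\[
\bigg\| \frac{d \pi_{\theta}^\top r}{d \theta} \bigg\|_2^2 \ \ge\ \pi_\theta(a^*)^2 \cdot \left( r(a^*) - \pi_\theta^\top r \right)^2,
\]
and taking square roots on both sides then produces $\pi_\theta(a^*) \cdot |r(a^*) - \pi_\theta^\top r|$ on the right-hand side.

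Next, I would translate this back into the claimed form. Because $a^*$ is assumed to be the unique maximizer of $r$, the optimal simplex element is the one-hot vector $\pi^* = e_{a^*}$, so $(\pi^* - \pi_\theta)^\top r = r(a^*) - \pi_\theta^\top r$. Moreover, $r(a^*) = \max_a r(a) \ge \pi_\theta^\top r$ for any $\pi_\theta \in \Delta$, which means $r(a^*) - \pi_\theta^\top r \ge 0$ and the absolute value can be dropped. Combining these two observations with the inequality from the first paragraph delivers
\[
\bigg\| \frac{d \pi_{\theta}^\top r}{d \theta} \bigg\|_2 \ \ge\ \pi_\theta(a^*) \cdot ( \pi^* - \pi_\theta )^\top r,
\]
which is exactly the claim.

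Honestly, this is essentially a one-line argument once the variance-like identity \cref{eq:softmax_policy_gradient_norm_squared} is in hand, so there is no real obstacle to overcome; the only thing to double-check is that nothing is being lost in passing from the $\ell_2$ norm to the single-coordinate bound (which is immediate by non-negativity of each summand) and that the sign of $r(a^*) - \pi_\theta^\top r$ is indeed non-negative (which follows from $a^*$ being the arg-max). The uniqueness assumption on $a^*$ is only used to identify $\pi^*$ with the one-hot vector $e_{a^*}$; the norm-lower-bound step itself would go through for any fixed action, but the right-hand side as phrased in the lemma requires this identification in order to be written as $(\pi^* - \pi_\theta)^\top r$.
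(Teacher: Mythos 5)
Your proposal is correct and follows essentially the same route as the paper: the paper's proof also starts from the identity $\bigl\| \frac{d \pi_\theta^\top r}{d \theta} \bigr\|_2^2 = \sum_{a} \pi_\theta(a)^2 ( r(a) - \pi_\theta^\top r )^2$, drops all summands except the one for $a^*$, and takes square roots, using $\pi^* = e_{a^*}$ and $r(a^*) \ge \pi_\theta^\top r$ to remove the absolute value. Your additional remarks on where uniqueness is used are accurate but not needed for the argument.
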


\begin{theorem}[Convergence rate and regret]
\label{thm:convergence_rate_and_regret_gradient_bandit_sampled_reward}
Using \cref{alg:gradient_bandit_algorithm_sampled_reward} with $\eta = \frac{\Delta^2}{40 \cdot K^{3/2} \cdot R_{\max}^3 }$, we have, for all $t \ge 1$,
{\small
\begin{align*}
    \EE{ \left( \pi^* - \pi_{\theta_t} \right)^\top r } &\le \frac{C}{t}, \quad \text{and} \\
    \expectation{ \bigg[ \sum_{t=1}^{T}{ \left( \pi^* - \pi_{\theta_t} \right)^\top r } \bigg] } &\le \min\{ \sqrt{2 \, R_{\max} \, C \, T}, C \, \log{T} + 1  \},
\end{align*}
}%
where $C \coloneqq \frac{80 \cdot K^{3/2} \cdot R_{\max}^3 }{\Delta^2 \cdot \EE{ c^2 }} $, and $c \coloneqq \inf_{t \ge 1}{\pi_{\theta_t}(a^*)} > 0$ is from \cref{thm:asymp_global_converg_gradient_bandit_sampled_reward}.
\end{theorem}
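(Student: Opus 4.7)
The plan is to turn the one-step descent inequality of \cref{lem:constant_learning_rates_expected_progress_sampled_reward} into an $O(1/t)$ rate on the suboptimality gap $\delta_t \coloneqq (\pi^* - \pi_{\theta_t})^\top r$, using the non-uniform \L{}ojasiewicz bound of \cref{lem:non_uniform_lojasiewicz_softmax_special} and the almost-sure lower bound $c = \inf_{t\ge 1}\pi_{\theta_t}(a^*) > 0$ supplied by \cref{thm:asymp_global_converg_gradient_bandit_sampled_reward}, and then to sum or Cauchy--Schwarz that rate to obtain the regret.

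First, with $\alpha \coloneqq \Delta^2/(80\,K^{3/2}\,R_{\max}^3)$, \cref{lem:constant_learning_rates_expected_progress_sampled_reward} gives $\EEt{\delta_{t+1}} \le \delta_t - \alpha \big\|\tfrac{d\pi_{\theta_t}^\top r}{d\theta_t}\big\|_2^2$, and plugging in \cref{lem:non_uniform_lojasiewicz_softmax_special} together with $\pi_{\theta_t}(a^*) \ge c$ yields the pathwise recursion $\EEt{\delta_{t+1}} \le \delta_t\,(1 - \alpha c^2 \delta_t)$. The chosen $\eta$ makes $\alpha c^2 \delta_t \le 2R_{\max}\alpha < 1$, so Jensen applied to the convex map $x\mapsto 1/x$, combined with the elementary estimate $1/(1-x)\ge 1+x$, produces $\EEt{1/\delta_{t+1}} \ge 1/\delta_t + \alpha c^2$ pointwise; telescoping the outer expectation (the pointwise inequality absorbs the $\mathcal{F}_\infty$-measurability of $c$) gives $\EE{1/\delta_t} \ge \alpha t\,\EE{c^2}$, from which the target $\EE{\delta_t} \le C/t$ follows after a Jensen-type inversion.

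For the regret I would handle the two branches of the $\min$ separately. The $C\log T+1$ bound is obtained by directly summing $\EE{\delta_t}\le C/t$ over $t=1,\dots,T$, bounding the $t=1$ term by $\EE{\delta_1}\le 2R_{\max}$ and the tail by $\sum_{t=2}^T 1/t \le \log T$. For the $\sqrt{2R_{\max}CT}$ branch, I would telescope the progress inequality to derive $\sum_{t=1}^T \EE{c^2\delta_t^2} \le \EE{\delta_1}/\alpha \le 2R_{\max}/\alpha$, then apply Cauchy--Schwarz in the form $\big(\sum_t \EE{\delta_t}\big)^2 \le T\sum_t \EE{\delta_t^2}$ and convert $\sum_t \EE{\delta_t^2}$ to $\sum_t \EE{c^2\delta_t^2}/\EE{c^2}$ to arrive at the stated constant.

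The main obstacle is the ``Jensen inversion'' at the end of the second paragraph: a lower bound on $\EE{1/\delta_t}$ does not mechanically yield an upper bound on $\EE{\delta_t}$ because Jensen for $x\mapsto 1/x$ points the wrong way. The clean fix is to run the $1/\delta_t$ induction path-wise, freezing $c(\omega)>0$ and using a Robbins--Siegmund-type argument on the conditional recursion $\EEt{\delta_{t+1}\mid c} \le \delta_t(1-\alpha c^2\delta_t)$ to conclude $\EE{\delta_t\mid c} \le 1/(\alpha c^2 t)$, and then integrating over $c$. Handling that $c$ is $\mathcal{F}_\infty$-measurable rather than $\mathcal{F}_t$-measurable is done by first working with the $\mathcal{F}_t$-adapted surrogate $c_t \coloneqq \min_{s\le t}\pi_{\theta_s}(a^*)\ge c$ and passing to the limit $c_t\downarrow c$ via monotone convergence. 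A closely related delicacy appears in the $\sqrt{T}$ branch, where the substitution $\EE{\delta_t^2} \approx \EE{c^2\delta_t^2}/\EE{c^2}$ is exact only under independence, so recovering the stated constant requires either a direct Cauchy--Schwarz inside the sum or a correlation argument between $c$ and $\delta_t$.
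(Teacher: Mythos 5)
Your high-level plan coincides with the paper's (progress lemma $\Rightarrow$ N\L{} inequality $\Rightarrow$ the a.s.\ lower bound $c$), but the execution of the $O(1/t)$ rate takes a path that does not close, and you correctly sense this yourself. Writing $\delta_t \coloneqq (\pi^*-\pi_{\theta_t})^\top r$, the paper never forms the quantity $\mathbb{E}[1/\delta_t]$ at all: it first takes \emph{full} expectations of the pathwise inequality to get $\mathbb{E}[\delta_t]-\mathbb{E}[\delta_{t+1}] \ge \tfrac{1}{C}\,\mathbb{E}[\delta_t^2] \ge \tfrac{1}{C}\left(\mathbb{E}[\delta_t]\right)^2$ --- note that Jensen is used here in the direction $\mathbb{E}[\delta_t^2]\ge(\mathbb{E}[\delta_t])^2$, which is the \emph{correct} direction --- and then runs the elementary telescoping $\tfrac{1}{\mathbb{E}[\delta_{t+1}]}-\tfrac{1}{\mathbb{E}[\delta_{t}]} = \tfrac{\mathbb{E}[\delta_{t}]-\mathbb{E}[\delta_{t+1}]}{\mathbb{E}[\delta_{t}]\,\mathbb{E}[\delta_{t+1}]} \ge \tfrac{1}{C}$ on the \emph{deterministic} scalar sequence $\mathbb{E}[\delta_t]$, using its monotonicity. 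This yields $\mathbb{E}[\delta_t]\le C/t$ directly, with no inversion of any bound on $\mathbb{E}[1/\delta_t]$ ever required. Your route --- conditional Jensen applied pathwise to obtain $\EEt{1/\delta_{t+1}}\ge 1/\delta_t + \alpha c^2$ and then telescoping --- produces only a lower bound on $\mathbb{E}[1/\delta_t]$, which, as you note, cannot be converted into an upper bound on $\mathbb{E}[\delta_t]$; that is a genuine gap, not a technicality.

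Your proposed repair does not recover the stated constant. Conditioning on $c$ is not licensed by the tower property, because $\sigma(c)\not\subseteq\cF_t$ (the value of $c=\inf_{t\ge1}\pi_{\theta_t}(a^*)$ depends on the entire future), so the ``conditional recursion'' you write is not a well-defined consequence of the pathwise bound. Even granting a pathwise conclusion of the form $\delta_t \lesssim 1/(\alpha c^2 t)$, integrating over $c$ gives $\mathbb{E}[\delta_t]\le \mathbb{E}[1/c^2]/(\alpha t)$, and by Jensen $\mathbb{E}[1/c^2]\ge 1/\mathbb{E}[c^2]$ with the inequality typically strict (and possibly infinite) --- this is \emph{not} the constant $C=1/(\alpha\,\mathbb{E}[c^2])$ claimed in the theorem, and the monotone-convergence device with $c_t=\min_{s\le t}\pi_{\theta_s}(a^*)$ does not remove the mismatch. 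The fix is simply to adopt the paper's order of operations: expectation first, then the $1/x$ recursion on the resulting scalar sequence. (Your observation that replacing $\mathbb{E}[c^2\delta_t^2]$ by $\mathbb{E}[c^2]\,\mathbb{E}[\delta_t^2]$ needs an independence or correlation argument is fair, but it applies equally to the paper's own proof, which justifies this step only by asserting that $c$ is ``independent with $t$''; it is not a defect specific to your regret branch.) Once $\left(\mathbb{E}[\delta_t]\right)^2\le C\left(\mathbb{E}[\delta_t]-\mathbb{E}[\delta_{t+1}]\right)$ is in hand, both regret bounds follow as you describe, by summing $C/t$ and by Cauchy--Schwarz against the telescoped recursion.
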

In \cref{thm:convergence_rate_and_regret_gradient_bandit_sampled_reward}, the dependence on $t$ is optimal \citep{lai1985asymptotically}. 
However, the constant can be large, especially for a bad initialization. 
In short, the stochastic gradient algorithm inherits the initialization sensitivity and sub-optimal plateaus from the true policy gradient algorithm with softmax parameterization \citep{mei2020escaping}.
The detailed proof of \cref{thm:convergence_rate_and_regret_gradient_bandit_sampled_reward} is elaborated in \cref{pf:thm_convergence_rate_and_regret_gradient_bandit_sampled_reward}.

\begin{theorem}
\label{thm:ecaping_time_lower_bound}
There exists a problem, initialization $\theta_1 \in \sR^K$, and a positive constant $C > 0$, such that, for all $t \le t_0 \coloneqq \frac{C}{\delta \cdot \pi_{\theta_1}(a^*)}$, we have
\begin{align}
    \EE{ \left( \pi^* - \pi_{\theta_t} \right)^\top r } \ge 0.9 \cdot \Delta,
\end{align}
where $\Delta \coloneqq r(a^*) - \max_{a \not= a^*}{ r(a) }$ is the reward gap of $r$.
\end{theorem}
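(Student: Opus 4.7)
The plan is to construct an explicit hard instance showing that the initialization dependence of the $O(1/t)$ rate in \cref{thm:convergence_rate_and_regret_gradient_bandit_sampled_reward} is essential. I would take a two-armed bandit with (for simplicity) deterministic rewards $r = (R_{\max}, R_{\max} - \Delta)$, label arm $1$ as $a^*$, and choose $\theta_1$ so that $\pi_{\theta_1}(a^*) = \epsilon$ is very small (interpreting $\delta$ in the statement as the reward gap $\Delta$). The intuition is that $\theta_t(a^*)$ is only pushed upward meaningfully when $a^*$ is actually sampled, which happens with probability $\pi_{\theta_t}(a^*) \approx \epsilon$, so one expects $\Omega(1/\epsilon)$ steps before $\pi_{\theta_t}(a^*)$ can even double, and throughout this window the suboptimality remains close to $\Delta$.

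Step 1: from \cref{eq:asymp_global_converg_gradient_bandit_sampled_reward_intuition_1}, the expected one-step drift satisfies
\begin{align*}
\EEt{\theta_{t+1}(a^*)} - \theta_t(a^*) = \eta\, \pi_{\theta_t}(a^*)\, (r(a^*) - \pi_{\theta_t}^\top r) \le \eta\, \pi_{\theta_t}(a^*)\, \Delta,
\end{align*}
with a symmetric lower bound for the suboptimal arm's parameter. Define the stopping time $\tau := \inf\{ t \ge 1 : \pi_{\theta_t}(a^*) > 2\epsilon \}$; on $\{t < \tau\}$ the suboptimality is at least $(1 - 2\epsilon)\Delta$. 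Step 2: bound $\PP[\tau \le t_0]$ for $t_0 := c/(\Delta\, \pi_{\theta_1}(a^*))$ with a small absolute constant $c$. The log-odds $\rho_t := \theta_t(a^*) - \theta_t(a)$ has expected per-step drift at most $2\eta\epsilon\Delta$ while $t < \tau$, so the accumulated expected drift over $t_0$ steps is $O(1)$, well below $\log 2$. Combine this with a concentration argument---either counting the number of times $a^*$ is sampled before $\tau$ (stochastically dominated by $\mathrm{Bin}(t_0, 2\epsilon)$ with mean $2c/\Delta$) and applying Markov's inequality, or directly applying the martingale concentration tool \cref{thm:conc_new} used elsewhere in the paper---to conclude that, with probability at least $0.95$, the total fluctuation of $\rho_t$ stays below $\log 2$ and hence $\pi_{\theta_t}(a^*) \le 2\epsilon$ for all $t \le t_0$. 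Step 3: combine to obtain $\EE{(\pi^* - \pi_{\theta_t})^\top r} \ge (1 - 2\epsilon)(1 - 0.05)\Delta \ge 0.9\,\Delta$ for $t \le t_0$, once $\epsilon$ is chosen small enough and $C$ absorbs $c$ together with the $R_{\max}$-dependence of $\eta$ from \cref{lem:constant_learning_rates_expected_progress_sampled_reward}.

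The main obstacle is the uniform-in-$t$ concentration step: since each individual sampling of $a^*$ can raise $\theta_t(a^*)$ by as much as $\eta (1-\pi_{\theta_t}(a^*)) R_{\max}$, which is not small compared to the target deviation $\log 2$, a naive Azuma-type bound on $\rho_t$ is insufficient. The cleanest workaround is the sampling-count argument above: the number of $a^*$-samples before $\tau$ has small expectation and each such sample perturbs $\rho_t$ by at most $2\eta R_{\max}$, so the total fluctuation stays below $\log 2$ with constant probability for $c$ small. Collecting the arithmetic so that the final constant becomes exactly $0.9$ rather than some other constant close to $1$ is routine bookkeeping that only affects the numerical value of $C$.
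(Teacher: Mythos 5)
Your proposal is correct in outline and is actually \emph{more} complete than what the paper provides: the paper's own proof of \cref{thm:ecaping_time_lower_bound} exists only as a commented-out draft (inside an \texttt{iffalse} block) that never appears in the compiled document. That draft uses a $3$-armed instance with $\pi_{\theta_1}(1),\pi_{\theta_1}(2)<1/3<\pi_{\theta_1}(3)$, bounds only the \emph{expected} one-step drifts $\EE{P_t(a)}$ by induction, and then jumps to the pointwise claim that $\pi_{\theta_t}(3)>1/3>\pi_{\theta_t}(1)$ for all $t<t_0$ --- a step that does not follow from drift bounds alone without a concentration argument; it also targets the constant $\tfrac{2}{3}\Delta$ rather than the $0.9\Delta$ in the statement. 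Your route differs in exactly the place where the paper's sketch is weakest: you introduce the stopping time $\tau=\inf\{t:\pi_{\theta_t}(a^*)>2\epsilon\}$ and observe that, for a $2$-armed instance with positive deterministic rewards, the log-odds $\rho_t=\theta_t(a^*)-\theta_t(a)$ can only \emph{increase} on rounds where $a^*$ is actually drawn, each such round contributing at most $2\eta R_{\max}$. Hence $\{\tau\le t_0\}$ forces the pre-$\tau$ sample count $N$ of $a^*$ to exceed $\log 2/(2\eta R_{\max})$, while $\EE{N}\le 2\epsilon t_0$, and Markov's inequality closes the argument; this correctly sidesteps the Azuma-type bound that would fail because individual increments are not small relative to $\log 2$. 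The final assembly $\EE{\pi_{\theta_t}(a^*)}\le 2\epsilon+\PP[\tau\le t_0]$ then gives $\EE{(\pi^*-\pi_{\theta_t})^\top r}\ge(1-2\epsilon-0.05)\Delta\ge 0.9\Delta$ as you state. Two minor points: the symbol $\delta$ in the theorem's definition of $t_0$ is undefined in the paper, and your reading of it as the reward gap $\Delta$ is the sensible one (and matches the $t_0\propto 1/(\Delta\,\pi_{\theta_1}(a^*))$ form your construction produces); and when you plug in $\eta=\Theta(\Delta^2/(K^{3/2}R_{\max}^3))$ from \cref{lem:constant_learning_rates_expected_progress_sampled_reward}, the bound $\PP[\tau\le t_0]\lesssim c\Delta/(K^{3/2}R_{\max}^2)$ is small for any reasonable absolute $c$, so the constant-chasing you defer is indeed routine.
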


\section{The Effect of Baselines}
\label{sec:effect_of_baselines}

The original gradient bandit algorithm \citep{sutton2018reinforcement} uses a baseline, which is a slightly modification of \cref{alg:gradient_bandit_algorithm_sampled_reward}. The difference is that $R_t(a_t)$ in \cref{alg:gradient_bandit_algorithm_sampled_reward} is replaced with $R_t(a_t) - B_t$, where $B_t \in \sR$ is an action independent baseline, as shown in \cref{alg:gradient_bandit_algorithm_sampled_reward_baselines} in \cref{sec:proofs_for_gradient_bandit_algorithm_sampled_reward_baselines}.

It is well known that action independent baselines do not introduce bias in the gradient estimate \citep{sutton2018reinforcement}. 
The utility of adding a baseline has typically been considered to be reducing the variance of the gradient estimates \citep{greensmith2004variance,bhatnagar2007incremental,tucker2018mirage,mao2018variance,wu2018variance}. 
Here we show that a similar effect manifests itself through improvements in the strong growth condition.
\begin{lemma}[Strong growth condition, Self-bounding noise property]
\label{lem:strong_growth_conditions_sampled_reward_baselines}
Using \cref{alg:gradient_bandit_algorithm_sampled_reward_baselines}, we have, for all $t \ge 1$,
{\small
\begin{align*}
    \mathbb{E}_t{ \left[ \bigg\| \frac{d \pi_{\theta_t}^\top \big( \hat{r}_t - \hat{b}_t \big) }{d \theta_t} \bigg\|_2^2 \right] } 
    \le \frac{8 \, \bar{R}_{\max}^2 \, R_{\max} \, K^{3/2} }{ \Delta^2 } \cdot  \bigg\| \frac{d \pi_{\theta_t}^\top r}{d \theta_t} \bigg\|_2,
\end{align*}
}%
where $\Delta \coloneqq \min_{i \not= j}{ | r(i) - r(j) | } $, $\hat{b}_t(a) \coloneqq \frac{ \sI\left\{ a_t = a \right\} }{ \pi_{\theta_t}(a) } \cdot B_t \,$ for all $a \in [K]$, and $R_t(a_t) - B_t \in [ - \bar{R}_{\max}, \bar{R}_{\max} ]$.
\end{lemma}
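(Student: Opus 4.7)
The plan is to mirror the proof sketch of \cref{lem:strong_growth_conditions_sampled_reward} with the reward replaced by the baselined reward, noting that only one of the two intermediate bounds changes. Define the combined importance-sampling estimator
\[
    \tilde{r}_t(a) \coloneqq \hat{r}_t(a) - \hat{b}_t(a) = \frac{ \sI\left\{ a_t = a \right\} }{ \pi_{\theta_t}(a) } \cdot \left( R_t(a_t) - B_t \right),
\]
so that $\frac{d \pi_{\theta_t}^\top ( \hat{r}_t - \hat{b}_t ) }{d \theta_t} = \frac{d \pi_{\theta_t}^\top \tilde{r}_t }{d \theta_t}$. The key point is that $\tilde r_t$ has the same structural form as $\hat r_t$ in \cref{prop:gradient_bandit_algorithm_equivalent_to_stochastic_gradient_ascent_sampled_reward}, but with the single nonzero coordinate bounded in $[-\bar R_{\max},\bar R_{\max}]$ by assumption.

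First, I would re-derive the analogue of \cref{eq:strong_growth_conditions_sampled_reward_intuition_2} with $R_{\max}$ replaced by $\bar R_{\max}$: using \cref{eq:stochastic_gradient_form}, the stochastic gradient evaluated at the sampled action $a_t$ has coordinates $(1 - \pi_{\theta_t}(a_t))(R_t(a_t)-B_t)$ at $a_t$ and $-\pi_{\theta_t}(a)(R_t(a_t)-B_t)$ at $a\neq a_t$; taking the squared norm and then $\EEt{\cdot}$ over $a_t\sim \pi_{\theta_t}$, the $\pi_{\theta_t}(a_t)$ factor appears linearly and the algebra is identical to the baseline-free case, yielding
\[
    \EEt{ \bigg\| \frac{d \pi_{\theta_t}^\top \tilde r_t}{d \theta_t} \bigg\|_2^2 } \le 4 \cdot \bar R_{\max}^2 \cdot (1 - \pi_{\theta_t}(k_t)),
\]
with $k_t$ as in \cref{eq:strong_growth_conditions_sampled_reward_intuition_1}.

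Next I would invoke the landscape inequality \cref{eq:strong_growth_conditions_sampled_reward_intuition_3} unchanged, since that bound only involves the \emph{true} reward $r$ and is a purely deterministic softmax-Jacobian statement about $\pi_{\theta_t}$ (the baseline plays no role there). Chaining the two inequalities,
\[
    \EEt{ \bigg\| \frac{d \pi_{\theta_t}^\top \tilde r_t}{d \theta_t} \bigg\|_2^2 } \le 4 \bar R_{\max}^2 \cdot \frac{2 R_{\max} K^{3/2}}{\Delta^2} \cdot \bigg\| \frac{d \pi_{\theta_t}^\top r}{d \theta_t} \bigg\|_2,
\]
which gives the claimed constant $8\,\bar R_{\max}^2\,R_{\max}\,K^{3/2}/\Delta^2$.

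The only subtle step is the first bound, where one must verify that a nondeterministic baseline $B_t$ (which may depend on the history through, e.g., a running average of past rewards) still yields the $4\bar R_{\max}^2(1-\pi_{\theta_t}(k_t))$ envelope. I would handle this by conditioning: given $\gF_t$, the baseline $B_t$ is $\gF_t$-measurable, so $|R_t(a_t)-B_t|\le \bar R_{\max}$ uniformly, and taking $\EEt{\cdot}$ factors through $B_t$ exactly as it does through $R_t(a_t)$ in the baseline-free derivation. The remaining inequalities propagate without change, so no additional technical difficulty arises beyond what was already resolved for \cref{lem:strong_growth_conditions_sampled_reward}.
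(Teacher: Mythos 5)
Your proposal is correct and follows essentially the same route as the paper: the conditional-variance bound $4\bar R_{\max}^2(1-\pi_{\theta_t}(k_t))$ with the reduced range $\bar R_{\max}$, chained with the unchanged deterministic landscape inequality relating $1-\pi_{\theta_t}(k_t)$ to the true gradient norm. Your explicit remark that $B_t$ is $\gF_t$-measurable (so the expectation over $a_t$ factors through unchanged) is a correct and slightly more careful articulation of a step the paper leaves implicit.
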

Note that $\bar{R}_{\max}$ denotes the range of $R_t(a_t) - B_t$ after minus a baseline from sampled reward. 
Comparing \cref{lem:strong_growth_conditions_sampled_reward_baselines} to \cref{lem:strong_growth_conditions_sampled_reward} shows that the only difference is that a constant factor of $R_{\max}^2$ is changed to $\bar{R}_{\max}^2$. 
This indicates that a deeper reason for the variance reduction effect of adding a baseline is to reduce the effective reward range. The same improved constant will carry over to all the similar results, including larger constant learning rates, larger progress, and better constants in the global convergence results.

It is worth noting that the effect of baseline differs between algorithms. Here we see that without any baseline \cref{alg:gradient_bandit_algorithm_sampled_reward} already achieves global convergence, while adding a baseline provides constant improvements. For a different natural policy gradient (NPG) method \citep{kakade2002natural,agarwal2021theory}, it is known that without using baselines, on-policy NPG can fail by converging to a sub-optimal deterministic policy \citep{chung2020beyond,mei2021understanding}, while adding a value baseline $\pi_{\theta_t}^\top r$ restores a guarantee of global convergence by reducing the update aggressiveness. 


\section{Simulation Results}
\label{sec:simulation_results}

In this section, we conduct several simulations to empirically verify the theoretical findings of asymptotic global convergence and convergence rate. 

\subsection{Asymptotic Global Convergence}

We first design experiments to justify the asymptotic global convergence. We run \cref{alg:gradient_bandit_algorithm_sampled_reward} on stochastic bandit problems with $K = 10$ actions. The mean reward $r$ is random generated in $(0, 1)^K$. 
For each sampled action $a_t \sim \pi_{\theta_t}(\cdot)$, the observed reward is generated as $R_t(a_t) = r(a_t) + Z_t$, where $Z_t \sim \gN(0, 1)$ is Gaussian noise. For the baseline in \cref{alg:gradient_bandit_algorithm_sampled_reward_baselines}, we use average reward as suggested in \citep{sutton2018reinforcement}, i.e., $B_t \coloneqq \sum_{s=1}^{t-1}{ R_s(a_s)/(t-1) }$ for all $t > 1$. The learning rate is $\eta = 0.01$. We use adversarial initialization, such that $\pi_{\theta_1}(a^*) < 1 / K$. 

As shown in \cref{fig:asymptotic_global_convergence_results}, $\pi_{\theta_t}(a^*) \to 1$ eventually, even if its initial value $\pi_{\theta_1}(a^*) $ is very small, verifying the asymptotic global convergence in \cref{thm:asymp_global_converg_gradient_bandit_sampled_reward}. On the other hand, the long plateaus observed in \cref{fig:asymptotic_global_convergence_results} verify \cref{thm:ecaping_time_lower_bound}.

\begin{figure}[ht]
\begin{center}
\centerline{\includegraphics[width=\columnwidth]{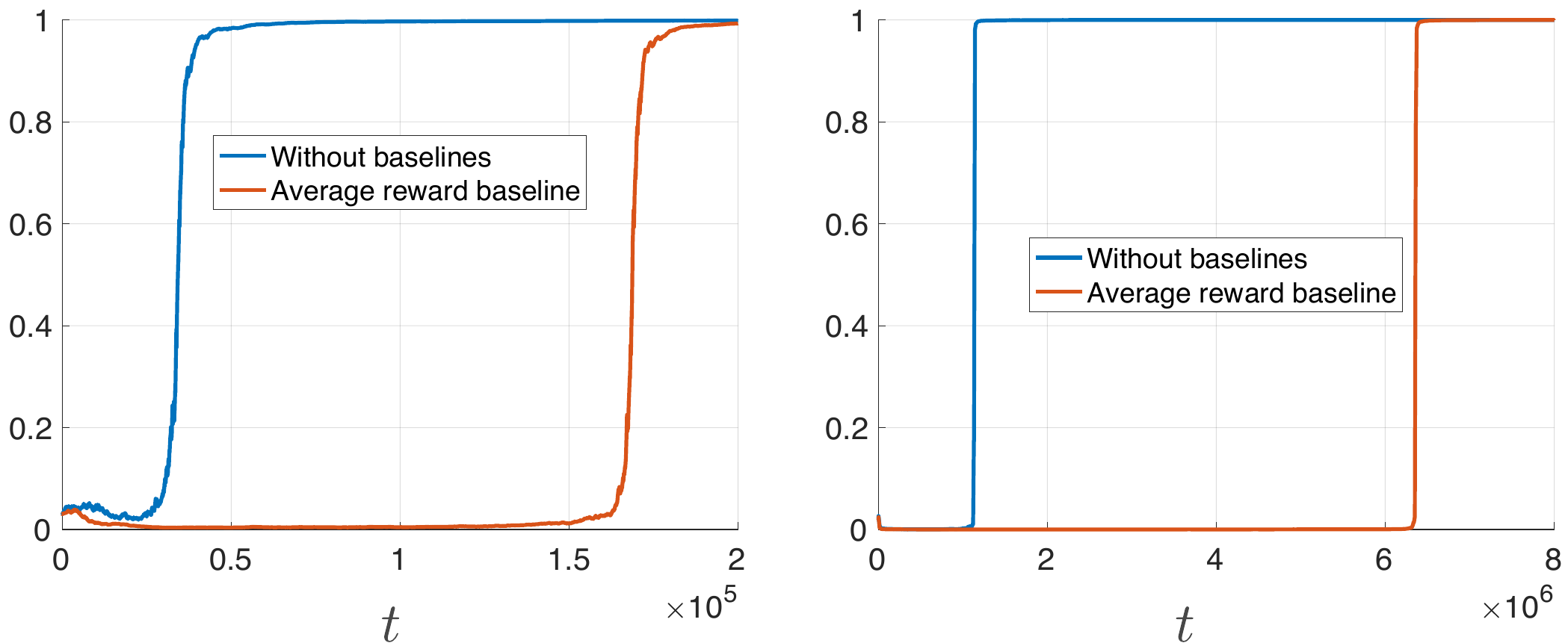}}
\vspace{-3mm}
\caption{Both subfigures show results for $\pi_{\theta_t}(a^*)$. The left is for $\pi_{\theta_1}(a^*) = 0.03$, and the right is for $\pi_{\theta_1}(a^*) = 0.02$.}
\label{fig:asymptotic_global_convergence_results}
\end{center}
\vskip -0.2in
\vspace{-3mm}
\end{figure}

One unexpected observation in \cref{fig:asymptotic_global_convergence_results} is that average reward baselines have worse performances, which is different with \citet{sutton2018reinforcement}. After checking numerical values, we found that since the initialization is bad, a sub-optimal action with $r(i) < r(a^*)$ will be pulled for most of the time, which results in $B_t \approx r(i)$. This implies that when $a^*$ is pulled, $\theta_t(a^*)$ is increased less than without baselines, since the reward gap is also relatively small. Therefore, the average reward baseline might not be a baseline that is universally beneficial, which raises the question to design adaptive baseline, which is out of the scope of this paper, and we leave as our future work.

\subsection{Convergence Rate}\label{subsec:convergence_rate}

We further check the convergence rate empirically in the same problem settings. We use uniform initialization i.e., $\pi_{\theta_1}(a) = 1/K$ for all $ a \in [K]$ and the results are shown in \cref{fig:convergence_rate_results}. 
Each curve is an average from $10$ independent runs, and the total iteration number is $T = 2 \times 10^6$. As shown in
\cref{fig:convergence_rate_results}(b), the slope in $\log$ scale is close to $-1$, which implies that $\log{ \left( \pi^* - \pi_{\theta_t} \right)^\top r} \approx - \log{t} + C$. Equivalently, we have $\left( \pi^* - \pi_{\theta_t} \right)^\top r \approx C^\prime / t$, verifying the $O(1/t)$ convergence rate in \cref{thm:convergence_rate_and_regret_gradient_bandit_sampled_reward}.

\begin{figure}[h]
\begin{center}
\centerline{\includegraphics[width=\columnwidth]{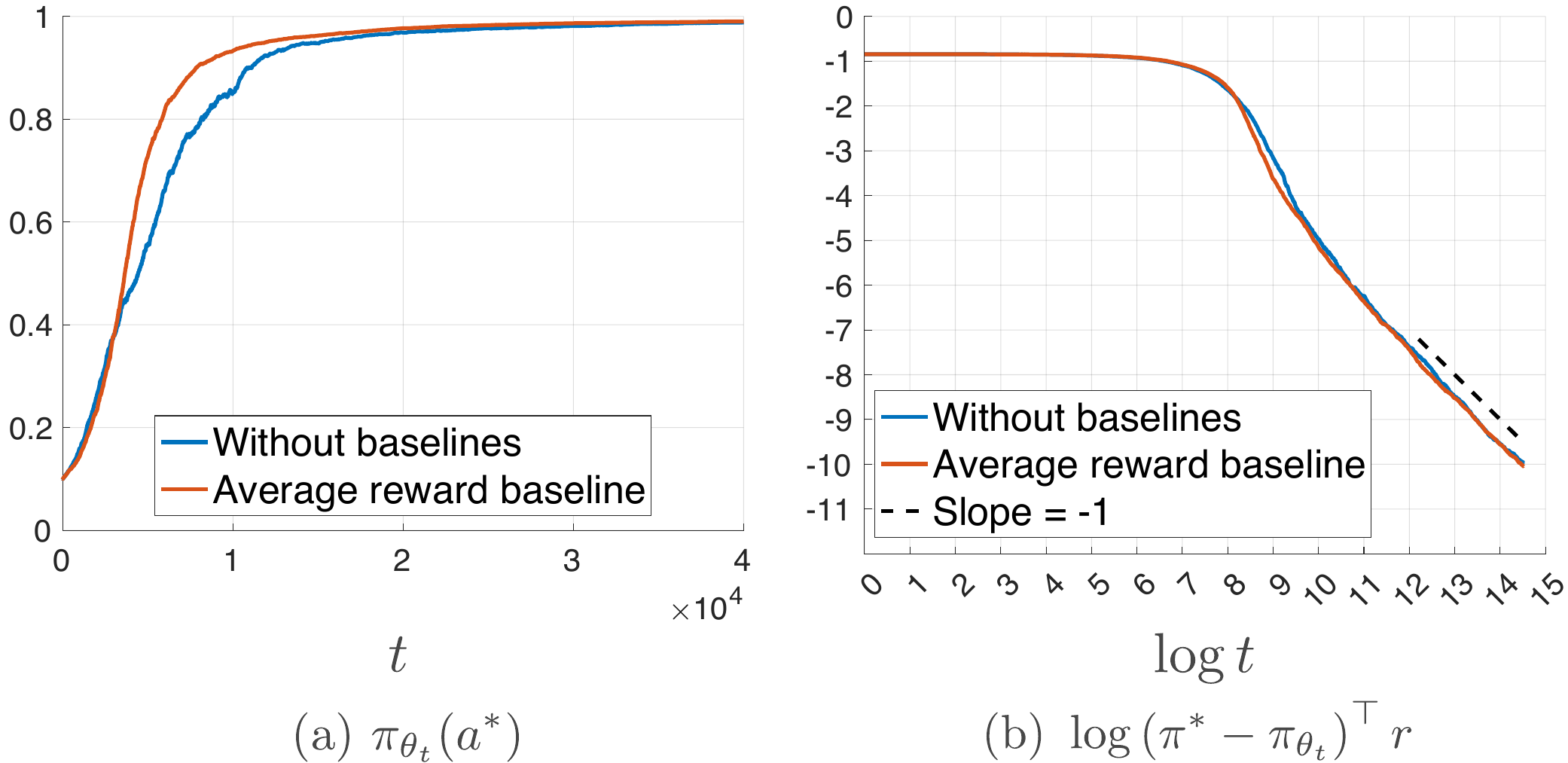}}
\vspace{-3mm}
\caption{Figure (a) shows the optimal action's probability and (b) shows $\log$ sub-optimal gap, which justifies our global convergence rate in~\cref{thm:convergence_rate_and_regret_gradient_bandit_sampled_reward}.}
\label{fig:convergence_rate_results}
\end{center}
\vskip -0.2in
\vspace{-3mm}
\end{figure}

\subsection{Average Gradient Norm Convergence}

In this section, we empirically verify the finite-step convergence rate in terms of average gradient norm. We follow exactly the same experimental settings in~\cref{subsec:convergence_rate}, but evaluate the average gradient norm along the algorithm iterations. We illustrate the results in $\log$-scale in~\cref{fig:avg_grad_norm_results}. It obviously aligned well with our convergence rate in terms of average gradient norm in~\cref{eq:almost_sure_convergence_gradient_bandit_algorithms_sampled_reward_intuition_1}.

\begin{figure}[h]
\begin{center}
\centerline{\includegraphics[width=0.6\columnwidth]{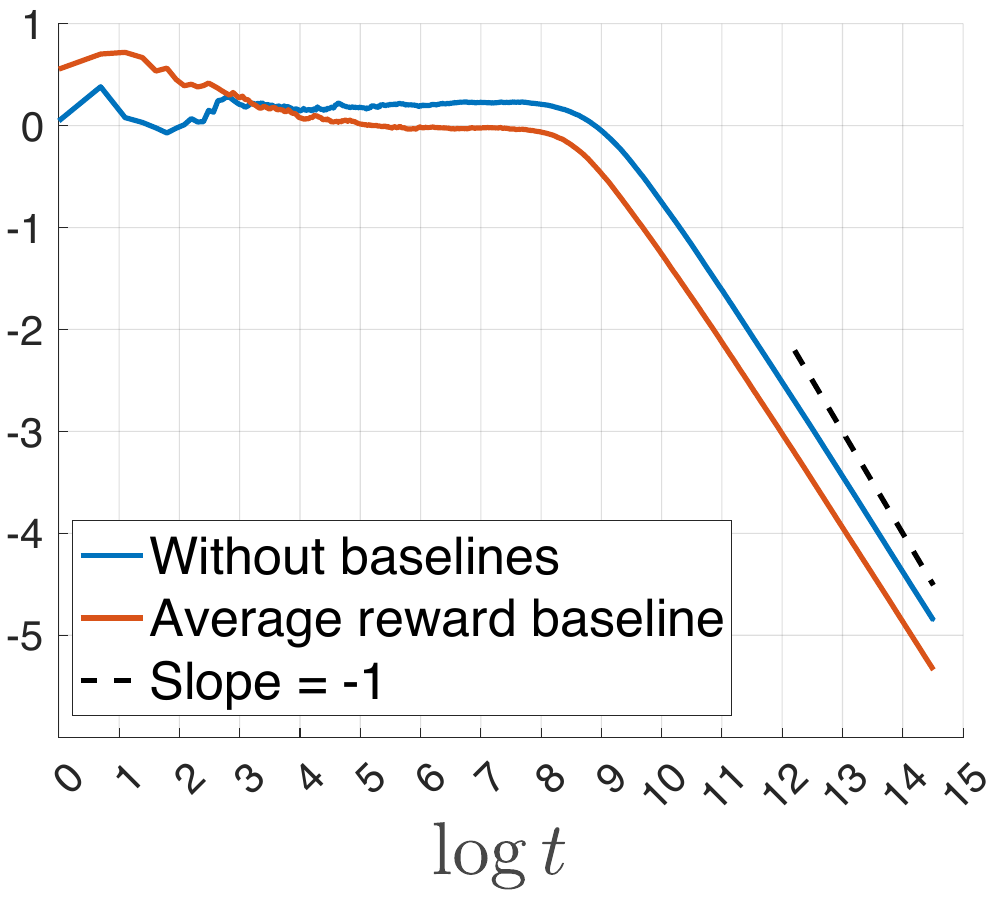}}
\vspace{-5mm}
\caption{Average squared gradient norm $\frac{1}{t} \cdot \sum\limits_{1 \le s \le t}{ \expectation{ \bigg[ \Big\| \frac{d \pi_{\theta_s}^\top r}{d \theta_s} \Big\|_2^2 \bigg]} }$ in $\log$ scale (l.h.s. of \cref{eq:almost_sure_convergence_gradient_bandit_algorithms_sampled_reward_intuition_1}).}
\label{fig:avg_grad_norm_results}
\end{center}
\vskip -0.2in
\vspace{-5mm}
\end{figure}

\section{Boltzmann Exploration}
\label{sec:further_discussions}


The softmax parameterization used in gradient bandit algorithms is also called Boltzmann distribution \citep{sutton2018reinforcement}, based on which a classic algorithm EXP3 uses $O(1/\sqrt{t})$ learning rate and achieves a $O(1/\sqrt{t})$ rate \citep{auer2002nonstochastic}. The Botlzmann distribution has also been used in other policy gradient based algorithms. For example, \citet{lan2022block} show that mirror decent (MD) or NPG with strongly convex regularizers, increasing \emph{batch sizes} and $O(1/t)$ learning rates achieves a $O(\log{t} /t) $ rate. The convergence in~\citep{lan2022block} heavily relies on batch observation for an accurate estimation of full gradient approximation, which is impossible in stochastic bandit setting, and thus not applicable. 

There are also existing results revealing the weakness of Boltzmann distribution. \citet{cesa2017boltzmann} show that without count based bonuses, ``Boltzmann exploration done wrong''. In particular, there exists a $2$-armed stochastic bandit problem with rewards bounded in $[0, 1]$, when using $\probability{\left( a_t = a \right)} \propto \exp\{ \eta_t \cdot \hat{\mu}_{t, a} \}$, where  
\begin{align*}
    \hat{\mu}_{t, a} \coloneqq \frac{ \sum_{s=1}^{t}{ \sI\left\{ a_s = a \right\} \cdot R_s(a) } }{  \sum_{s=1}^{t}{ \sI\left\{ a_s = a \right\} } }
\end{align*}
is the empirical mean estimator for $r(a)$, with $\eta_t > 2 \, \log{t} $ for all $t \ge 1$, would incur linear regret $\Omega(T)$. Instead of the aggressive update of parameters in softmax policy in~\citep{cesa2017boltzmann}, the stochastic gradient bandit can be understood as a better way for parameter updates (with weights diminishing if the action is not selected) to ensure global convergence. 

\citet{cesa2017boltzmann} also claim that the Boltzmann exploration is equivalent to the rule selecting
\begin{align*}
    a_{t+1} = \argmax_{a\in[K]} ( \hat{\mu}_{t, a} + Z_{t,a} ),
\end{align*}
which is the widely used ``Gumbel-Softmax'' trick \citep{jang2016categorical}, 
where $Z_{t,a}$ is a Gumbel random variable independent for all $a\in[K]$. \citet{cesa2017boltzmann} show a $O(\log T)$ regret when replacing $Z_{t,a}$ by $\beta_{t,a} \cdot Z_{t,a}$, where $\beta_{t,a}$ is determined by count information $\sum_{s=1}^t\sI\{a_t=a\}$.
This inspires us that we may incorporate other techniques into gradient bandit algorithm and further improve it, especially for the poor initialization and problem dependent constant in \cref{thm:asymp_global_converg_gradient_bandit_sampled_reward} as also observed in \cref{fig:asymptotic_global_convergence_results}. 

\vspace{-3mm}
\section{Conclusions}
\label{sec:conclusions}

This work provides the first global convergence result for the gradient bandit algorithms \citep{sutton2018reinforcement} using constant learning rates. The main technical finding is that the noise in stochastic gradient updates automatically vanishes such that noise control is unnecessary for global convergence. This work uncover a new understanding of stochastic gradient itself manages to achieve ``weak exploraton'' in the sense that the distribution over the arms almost surely concentrates  asymptotically on a globally optimal action. One important future direction is to improve stochastic gradient to achieve ``strong'' exploration with finite-time optimal rates. Another direction of interest is to generalize the ideas and techniques to reinforcement learning.

\section*{Acknowledgements}

The authors would like to thank anonymous reviewers for their valuable comments. Jincheng Mei would like to thank Ramki Gummadi for providing feedback on a draft of this manuscript. Csaba Szepesv\'ari, Dale Schuurmans and Zixin Zhong gratefully acknowledge funding from the Canada CIFAR AI Chairs Program, Amii and NSERC.

\bibliography{icml_ref}
\bibliographystyle{icml2023}

\newpage
\appendix
\onecolumn


\section{Proofs for \cref{alg:gradient_bandit_algorithm_sampled_reward}}
\label{sec:proofs_for_gradient_bandit_algorithm_sampled_reward}

\textbf{\cref{prop:gradient_bandit_algorithm_equivalent_to_stochastic_gradient_ascent_sampled_reward}.} \cref{alg:gradient_bandit_algorithm_sampled_reward} is equivalent to the following stochastic gradient ascent update,
\begin{align}
    \theta_{t+1} &\gets  \theta_{t} + \eta \cdot \frac{d \pi_{\theta_t}^\top \hat{r}_t}{d \theta_t} \\
    &= \theta_t + \eta \cdot \left(  \diagonalmatrix{(\pi_{\theta_t})} - \pi_{\theta_t} \pi_{\theta_t}^\top \right) \hat{r}_t,
\end{align}
where $\mathbb{E}_t{ \Big[ \frac{d \pi_{\theta_t}^\top \hat{r}_t }{d \theta_t} \Big] } = \frac{d \pi_{\theta_t}^\top r}{d \theta_t }$, and $\left( \frac{d \pi_{\theta}}{d \theta} \right)^\top = \diagonalmatrix{(\pi_{\theta})} - \pi_{\theta} \pi_{\theta}^\top $ is the Jacobian of $\theta \mapsto \pi_\theta \coloneqq \softmax(\theta)$, and $\hat{r}_t(a) \coloneqq \frac{ \sI\left\{ a_t = a \right\} }{ \pi_{\theta_t}(a) } \cdot R_t(a)$ for all $a \in [K]$ is the importance sampling (IS) estimator, and we set $R_t(a)=0$ for all $a \not= a_t$.
\begin{proof}
Using the definition of softmax Jacobian and $\hat{r}_t$, we have, for all $a \in [K]$,
\begin{align}
    \theta_{t+1}(a) &\gets \theta_t(a) + \eta \cdot \pi_{\theta_t}(a) \cdot \left( \hat{r}_t(a) - \pi_{\theta_t}^\top \hat{r}_t \right) \\
    &= \theta_t(a) + \eta \cdot \pi_{\theta_t}(a) \cdot \left( \hat{r}_t(a) - R_t(a_t) \right) \\
    &= \theta_t(a) + \begin{cases}
		\eta \cdot \left( 1 - \pi_{\theta_t}(a) \right) \cdot R_t(a), & \text{if } a_t = a\, , \\
		- \eta \cdot \pi_{\theta_t}(a) \cdot R_t(a_t), & \text{otherwise}\,.
    \end{cases}
    \qedhere
\end{align}
\end{proof}

\textbf{\cref{prop:unbiased_stochastic_gradient_bounded_scale_sampled_reward}} (Unbiased stochastic gradient with bounded variance / scale)\textbf{.}
Using \cref{alg:gradient_bandit_algorithm_sampled_reward}, we have, for all $t \ge 1$,
\begin{align}
\label{eq:unbiased_stochastic_gradient_bounded_scale_sampled_reward_result_1_appendix}
    &\mathbb{E}_t{ \bigg[ \frac{d \pi_{\theta_t}^\top \hat{r}_t }{d \theta_t} \bigg] } = \frac{d \pi_{\theta_t}^\top r}{d \theta_t }, \\
\label{eq:unbiased_stochastic_gradient_bounded_scale_sampled_reward_result_2_appendix}
    &\mathbb{E}_t{ \left[ \bigg\| \frac{d \pi_{\theta_t}^\top \hat{r}_t }{d \theta_t} \bigg\|_2^2 \right] } \le 2 \, R_{\max}^2,
\end{align}
where $\EEt{\cdot}$ is on randomness from the on-policy sampling $a_t \sim \pi_{\theta_t}(\cdot)$ and reward sampling $R_t(a_t)\sim P_{a_t}$.
\begin{proof}
\textbf{First part, \cref{eq:unbiased_stochastic_gradient_bounded_scale_sampled_reward_result_1_appendix}.} For all action $a \in [K]$, the true softmax PG is,
\begin{align}
\label{eq:unbiased_stochastic_gradient_bounded_scale_sampled_reward_result_1_intermediate_1}
    \frac{d \pi_{\theta_t}^\top r}{d \theta_t(a)} = \pi_{\theta_t}(a) \cdot \left( r(a) - \pi_{\theta_t}^\top r \right).
\end{align}
For all $a \in [K]$, the stochastic softmax PG is,
\begin{align}
\label{eq:unbiased_stochastic_gradient_bounded_scale_sampled_reward_result_1_intermediate_2}
    \frac{d \pi_{\theta_t}^\top \hat{r}_t}{d \theta_t(a)} &= \pi_{\theta_t}(a) \cdot \left( \hat{r}_t(a) - \pi_{\theta_t}^\top \hat{r}_t \right) \\
    &= \pi_{\theta_t}(a) \cdot \left( \hat{r}_t(a) - R_t(a_t) \right) \\
    &= \left( \sI\left\{ a_t = a \right\} - \pi_{\theta_t}(a) \right) \cdot R_t(a_t).
\end{align}
For the sampled action $a_t$, we have,
\begin{align}
\label{eq:unbiased_stochastic_gradient_bounded_scale_sampled_reward_result_1_intermediate_3}
    \expectation_{ R_t(a_t) \sim P_{a_t} }{ \bigg[ \frac{d \pi_{\theta_t}^\top \hat{r}_t}{d \theta_t(a_t)} \bigg] } &= \expectation_{ R_t(a_t) \sim P_{a_t} }{ \Big[ \left( 1 - \pi_{\theta_t}(a_t) \right) \cdot R_t(a_t) \Big] } \\
    &= \left( 1 - \pi_{\theta_t}(a_t) \right) \cdot \expectation_{ R_t(a_t) \sim P_{a_t} }{ \Big[ R_t(a_t) \Big] } \\
    &= \left( 1 - \pi_{\theta_t}(a_t) \right) \cdot r(a_t).
\end{align}
For any other not sampled action $a \not= a_t$, we have,
\begin{align}
\label{eq:unbiased_stochastic_gradient_bounded_scale_sampled_reward_result_1_intermediate_4}
    \expectation_{ R_t(a_t) \sim P_{a_t} }{ \bigg[ \frac{d \pi_{\theta_t}^\top \hat{r}_t}{d \theta_t(a)} \bigg] } &= \expectation_{ R_t(a_t) \sim P_{a_t} }{ \Big[ - \pi_{\theta_t}(a) \cdot R_t(a_t) \Big] } \\
    &= - \pi_{\theta_t}(a) \cdot \expectation_{ R_t(a_t) \sim P_{a_t} }{ \Big[ R_t(a_t) \Big] } \\
    &= - \pi_{\theta_t}(a) \cdot r(a_t).
\end{align}
Combing \cref{eq:unbiased_stochastic_gradient_bounded_scale_sampled_reward_result_1_intermediate_3,eq:unbiased_stochastic_gradient_bounded_scale_sampled_reward_result_1_intermediate_4}, we have, for all $a \in [K]$,
\begin{align}
\label{eq:unbiased_stochastic_gradient_bounded_scale_sampled_reward_result_1_intermediate_5}
    \expectation_{ R_t(a_t) \sim P_{a_t} }{ \bigg[ \frac{d \pi_{\theta_t}^\top \hat{r}_t}{d \theta_t(a)} \bigg] } = \left( \sI\left\{ a_t = a \right\} - \pi_{\theta_t}(a) \right) \cdot r(a_t).
\end{align}
Taking expectation over $a_t \sim \pi_{\theta_t}(\cdot)$, we have,
\begin{align}
\label{eq:unbiased_stochastic_gradient_bounded_scale_sampled_reward_result_1_intermediate_6}
    \mathbb{E}_t{ \bigg[ \frac{d \pi_{\theta_t}^\top \hat{r}_t }{d \theta_t(a)} \bigg] } &= \probability{\left( a_t = a \right) } \cdot \expectation_{ R_t(a_t) \sim P_{a_t} }{ \bigg[ \frac{d \pi_{\theta_t}^\top \hat{r}_t}{d \theta_t(a)} \ \Big| \ a_t = a \bigg] } + \probability{\left( a_t \not= a \right) } \cdot \expectation_{ R_t(a_t) \sim P_{a_t} }{ \bigg[ \frac{d \pi_{\theta_t}^\top \hat{r}_t}{d \theta_t(a)} \ \Big| \ a_t \not= a \bigg] } \\
    &= \pi_{\theta_t}(a) \cdot \left( 1 - \pi_{\theta_t}(a) \right) \cdot r(a) + \sum_{a^\prime \not= a}{ \pi_{\theta_t}(a^\prime) \cdot \left( - \pi_{\theta_t}(a) \right) \cdot r(a^\prime) } \\
    &= \pi_{\theta_t}(a) \cdot \sum_{a^\prime \not= a}{ \pi_{\theta_t}(a^\prime) \cdot \left( r(a) - r(a^\prime) \right) } \\
    &= \pi_{\theta_t}(a) \cdot \left( r(a) - \pi_{\theta_t}^\top r \right).
\end{align}
Combining \cref{eq:unbiased_stochastic_gradient_bounded_scale_sampled_reward_result_1_intermediate_1,eq:unbiased_stochastic_gradient_bounded_scale_sampled_reward_result_1_intermediate_6}, we have, for all $a \in [K]$,
\begin{align}
\label{eq:unbiased_stochastic_gradient_bounded_scale_sampled_reward_result_1_intermediate_7}
    \mathbb{E}_t{ \bigg[ \frac{d \pi_{\theta_t}^\top \hat{r}_t }{d \theta_t(a)} \bigg] } = \frac{d \pi_{\theta_t}^\top r}{d \theta_t(a)},
\end{align}
which implies \cref{eq:unbiased_stochastic_gradient_bounded_scale_sampled_reward_result_1_appendix} since $a \in [K]$ is arbitrary.

\textbf{Second part, \cref{eq:unbiased_stochastic_gradient_bounded_scale_sampled_reward_result_2_appendix}.} The squared stochastic PG norm is,
\begin{align}
\label{eq:unbiased_stochastic_gradient_bounded_scale_sampled_reward_result_2_intermediate_1}
    \bigg\| \frac{d \pi_{\theta_t}^\top \hat{r}_t }{d \theta_t} \bigg\|_2^2 &= \sum_{a \in [K]}{ \left( \frac{d \pi_{\theta_t}^\top \hat{r}_t}{d \theta_t(a)} \right)^2 } \\
    &= \sum_{a \in [K]}{ \left( \sI\left\{ a_t = a \right\} - \pi_{\theta_t}(a) \right)^2 \cdot R_t(a_t)^2 } \qquad \left( \text{by \cref{eq:unbiased_stochastic_gradient_bounded_scale_sampled_reward_result_1_intermediate_2}} \right) \\
    &\le R_{\max}^2 \cdot \sum_{a \in [K]}{ \left( \sI\left\{ a_t = a \right\} - \pi_{\theta_t}(a) \right)^2 } \qquad \left( \text{by \cref{eq:true_mean_reward_expectation_bounded_sampled_reward}} \right) \\
    &= R_{\max}^2 \cdot \bigg[ \left( 1 - \pi_{\theta_t}(a_t) \right)^2 + \sum_{a \not= a_t}{ \pi_{\theta_t}(a)^2 } \bigg] \\
    &\le R_{\max}^2 \cdot \bigg[ \left( 1 - \pi_{\theta_t}(a_t) \right)^2 + \Big( \sum_{a \not= a_t}{ \pi_{\theta_t}(a) } \Big)^2 \bigg] \qquad \left( \left\| x \right\|_2 \le \left\| x \right\|_1 \right) \\
    &= 2 \cdot R_{\max}^2 \cdot \left( 1 - \pi_{\theta_t}(a_t) \right)^2.
\end{align}
Therefore, we have, for all $a \in [K]$, conditioning on $a_t = a$,
\begin{align}
\label{eq:unbiased_stochastic_gradient_bounded_scale_sampled_reward_result_2_intermediate_2}
    \Bigg[ \bigg\| \frac{d \pi_{\theta_t}^\top \hat{r}_t }{d \theta_t} \bigg\|_2^2 \ \Big| \ a_t = a \Bigg] \le 2 \cdot R_{\max}^2 \cdot \left( 1 - \pi_{\theta_t}(a) \right)^2.
\end{align}
Taking expectation over $a_t \sim \pi_{\theta_t}(\cdot)$, we have,
\begin{align}
\label{eq:unbiased_stochastic_gradient_bounded_scale_sampled_reward_result_2_intermediate_3}
    \mathbb{E}_t{ \left[ \bigg\| \frac{d \pi_{\theta_t}^\top \hat{r}_t }{d \theta_t} \bigg\|_2^2 \right] } &= \sum_{a \in [K]}{ \probability{\left( a_t = a \right) } \cdot \Bigg[ \bigg\| \frac{d \pi_{\theta_t}^\top \hat{r}_t }{d \theta_t} \bigg\|_2^2 \ \Big| \ a_t = a \Bigg] } \\
    &\le \sum_{a \in [K]}{ \pi_{\theta_t}(a) \cdot 2 \cdot R_{\max}^2 \cdot \left( 1 - \pi_{\theta_t}(a) \right)^2 } \\
    &\le 2 \cdot R_{\max}^2 \cdot \sum_{a \in [K]}{ \pi_{\theta_t}(a)} \qquad \left( \pi_{\theta_t}(a) \in (0, 1) \text{ for all } a \in [K] \right) \\
    &= 2 \,  R_{\max}^2. \qedhere
\end{align}
\end{proof}

\textbf{\cref{lem:non_uniform_smoothness_softmax_special}} (Non-uniform smoothness (NS), \citet[Lemma 2]{mei2021leveraging})\textbf{.}
For all $\theta \in \sR^K$, the spectral radius of Hessian matrix $\frac{d^2 \{ \pi_{\theta}^\top r \} }{d {\theta^2 }} \in \sR^{K \times K} $ is upper bounded by $3 \cdot \Big\| \frac{d \pi_{\theta}^\top r}{d {\theta}} \Big\|_2$, i.e., for all $y \in \sR^K$,
\begin{align}
    \left| y^\top \ \frac{d^2 \{ \pi_{\theta}^\top r \} }{d {\theta^2 }} \ y \right| \le 3 \cdot \bigg\| \frac{d \pi_{\theta}^\top r}{d {\theta}} \bigg\|_2 \cdot \| y \|_2^2.
\end{align}
\begin{proof}
See the proof in \citet[Lemma 2]{mei2021leveraging}. We include a  proof for completeness.    

Let $S \coloneqq S(r,\theta)\in \R^{K \times K}$ be 
the second derivative of the map $\theta \mapsto \pi_\theta^\top r$. Denote $H(\pi_\theta) \coloneqq \diagonalmatrix(\pi_\theta) - \pi_\theta \pi_\theta^\top$ as the softmax Jacobian. By definition we have,
\begin{align}
    S &= \frac{d }{d \theta } \left\{ \frac{d \pi_\theta^\top r}{d \theta} \right\} \\
    &= \frac{d }{d \theta } \left\{ H(\pi_\theta) r \right\}  \\
    &= \frac{d }{d \theta } \left\{ ( \diagonalmatrix(\pi_\theta) - \pi_\theta \pi_\theta^\top) r \right\}.
\end{align}
Continuing with our calculation fix $i, j \in [K]$. Then, 
\begin{align}
    S_{(i, j)} &= \frac{d \{ \pi_\theta(i) \cdot  ( r(i) - \pi_\theta^\top r ) \} }{d \theta(j)} \\
    &= \frac{d \pi_\theta(i) }{d \theta(j)} \cdot ( r(i) - \pi_\theta^\top r ) + \pi_\theta(i) \cdot \frac{d \{ r(i) - \pi_\theta^\top r \} }{d \theta(j)} \\
    &= (\delta_{ij} \cdot \pi_\theta(j) -  \pi_\theta(i) \cdot \pi_\theta(j) ) \cdot ( r(i) - \pi_\theta^\top r ) - \pi_\theta(i) \cdot ( \pi_\theta(j) \cdot r(j) - \pi_\theta(j) \cdot \pi_\theta^\top r ) \\
    &= \delta_{ij} \cdot \pi_\theta(j) \cdot ( r(i) - \pi_\theta^\top r ) -  \pi_\theta(i) \cdot \pi_\theta(j) \cdot ( r(i) - \pi_\theta^\top r ) - \pi_\theta(i) \cdot \pi_\theta(j) \cdot ( r(j) -  \pi_\theta^\top r ),
\end{align}
where $\delta_{ij}$ is the Kronecker's $\delta$-function defined as,
\begin{align}
    \delta_{ij} = \begin{cases}
		1, & \text{if } i = j, \\
		0, & \text{otherwise}.
	\end{cases}
\end{align}
To show the bound on 
the spectral radius of $S$, pick $y \in \sR^K$. Then,
\begin{align}
\label{eq:non_uniform_smoothness_softmax_special_Hessian_spectral_radius}
\MoveEqLeft
    \left| y^\top S y \right| = \bigg| \sum\limits_{i=1}^{K}{ \sum\limits_{j=1}^{K}{ S_{(i,j)} \cdot y(i) \cdot y(j)} } \bigg| \\
    &= \bigg| \sum_{i}{ \pi_\theta(i) \cdot ( r(i) - \pi_\theta^\top r ) \cdot y(i)^2 } - 2 \cdot \sum_{i} \pi_\theta(i) \cdot ( r(i) - \pi_\theta^\top r ) \cdot y(i) \cdot \sum_{j} \pi_\theta(j) \cdot y(j) \bigg| \\
    &= \left| \left( H(\pi_\theta) r \right)^\top \left( y \odot y \right) - 2 \cdot \left( H(\pi_\theta) r \right)^\top y \cdot \left( \pi_\theta^\top y \right) \right| \\
    &\le \left| \left( H(\pi_\theta) r \right)^\top \left( y \odot y \right) \right| + 2 \cdot \left| \left( H(\pi_\theta) r \right)^\top y \right| \cdot \left| \pi_\theta^\top y \right| \qquad \left( \text{triangle inequality} \right) \\
    &\le \left\| H(\pi_\theta) r \right\|_\infty \cdot \left\| y \odot y \right\|_1 + 2 \cdot \left\| H(\pi_\theta) r \right\|_2 \cdot \left\| y \right\|_2 \cdot \left\| \pi_\theta \right\|_1 \cdot \left\| y \right\|_\infty \qquad \left( \text{H{\" o}lder's inequality} \right) \\
    &\le 3 \cdot \left\| H(\pi_\theta) r \right\|_2 \cdot \left\| y \right\|_2^2,
\end{align}
where $\odot$ is Hadamard (component-wise) product, and the last inequality uses $\left\| H(\pi_\theta) r \right\|_\infty \le \left\| H(\pi_\theta) r \right\|_2$, $\| y \odot y \|_1 = \| y \|_2^2$, $\| \pi_\theta \|_1 = 1$, and $\| y \|_\infty \le \| y \|_2$. Therefore, we have,
\begin{align}
    \left| y^\top S y \right| &\le 3 \cdot \left\| H(\pi_\theta) r \right\|_2 \cdot \left\| y \right\|_2^2 \qquad \left( \text{by \cref{eq:non_uniform_smoothness_softmax_special_Hessian_spectral_radius}} \right) \\
    &= 3 \cdot \left\| \left( \diagonalmatrix(\pi_\theta) - \pi_\theta \pi_\theta^\top \right) r \right\|_2 \cdot \left\| y \right\|_2^2 \qquad \left( H(\pi_\theta) \coloneqq \diagonalmatrix(\pi_\theta) - \pi_\theta \pi_\theta^\top \right) \\
    &= 3 \cdot \bigg\| \frac{d \pi_{\theta}^\top r}{d {\theta}} \bigg\|_2 \cdot \left\| y \right\|_2^2. \qquad \left( \text{by \cref{eq:stochastic_gradient_ascent_sampled_reward}} \right) \qedhere
\end{align}
\end{proof}

\textbf{\cref{lem:non_uniform_smoothness_special_two_iterations}} (NS between iterates)\textbf{.}
Using \cref{alg:gradient_bandit_algorithm_sampled_reward} with $\eta \in \big(0, \frac{2}{9 \cdot R_{\max}} \big)$, we have, for all $t \ge 1$,
\begin{align}
    D(\theta_{t+1}, \theta_t) \coloneqq \left| ( \pi_{\theta_{t+1}} - \pi_{\theta_t})^\top r - \Big\langle \frac{d \pi_{\theta_t}^\top r}{d \theta_t}, \theta_{t+1} - \theta_t \Big\rangle \right| \le \frac{\beta(\theta_t)}{2} \cdot \| \theta_{t+1} - \theta_t \|_2^2,
\end{align}
where
\begin{align}
    \beta(\theta_t) = \frac{6}{2 - 9 \cdot R_{\max} \cdot \eta } \cdot \bigg\| \frac{d \pi_{\theta_t}^\top r}{d \theta_t} \bigg\|_2.
\end{align}
\begin{proof}
Denote $\theta_\zeta \coloneqq \theta_t + \zeta \cdot (\theta_{t+1} - \theta_t)$ with some $\zeta \in [0,1]$. According to Taylor's theorem, we have,
\begin{align}
\label{eq:non_uniform_smoothness_special_two_iterations_intermediate_1}
    \left| ( \pi_{\theta_{t+1}} - \pi_{\theta_t})^\top r - \Big\langle \frac{d \pi_{\theta_t}^\top r}{d \theta_t}, \theta_{t+1} - \theta_t \Big\rangle \right| &= \frac{1}{2} \cdot \left| \left( \theta_{t+1} - \theta_t \right)^\top \frac{d^2 \pi_{\theta_\zeta}^\top r}{d {\theta_\zeta}^2} \left( \theta_{t+1} - \theta_t \right) \right| \\
    &\le \frac{3 }{2} \cdot \bigg\| \frac{d \pi_{\theta_\zeta}^\top r}{d {\theta_\zeta}} \bigg\|_2 \cdot \| \theta_{t+1} - \theta_t \|_2^2. \qquad \left( \text{by \cref{lem:non_uniform_smoothness_softmax_special}} \right)
\end{align}
Denote $\theta_{\zeta_1} \coloneqq \theta_t + \zeta_1 \cdot (\theta_\zeta - \theta_t)$ with some $\zeta_1 \in [0,1]$. We have,
\begin{align}
\label{eq:non_uniform_smoothness_special_two_iterations_intermediate_2}
\MoveEqLeft
    \bigg\| \frac{d \pi_{\theta_{\zeta}}^\top r}{d {\theta_{\zeta}}} - \frac{d \pi_{\theta_t}^\top r}{d \theta_t} \bigg\|_2 = \left\| \int_{0}^{1} \bigg\langle \frac{d^2 \{ \pi_{\theta_{\zeta_1}}^\top r \} }{d {\theta_{\zeta_1}^2 }}, \theta_{\zeta} - \theta_t \bigg\rangle d \zeta_1 \right\|_2 \qquad \left( \text{Fundamental theorem of calculus} \right) \\
    &\le \int_{0}^{1} \left\| \frac{d^2 \{ \pi_{\theta_{\zeta_1}}^\top r \} }{d {\theta_{\zeta_1}^2 }} \right\|_2 \cdot \left\| \theta_{\zeta} - \theta_t \right\|_2 d \zeta_1 \qquad \left( \text{by Cauchy–Schwarz} \right) \\
    &\le \int_{0}^{1} 3 \cdot \bigg\| \frac{d  \pi_{\theta_{\zeta_1}}^\top r }{d {\theta_{\zeta_1} }} \bigg\|_2 \cdot \left\| \theta_{\zeta} - \theta_t \right\|_2 d \zeta_1 \qquad \left( \text{by \cref{lem:non_uniform_smoothness_softmax_special}} \right)  \\
    &= \int_{0}^{1} 3 \cdot \bigg\| \frac{d  \pi_{\theta_{\zeta_1}}^\top r }{d {\theta_{\zeta_1} }} \bigg\|_2 \cdot \zeta \cdot \left\| \theta_{t+1} - \theta_t \right\|_2 d \zeta_1 \qquad \left( \theta_\zeta \coloneqq \theta_t + \zeta \cdot (\theta_{t+1} - \theta_t) \right)  \\
    &\le \int_{0}^{1} 3 \cdot \bigg\| \frac{d  \pi_{\theta_{\zeta_1}}^\top r }{d {\theta_{\zeta_1} }} \bigg\|_2 \cdot \eta \cdot \bigg\| \frac{d \pi_{\theta_t}^\top \hat{r}_t }{d \theta_t} \bigg\|_2 \ d \zeta_1, \qquad \left( \zeta \in [0, 1], \text{ and } \theta_{t+1} = \theta_t + \eta \cdot \frac{d \pi_{\theta_t}^\top \hat{r}_t }{d \theta_t} \right)
\end{align}
where the second inequality is because of the Hessian is symmetric, and its operator norm is equal to its spectral radius. Therefore, we have,
\begin{align}
\label{eq:non_uniform_smoothness_special_two_iterations_intermediate_3}
    \bigg\| \frac{d \pi_{\theta_{\zeta}}^\top r}{d {\theta_{\zeta}}} \bigg\|_2 &\le \bigg\| \frac{d \pi_{\theta_t}^\top r}{d \theta_t} \bigg\|_2 + \bigg\| \frac{d \pi_{\theta_{\zeta}}^\top r}{d {\theta_{\zeta}}} - \frac{d \pi_{\theta_t}^\top r}{d \theta_t} \bigg\|_2 \qquad \left( \text{by triangle inequality} \right) \\
    &\le \bigg\| \frac{d \pi_{\theta_t}^\top r}{d \theta_t} \bigg\|_2 + 3 \,  \eta \cdot \bigg\| \frac{d \pi_{\theta_t}^\top \hat{r}_t }{d \theta_t} \bigg\|_2 \cdot \int_{0}^{1} \bigg\| \frac{d  \pi_{\theta_{\zeta_1}}^\top r }{d {\theta_{\zeta_1} }} \bigg\|_2 d \zeta_1. \qquad \left( \text{by \cref{eq:non_uniform_smoothness_special_two_iterations_intermediate_2}} \right)
\end{align}
Denote $\theta_{\zeta_2} \coloneqq \theta_t + \zeta_2 \cdot (\theta_{\zeta_1} - \theta_t)$ with $\zeta_2 \in [0,1]$. Using similar calculation in \cref{eq:non_uniform_smoothness_special_two_iterations_intermediate_2}, we have,
\begin{align}
\label{eq:non_uniform_smoothness_special_two_iterations_intermediate_4}
    \bigg\| \frac{d  \pi_{\theta_{\zeta_1}}^\top r }{d {\theta_{\zeta_1} }} \bigg\|_2 &\le \bigg\| \frac{d \pi_{\theta_t}^\top r}{d \theta_t} \bigg\|_2 + \bigg\| \frac{d \pi_{\theta_{\zeta_1}}^\top r}{d {\theta_{\zeta_1}}} - \frac{d \pi_{\theta_t}^\top r}{d \theta_t} \bigg\|_2 \\
    &\le \bigg\| \frac{d \pi_{\theta_t}^\top r}{d \theta_t} \bigg\|_2 + 3 \,  \eta \cdot \bigg\| \frac{d \pi_{\theta_t}^\top \hat{r}_t }{d \theta_t} \bigg\|_2 \cdot \int_{0}^{1}  \bigg\| \frac{d  \pi_{\theta_{\zeta_2}}^\top r }{d {\theta_{\zeta_2} }} \bigg\|_2 d \zeta_2.
\end{align}
Combining \cref{eq:non_uniform_smoothness_special_two_iterations_intermediate_3,eq:non_uniform_smoothness_special_two_iterations_intermediate_4}, we have,
\begin{align}
\label{eq:non_uniform_smoothness_special_two_iterations_intermediate_5}
    \bigg\| \frac{d \pi_{\theta_{\zeta}}^\top r}{d {\theta_{\zeta}}} \bigg\|_2 \le \left( 1 + 3 \, \eta \cdot \bigg\| \frac{d \pi_{\theta_t}^\top \hat{r}_t }{d \theta_t} \bigg\|_2 \right) \cdot \bigg\| \frac{d \pi_{\theta_t}^\top r}{d \theta_t} \bigg\|_2 + \left( 3 \,  \eta \cdot \bigg\| \frac{d \pi_{\theta_t}^\top \hat{r}_t }{d \theta_t} \bigg\|_2 \right)^2 \cdot \int_{0}^{1} \int_{0}^{1} \bigg\| \frac{d  \pi_{\theta_{\zeta_2}}^\top r }{d {\theta_{\zeta_2} }} \bigg\|_2 d \zeta_2 d \zeta_1,
\end{align}
which, by recurring the above arguments, implies that,
\begin{align}
\label{eq:non_uniform_smoothness_special_two_iterations_intermediate_6}
    \bigg\| \frac{d \pi_{\theta_{\zeta}}^\top r}{d {\theta_{\zeta}}} \bigg\|_2 &\le \sum_{i = 0}^{\infty}{ \left( 3 \,  \eta \cdot \bigg\| \frac{d \pi_{\theta_t}^\top \hat{r}_t }{d \theta_t} \bigg\|_2 \right)^i } \cdot \bigg\| \frac{d \pi_{\theta_t}^\top r}{d \theta_t} \bigg\|_2.
\end{align}
Next, we have,
\begin{align}
\label{eq:non_uniform_smoothness_special_two_iterations_intermediate_7}
    3 \, \eta \cdot \bigg\| \frac{d \pi_{\theta_t}^\top \hat{r}_t }{d \theta_t} \bigg\|_2 &\le 3 \, \eta \cdot \sqrt{ 2 \cdot R_{\max}^2 \cdot \left( 1 - \pi_{\theta_t}(a_t) \right)^2 } \qquad \left( \text{by \cref{eq:unbiased_stochastic_gradient_bounded_scale_sampled_reward_result_2_intermediate_1}} \right) \\
    &< \frac{3 \cdot 2}{9 \cdot R_{\max}} \cdot \sqrt{2} \cdot R_{\max} \qquad \left( \pi_{\theta_t}(a_t) \in (0, 1), \text{ and } \eta < \frac{2}{9 \cdot R_{\max}} \right) \\
    &< 1.
\end{align}
Combining \cref{eq:non_uniform_smoothness_special_two_iterations_intermediate_6,eq:non_uniform_smoothness_special_two_iterations_intermediate_7}, we have,
\begin{align}
\label{eq:non_uniform_smoothness_special_two_iterations_intermediate_8}
    \bigg\| \frac{d \pi_{\theta_{\zeta}}^\top r}{d {\theta_{\zeta}}} \bigg\|_2 &\le \frac{1}{1 - 3 \,  \eta \cdot \Big\| \frac{d \pi_{\theta_t}^\top \hat{r}_t }{d \theta_t} \Big\|_2 } \cdot \bigg\| \frac{d \pi_{\theta_t}^\top r}{d \theta_t} \bigg\|_2 \qquad \left( 3 \, \eta \cdot \bigg\| \frac{d \pi_{\theta_t}^\top \hat{r}_t }{d \theta_t} \bigg\|_2 \in ( 0 , 1) \text{ from \cref{eq:non_uniform_smoothness_special_two_iterations_intermediate_7}}\right) \\
    &\le \frac{1}{1 - 3 \, \eta \cdot \sqrt{2} \cdot R_{\max} } \cdot \bigg\| \frac{d \pi_{\theta_t}^\top r}{d \theta_t} \bigg\|_2 \qquad \left( \text{by \cref{eq:unbiased_stochastic_gradient_bounded_scale_sampled_reward_result_2_intermediate_1}} \right)  \\
    &< \frac{1}{1 - \frac{9}{2} \cdot R_{\max} \cdot \eta } \cdot \bigg\| \frac{d \pi_{\theta_t}^\top r}{d \theta_t} \bigg\|_2.
\end{align}
Combining \cref{eq:non_uniform_smoothness_special_two_iterations_intermediate_1,eq:non_uniform_smoothness_special_two_iterations_intermediate_8}, we have,
\begin{align}
    \left| ( \pi_{\theta_{t+1}} - \pi_{\theta_t})^\top r - \Big\langle \frac{d \pi_{\theta_t}^\top r}{d \theta_t}, \theta_{t+1} - \theta_t \Big\rangle \right| &\le \frac{3 }{2} \cdot \bigg\| \frac{d \pi_{\theta_\zeta}^\top r}{d {\theta_\zeta}} \bigg\|_2 \cdot \| \theta_{t+1} - \theta_t \|_2^2 \\
    &\le \frac{3}{2 - 9 \cdot R_{\max} \cdot \eta } \cdot \bigg\| \frac{d \pi_{\theta_t}^\top r}{d \theta_t} \bigg\|_2 \cdot \| \theta_{t+1} - \theta_t \|_2^2. \qedhere
\end{align}
\end{proof}

\textbf{\cref{lem:strong_growth_conditions_sampled_reward}} (Strong growth conditions / Self-bounding noise property)\textbf{.}
Using \cref{alg:gradient_bandit_algorithm_sampled_reward}, we have, for all $t \ge 1$,
\begin{align}
    \mathbb{E}_t{ \left[ \bigg\| \frac{d \pi_{\theta_t}^\top \hat{r}_t}{d \theta_t} \bigg\|_2^2 \right] } 
    \le \frac{8 \cdot R_{\max}^3 \cdot K^{3/2} }{ \Delta^2 } \cdot  \bigg\| \frac{d \pi_{\theta_t}^\top r}{d \theta_t} \bigg\|_2,
\end{align}
where $\Delta \coloneqq \min_{i \not= j}{ | r(i) - r(j) | } $.
\begin{proof}
Given $t \ge 1$, denote $k_t$ as the action with largest probability, i.e., $k_t \coloneqq \argmax_{a \in [K]}{ \pi_{\theta_t}(a) }$. We have,
\begin{align}
\label{eq:strong_growth_conditions_sampled_reward_intermedita_1}
    \pi_{\theta_t}(k_t) \ge \frac{1}{K}.
\end{align}
According to \cref{eq:unbiased_stochastic_gradient_bounded_scale_sampled_reward_result_2_intermediate_3}, we have,
\begin{align}
\label{eq:strong_growth_conditions_sampled_reward_intermedita_2}
    \mathbb{E}_t{ \left[ \bigg\| \frac{d \pi_{\theta_t}^\top \hat{r}_t }{d \theta_t} \bigg\|_2^2 \right] } &= \sum_{a \in [K]}{ \probability{\left( a_t = a \right) } \cdot \Bigg[ \bigg\| \frac{d \pi_{\theta_t}^\top \hat{r}_t }{d \theta_t} \bigg\|_2^2 \ \Big| \ a_t = a \Bigg] } \\
    &\le 2 \cdot R_{\max}^2 \cdot \sum_{a \in [K]}{ \pi_{\theta_t}(a) \cdot  \left( 1 - \pi_{\theta_t}(a) \right)^2 } \\
    &= 2 \cdot R_{\max}^2 \cdot \bigg[ \pi_{\theta_t}(k_t) \cdot \left( 1 - \pi_{\theta_t}(k_t) \right)^2 + \sum_{a \not= k_t}{ \pi_{\theta_t}(a) \cdot \left( 1 - \pi_{\theta_t}(a) \right)^2  } \bigg] \\
    &\le 2 \cdot R_{\max}^2 \cdot \bigg[ 1 - \pi_{\theta_t}(k_t) + \sum_{a \not= k_t}{ \pi_{\theta_t}(a) } \bigg] \qquad \left( \pi_{\theta_t}(a) \in (0, 1) \text{ for all } a \in [K] \right) \\
    &= 4 \cdot R_{\max}^2 \cdot \left( 1 - \pi_{\theta_t}(k_t) \right).
\end{align}
On the other hand, we have,
\begin{align}
\label{eq:strong_growth_conditions_sampled_reward_intermedita_3}
    \bigg\| \frac{d \pi_{\theta_t}^\top r}{d \theta_t} \bigg\|_2^2 &= \sum_{a \in [K] }{ \pi_{\theta_t}(a)^2 \cdot (r(a) - \pi_{\theta_t}^\top r)^2} \\
    &= \sum_{a^\prime \in [K]}{ (r(a^\prime) - \pi_{\theta_t}^\top r)^2 } \cdot \sum_{a \in [K] }{ \pi_{\theta_t}(a)^2 \cdot \frac{ (r(a) - \pi_{\theta_t}^\top r)^2 }{ \sum_{a^\prime \in [K]}{ (r(a^\prime) - \pi_{\theta_t}^\top r)^2 } } } \\
    &\ge \sum_{a^\prime \in [K]}{ (r(a^\prime) - \pi_{\theta_t}^\top r)^2 } \cdot \Bigg[ \sum_{a \in [K] }{ \pi_{\theta_t}(a) \cdot \frac{ (r(a) - \pi_{\theta_t}^\top r)^2 }{ \sum_{a^\prime \in [K]}{ (r(a^\prime) - \pi_{\theta_t}^\top r)^2 } } } \Bigg]^2 \qquad \left( \text{Jensen's inequality} \right) \\
    &= \frac{1}{ \sum_{a^\prime \in [K]}{ (r(a^\prime) - \pi_{\theta_t}^\top r)^2 } } \cdot \Bigg[ \sum_{a \in [K] }{ \pi_{\theta_t}(a) \cdot (r(a) - \pi_{\theta_t}^\top r)^2 } \Bigg]^2 \\
    &\ge \frac{1}{4 \cdot K \cdot R_{\max}^2 } \cdot \Bigg[ \sum_{a \in [K] }{ \pi_{\theta_t}(a) \cdot (r(a) - \pi_{\theta_t}^\top r)^2 } \Bigg]^2, \qquad \left( r \in [-R_{\max}, R_{\max}]^K \right)
\end{align}
which implies that,
\begin{align}
\label{eq:strong_growth_conditions_sampled_reward_intermedita_4}
    \bigg\| \frac{d \pi_{\theta_t}^\top r}{d \theta_t} \bigg\|_2 \ge \frac{1}{2 \cdot \sqrt{K} \cdot R_{\max} } \cdot \sum_{a \in [K] }{ \pi_{\theta_t}(a) \cdot (r(a) - \pi_{\theta_t}^\top r)^2 }.
\end{align}
Using similar calculations in the proofs for \citet[Lemma 2]{mei2021understanding}, we have,
\begin{align}
\MoveEqLeft
\label{eq:strong_growth_conditions_sampled_reward_intermedita_5}
    \sum_{a \in [K] }{ \pi_{\theta_t}(a) \cdot (r(a) - \pi_{\theta_t}^\top r)^2 } = \sum_{i=1}^{K}{ \pi_{\theta_t}(i) \cdot r(i)^2 } - \bigg[ \sum_{i=1}^{K}{ \pi_{\theta_t}(i) \cdot r(i) } \bigg]^2 \\
    &= \sum_{i=1}^{K}{ \pi_{\theta_t}(i) \cdot r(i)^2 } - \sum_{i=1}^{K}{ \pi_{\theta_t}(i)^2 \cdot r(i)^2 } - 2 \cdot \sum_{i=1}^{K-1}{ \pi_{\theta_t}(i) \cdot r(i) \cdot \sum_{j = i+1}^{K}{ \pi_{\theta_t}(j) \cdot r(j) } } \\
    &= \sum_{i=1}^{K}{ \pi_{\theta_t}(i) \cdot r(i)^2 \cdot \left( 1 - \pi_{\theta_t}(i) \right) } - 2 \cdot \sum_{i=1}^{K-1}{ \pi_{\theta_t}(i) \cdot r(i) \cdot \sum_{j = i+1}^{K}{ \pi_{\theta_t}(j) \cdot r(j) } } \\
    &= \sum_{i=1}^{K}{ \pi_{\theta_t}(i) \cdot r(i)^2 \cdot \sum_{j \not= i}{\pi_{\theta_t}(j)}  } - 2 \cdot \sum_{i=1}^{K-1}{ \pi_{\theta_t}(i) \cdot r(i) \cdot \sum_{j = i+1}^{K}{ \pi_{\theta_t}(j) \cdot r(j) } } \\
    &= \sum_{i=1}^{K-1}{ \pi_{\theta_t}(i) \cdot \sum_{j = i+1}^{K}{\pi_{\theta_t}(j) \cdot \left( r(i)^2 + r(j)^2 \right) }  } - 2 \cdot \sum_{i=1}^{K-1}{ \pi_{\theta_t}(i) \cdot r(i) \cdot \sum_{j = i+1}^{K}{ \pi_{\theta_t}(j) \cdot r(j) } } \\
    &= \sum_{i=1}^{K-1}{ \pi_{\theta_t}(i) \cdot \sum_{j = i+1}^{K}{\pi_{\theta_t}(j) \cdot \left( r(i) - r(j) \right)^2 }  },
\end{align}
which implies that,
\begin{align}
\MoveEqLeft
\label{eq:strong_growth_conditions_sampled_reward_intermedita_6}
    \sum_{a \in [K] }{ \pi_{\theta_t}(a) \cdot (r(a) - \pi_{\theta_t}^\top r)^2 } \ge \sum_{i=1}^{k_t}{ \pi_{\theta_t}(i) \cdot \sum_{j = i+1}^{K}{\pi_{\theta_t}(j) \cdot \left( r(i) - r(j) \right)^2 }  } \qquad \left( \text{fewer terms} \right) \\
    &\ge \sum_{i=1}^{k_t-1}{ \pi_{\theta_t}(i) \cdot \pi_{\theta_t}(k_t) \cdot \left( r(i) - r(k_t) \right)^2 } + \pi_{\theta_t}(k_t) \cdot \sum_{j = k_t+1}^{K}{\pi_{\theta_t}(j) \cdot \left( r(k_t) - r(j) \right)^2 } \qquad \left( \text{fewer terms} \right) \\
    &= \pi_{\theta_t}(k_t) \cdot \sum_{a \not= k_t}{ \pi_{\theta_t}(a) \cdot \left( r(a) - r(k_t) \right)^2 } \\
    &\ge \frac{ \Delta^2 }{K} \cdot \left( 1 - \pi_{\theta_t}(k_t) \right), \qquad \left( \text{by \cref{eq:strong_growth_conditions_sampled_reward_intermedita_1}} \right)
\end{align}
where $\Delta \coloneqq \min_{i \not= j}{ | r(i) - r(j) | } $. Therefore, we have,
\begin{align}
    \mathbb{E}_t{ \left[ \bigg\| \frac{d \pi_{\theta_t}^\top \hat{r}_t }{d \theta_t} \bigg\|_2^2 \right] } &\le 4 \cdot R_{\max}^2 \cdot \left( 1 - \pi_{\theta_t}(k_t) \right) \qquad \left( \text{by \cref{eq:strong_growth_conditions_sampled_reward_intermedita_2}} \right) \\
    &\le \frac{4 \cdot R_{\max}^2 \cdot K}{ \Delta^2 } \cdot \sum_{a \in [K] }{ \pi_{\theta_t}(a) \cdot (r(a) - \pi_{\theta_t}^\top r)^2 } \qquad \left( \text{by \cref{eq:strong_growth_conditions_sampled_reward_intermedita_6}} \right) \\
    &\le \frac{4 \cdot R_{\max}^2 \cdot K}{ \Delta^2 } \cdot 2 \cdot \sqrt{K} \cdot R_{\max} \cdot \bigg\| \frac{d \pi_{\theta_t}^\top r}{d \theta_t} \bigg\|_2 \qquad \left( \text{by \cref{eq:strong_growth_conditions_sampled_reward_intermedita_4}} \right) \\
    &= \frac{8 \cdot R_{\max}^3 \cdot K^{3/2} }{ \Delta^2 } \cdot \bigg\| \frac{d \pi_{\theta_t}^\top r}{d \theta_t} \bigg\|_2. \qedhere
\end{align}
\end{proof}

\textbf{\cref{lem:constant_learning_rates_expected_progress_sampled_reward}} (Constant learning rates)\textbf{.}
Using \cref{alg:gradient_bandit_algorithm_sampled_reward} with $\eta = \frac{\Delta^2}{40 \cdot K^{3/2} \cdot R_{\max}^3 }$, we have, for all $t \ge 1$,
\begin{align}
    \pi_{\theta_t}^\top r - \EEt{ \pi_{\theta_{t+1}}^\top r } \le - \frac{\Delta^2}{80 \cdot K^{3/2} \cdot R_{\max}^3 } \cdot \bigg\| \frac{d \pi_{\theta_t}^\top r}{d \theta_t} \bigg\|_2^2.
\end{align}
\begin{proof}
Using the learning rate,
\begin{align}
\label{eq:constant_learning_rates_expected_progress_sampled_reward_intermediate_0a}
    \eta &= \frac{\Delta^2}{40 \cdot K^{3/2} \cdot R_{\max}^3 } \\
    &= \frac{4}{45 \cdot R_{\max} } \cdot \frac{\Delta^2}{R_{\max}^2} \cdot \frac{1}{K^{3/2}} \cdot \frac{45}{4} \cdot \frac{1}{40} \\
    &\le \frac{4}{45 \cdot R_{\max} } \cdot 4 \cdot \frac{1}{2 \cdot \sqrt{2}} \cdot  \frac{45}{4} \cdot \frac{1}{40}, \qquad \left( \Delta \le 2 \cdot R_{\max}, \text{ and } K \ge 2 \right) \\
\label{eq:constant_learning_rates_expected_progress_sampled_reward_intermediate_0b}
    &< \frac{4}{45 \cdot R_{\max} },
\end{align}
we have $\eta \in \big(0, \frac{2}{9 \cdot R_{\max}} \big)$. According to \cref{lem:non_uniform_smoothness_special_two_iterations}, we have,
\begin{align}
\MoveEqLeft
\label{eq:constant_learning_rates_expected_progress_sampled_reward_intermediate_1}
    \left| ( \pi_{\theta_{t+1}} - \pi_{\theta_t})^\top r - \Big\langle \frac{d \pi_{\theta_t}^\top r}{d \theta_t}, \theta_{t+1} - \theta_t \Big\rangle \right| \le \frac{3}{2 - 9 \cdot R_{\max} \cdot \eta } \cdot \bigg\| \frac{d \pi_{\theta_t}^\top r}{d \theta_t} \bigg\|_2 \cdot \| \theta_{t+1} - \theta_t \|_2^2 \\
    &\le \frac{3}{2 - 9 \cdot R_{\max} \cdot \frac{4}{45 \cdot R_{\max} } } \cdot \bigg\| \frac{d \pi_{\theta_t}^\top r}{d \theta_t} \bigg\|_2 \cdot \| \theta_{t+1} - \theta_t \|_2^2 \qquad \left( \text{by \cref{eq:constant_learning_rates_expected_progress_sampled_reward_intermediate_0b}} \right) \\
    &= \frac{5}{2} \cdot \bigg\| \frac{d \pi_{\theta_t}^\top r}{d \theta_t} \bigg\|_2 \cdot \| \theta_{t+1} - \theta_t \|_2^2,
\end{align}
which implies that,
\begin{align}
\label{eq:constant_learning_rates_expected_progress_sampled_reward_intermediate_2}
    \pi_{\theta_t}^\top r - \pi_{\theta_{t+1}}^\top r &\le - \Big\langle \frac{d \pi_{\theta_t}^\top r}{d \theta_t}, \theta_{t+1} - \theta_t \Big\rangle +  \frac{5}{2} \cdot \bigg\| \frac{d \pi_{\theta_t}^\top r}{d \theta_t} \bigg\|_2 \cdot \| \theta_{t+1} - \theta_{t} \|_2^2 \\
    &= - \eta \cdot \Big\langle \frac{d \pi_{\theta_t}^\top r}{d \theta_t}, \frac{d \pi_{\theta_t}^\top \hat{r}_t}{d \theta_t} \Big\rangle + \frac{5}{2} \cdot \bigg\| \frac{d \pi_{\theta_t}^\top r}{d \theta_t} \bigg\|_2 \cdot \eta^2 \cdot \bigg\| \frac{d \pi_{\theta_t}^\top \hat{r}_t}{d \theta_t} \bigg\|_2^2,
\end{align}
where the last equation uses \cref{alg:gradient_bandit_algorithm_sampled_reward}.
Taking expectation over $a_t \sim \pi_{\theta_t}(\cdot)$ and $R_t(a_t) \sim P_{a_t}$, we have,
\begin{align}
\MoveEqLeft
\label{eq:constant_learning_rates_expected_progress_sampled_reward_intermediate_3}
    \pi_{\theta_t}^\top r - \EEt{ \pi_{\theta_{t+1}}^\top r } \le - \eta \cdot \Big\langle \frac{d \pi_{\theta_t}^\top r}{d \theta_t}, \mathbb{E}_t{ \bigg[ \frac{d \pi_{\theta_t}^\top \hat{r}_t}{d \theta_t} \bigg] } \Big\rangle + \frac{5}{2} \cdot \bigg\| \frac{d \pi_{\theta_t}^\top r}{d \theta_t} \bigg\|_2 \cdot \eta^2 \cdot \mathbb{E}_t{ \Bigg[ \bigg\| \frac{d \pi_{\theta_t}^\top \hat{r}_t}{d \theta_t} \bigg\|_2^2  \Bigg] } \\
    &= - \eta \cdot \bigg\| \frac{d \pi_{\theta_t}^\top r}{d \theta_t} \bigg\|_2^2 + \frac{5}{2} \cdot \bigg\| \frac{d \pi_{\theta_t}^\top r}{d \theta_t} \bigg\|_2 \cdot \eta^2 \cdot \mathbb{E}_t{ \Bigg[ \bigg\| \frac{d \pi_{\theta_t}^\top \hat{r}_t}{d \theta_t} \bigg\|_2^2  \Bigg] } \qquad \left( \text{by \cref{prop:unbiased_stochastic_gradient_bounded_scale_sampled_reward}} \right) \\
    &\le - \eta \cdot \bigg\| \frac{d \pi_{\theta_t}^\top r}{d \theta_t} \bigg\|_2^2 + \frac{5}{2} \cdot \bigg\| \frac{d \pi_{\theta_t}^\top r}{d \theta_t} \bigg\|_2 \cdot \eta^2 \cdot \frac{8 \cdot R_{\max}^3 \cdot K^{3/2} }{ \Delta^2 } \cdot \bigg\| \frac{d \pi_{\theta_t}^\top r}{d \theta_t} \bigg\|_2 \qquad \left( \text{by \cref{lem:strong_growth_conditions_sampled_reward}} \right) \\
    &= \left( - \eta + \eta^2 \cdot \frac{20 \cdot R_{\max}^3 \cdot K^{3/2} }{ \Delta^2 } \right) \cdot \bigg\| \frac{d \pi_{\theta_t}^\top r}{d \theta_t} \bigg\|_2^2 \\
    &= - \frac{\Delta^2}{80 \cdot K^{3/2} \cdot R_{\max}^3 } \cdot \bigg\| \frac{d \pi_{\theta_t}^\top r}{d \theta_t} \bigg\|_2^2. \qquad \left( \text{by \cref{eq:constant_learning_rates_expected_progress_sampled_reward_intermediate_0a}} \right) \qedhere
\end{align}
\end{proof}

\textbf{\cref{cor:almost_sure_convergence_gradient_bandit_algorithms_sampled_reward}.}
Using \cref{alg:gradient_bandit_algorithm_sampled_reward}, we have, the sequence $\{ \pi_{\theta_t}^\top r \}_{t\ge 1}$ converges w. p. $1$.
   

\begin{proof}
Setting $Y_t =  r(a^*) - \pi_{\theta_t}^\top r$, we have $Y_t \in [- R_{\max}, R_{\max}]$ by \cref{eq:true_mean_reward_expectation_bounded_sampled_reward}. 
Define $\cF_t$ as the $\sigma$-algebra generated by
$\{ a_1, R_1(a_1), a_2, R_2(a_2), \dots, a_{t-1}, R_{t-1}(a_{t-1}) \}$.
Note that $Y_t$ is $\cF_t$-measurable since $\theta_t$ is a deterministic function of $a_1, R_1(a_1), \ldots, a_{t-1}, R_{t-1}(a_{t-1})$.
According to \cref{lem:constant_learning_rates_expected_progress_sampled_reward}, 
using \cref{alg:gradient_bandit_algorithm_sampled_reward}, we have, for all $t \ge 1$,
$\pi_{\theta_t}^\top r - \EEt{ \pi_{\theta_{t+1}}^\top r } \le 0$, 
which indicates that $\EE{ Y_{t+1}|\cF_t }\le Y_t$.
Hence, the conditions of Doob's super-martingale theorem (\cref{thm:smc}) are satisfied and the result follows.
\end{proof}

 \subsection{Proof of \cref{thm:asymp_global_converg_gradient_bandit_sampled_reward}}
\label{pf:thm_asymp_global_converg_gradient_bandit_sampled_reward}
\textbf{\cref{thm:asymp_global_converg_gradient_bandit_sampled_reward}} (Asymptotic global convergence)\textbf{.}
Using \cref{alg:gradient_bandit_algorithm_sampled_reward}, we have, almost surely, 
\begin{align}
    \pi_{\theta_t}(a^*) \to 1, \text{ as } t \to \infty,
\end{align}
which implies that $\inf_{t \ge 1}{ \pi_{\theta_t}(a^*) } > 0$. 

\begin{proof}
According to \cref{alg:gradient_bandit_algorithm_sampled_reward}, for each $a \in [K]$, the update is,
\begin{align}
    \theta_{t+1}(a) 
    &= \theta_t(a) + \eta \cdot \pi_{\theta_t}(a) \cdot \left( \frac{ \sI\left\{ a_t = a \right\} }{ \pi_{\theta_t}(a) } \cdot R_t(a) - R_t(a_t) \right).
\end{align}
Given $i \in [K]$, define the following set $\gP(i)$ of ``generalized one-hot policy'',
\begin{align}
\label{eq:asymp_global_converg_gradient_bandit_sampled_reward_intermediate_1_a}
    \gA(i) &\coloneqq \left\{ j \in [K]: r(j) = r(i) \right\}, \\
\label{eq:asymp_global_converg_gradient_bandit_sampled_reward_intermediate_1_b}
    \gP(i) &\coloneqq \bigg\{ \pi \in \Delta(K): \sum_{j \in \gA(i)}{ \pi(j) } = 1 \bigg\}.
\end{align}
We make the following two claims.
\begin{claim}
\label{cl:approaching_generalized_one_hot_policy}
Almost surely, $\pi_{\theta_t}$ approaches one ``generalized one-hot policy'', i.e., 
there exists (a possibly random) $i \in [K]$, such that $\sum_{ j \in \gA(i) }{\pi_{\theta_t}(j) } \to 1$ almost surely as $t \to \infty$.
\end{claim}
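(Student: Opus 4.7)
The plan is to combine the per-step progress bound of \cref{lem:constant_learning_rates_expected_progress_sampled_reward} with the almost-sure convergence of $\{\pi_{\theta_t}^\top r\}_{t\ge 1}$ from \cref{cor:almost_sure_convergence_gradient_bandit_algorithms_sampled_reward}, and then read off the limiting structure of $\pi_{\theta_t}$ from the variance-like expression for the true gradient norm in \cref{eq:softmax_policy_gradient_norm_squared}.

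First, let $g_t \coloneqq \frac{d \pi_{\theta_t}^\top r}{d \theta_t}$. Taking expectations in \cref{lem:constant_learning_rates_expected_progress_sampled_reward} and telescoping yields
\begin{align*}
    \frac{\Delta^2}{80 \cdot K^{3/2} \cdot R_{\max}^3} \sum_{t=1}^{T} \EE\!\left[\|g_t\|_2^2\right] \le \EE[\pi_{\theta_{T+1}}^\top r] - \EE[\pi_{\theta_1}^\top r] \le 2 R_{\max},
\end{align*}
for every $T \ge 1$. Letting $T \to \infty$ and applying the monotone convergence theorem (or Fubini--Tonelli) gives $\EE\!\left[\sum_{t=1}^\infty \|g_t\|_2^2\right] < \infty$, so $\sum_{t=1}^\infty \|g_t\|_2^2 < \infty$ almost surely and in particular $\|g_t\|_2 \to 0$ almost surely.

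Next, \cref{cor:almost_sure_convergence_gradient_bandit_algorithms_sampled_reward} provides a random limit $V \in [-R_{\max}, R_{\max}]$ such that $\pi_{\theta_t}^\top r \to V$ almost surely. Working on the probability-one event on which both convergences hold, \cref{eq:softmax_policy_gradient_norm_squared} gives
\begin{align*}
    \sum_{a \in [K]} \pi_{\theta_t}(a)^2 \cdot \left( r(a) - \pi_{\theta_t}^\top r \right)^2 = \|g_t\|_2^2 \longrightarrow 0,
\end{align*}
so for every $a \in [K]$, $\pi_{\theta_t}(a) \cdot | r(a) - \pi_{\theta_t}^\top r | \to 0$. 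For any $a$ with $r(a) \ne V$, the second factor is bounded away from $0$ for all large $t$, forcing $\pi_{\theta_t}(a) \to 0$. If $V \notin \{r(1),\dots,r(K)\}$ this would force $\pi_{\theta_t}(a) \to 0$ for every $a$, contradicting $\sum_a \pi_{\theta_t}(a) = 1$; hence $V = r(i)$ for some (random) $i \in [K]$. Using $\gA(i)$ from \cref{eq:asymp_global_converg_gradient_bandit_sampled_reward_intermediate_1_a}, we conclude
\begin{align*}
    \sum_{j \in \gA(i)} \pi_{\theta_t}(j) = 1 - \sum_{a \notin \gA(i)} \pi_{\theta_t}(a) \longrightarrow 1 \quad \text{almost surely.}
\end{align*}

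The only nontrivial step is justifying $\|g_t\|_2 \to 0$ almost surely rather than merely in expectation, but this is immediate from the summability in expectation and Fubini. All other steps are simple consequences of the already-established smoothness/landscape lemmas, so I do not anticipate a significant obstacle specific to this claim.
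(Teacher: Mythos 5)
Your proof is correct, and the second half (reading off from \cref{eq:softmax_policy_gradient_norm_squared} that each $\pi_{\theta_t}(a)^2(r(a)-\pi_{\theta_t}^\top r)^2\to 0$, deducing that the limit $V$ of $\pi_{\theta_t}^\top r$ must equal some $r(i)$ by contradiction with $\sum_a\pi_{\theta_t}(a)=1$, and concluding $\sum_{j\in\gA(i)}\pi_{\theta_t}(j)\to 1$) is essentially identical to the paper's. Where you differ is in how you establish the key fact that $\big\|\tfrac{d\pi_{\theta_t}^\top r}{d\theta_t}\big\|_2\to 0$ \emph{almost surely} rather than merely in expectation. The paper gets this by invoking an external almost-sure martingale result (Corollary 3 of \citet{mei2022role}) to show $\lim_t \EEt{\pi_{\theta_{t+1}}^\top r}-\pi_{\theta_{t+1}}^\top r=0$ a.s., which combined with \cref{cor:almost_sure_convergence_gradient_bandit_algorithms_sampled_reward} forces the conditional one-step progress, and hence $\|g_t\|_2^2$ via \cref{lem:constant_learning_rates_expected_progress_sampled_reward}, to vanish a.s. You instead telescope the expected progress bound to get $\sum_t\EE[\|g_t\|_2^2]<\infty$, pass to $\sum_t\|g_t\|_2^2<\infty$ a.s. by Tonelli, and conclude $\|g_t\|_2\to 0$ a.s. from summability. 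Your route is more elementary and self-contained (no dependence on the cited corollary), and it yields the slightly stronger a.s. summability of the squared gradient norms; the paper's route additionally delivers the a.s. vanishing of the conditional increment $\EEt{\pi_{\theta_{t+1}}^\top r}-\pi_{\theta_t}^\top r$ itself, which is not needed for this claim but is of the same flavor as machinery used elsewhere in their argument. Either way the claim follows; there is no gap in your argument.
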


\begin{claim}
\label{cl:contradiction_approaching_sub_optimal_generalized_one_hot_policy}
Almost surely, $\pi_{\theta_t}$ cannot approach any ``sub-optimal generalized one-hot policies'', 
i.e., $i$ in the previous claim must be an optimal action. 
\end{claim}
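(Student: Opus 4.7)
The plan is to proceed by contradiction, following the skeleton laid out just before the theorem. I would split the argument into the two claims stated at the end of the excerpt. For \cref{cl:approaching_generalized_one_hot_policy}, combine Lemma~\ref{lem:constant_learning_rates_expected_progress_sampled_reward} with Corollary~\ref{cor:almost_sure_convergence_gradient_bandit_algorithms_sampled_reward} and a martingale concentration inequality (the role of the added \cref{thm:conc_new}) to show that $\sum_{t\ge 1}\bigl\|\tfrac{d\pi_{\theta_t}^\top r}{d\theta_t}\bigr\|_2^{2}<\infty$ almost surely, and hence $\bigl\|\tfrac{d\pi_{\theta_t}^\top r}{d\theta_t}\bigr\|_2\to 0$ almost surely. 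Invoking the variance-like identity \eqref{eq:softmax_policy_gradient_norm_squared}, every action $a$ whose reward differs from the (almost sure) limit of $\pi_{\theta_t}^\top r$ must satisfy $\pi_{\theta_t}(a)\to 0$; under \cref{assp:reward_no_ties} this forces $\pi_{\theta_t}\to e_i$ for some random $i\in[K]$.

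For \cref{cl:contradiction_approaching_sub_optimal_generalized_one_hot_policy}, assume toward contradiction that $r(i)<r(a^*)$ and pick any $a^+\in\gA^+(i)$, say $a^+=a^*$. By \eqref{eq:asymp_global_converg_gradient_bandit_sampled_reward_intuition_1}, once $t$ is large enough that $\pi_{\theta_t}^\top r<\tfrac{r(i)+r(a^+)}{2}$, the process $\{\theta_t(a^+)\}$ is a sub-martingale with respect to $\cF_t=\sigma(a_1,R_1(a_1),\dots,a_{t-1},R_{t-1}(a_{t-1}))$. The heart of the proof is then to establish the two almost sure quantitative bounds \eqref{eq:asymp_global_converg_gradient_bandit_sampled_reward_intuition_6a} and \eqref{eq:asymp_global_converg_gradient_bandit_sampled_reward_intuition_6b}. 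I would treat both uniformly by Doob-decomposing each parameter process as $\theta_t(a)=\theta_{T_0}(a)+\sum_{s=T_0}^{t-1}D_s(a)+\sum_{s=T_0}^{t-1}M_s(a)$, where $D_s(a)$ is the conditional drift from \eqref{eq:asymp_global_converg_gradient_bandit_sampled_reward_intuition_1} and $M_s(a)$ is a bounded martingale difference. For $a=a^+$, drifts are non-negative, so a lower bound on $\theta_t(a^+)$ reduces to an almost sure lower bound on the martingale tail. For $a=i$, one must show additionally that the positive part of the drift is almost surely summable, i.e.\ $\sum_s \pi_{\theta_s}(i)\bigl(r(i)-\pi_{\theta_s}^\top r\bigr)_+<\infty$ a.s., which will follow from the almost sure convergence of $\pi_{\theta_t}^\top r$ and the summability of $\bigl\|\tfrac{d\pi_{\theta_t}^\top r}{d\theta_t}\bigr\|_2^{2}$ established in the first step.

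The decisive ingredient is a martingale concentration argument that turns the sub-martingale into an almost surely convergent series rather than a random walk. Bounded increments alone are not enough, since as the footnote of the excerpt notes an unbounded sub-martingale can have $\liminf=-\infty$. The rescue is the self-bounding noise property of \cref{lem:strong_growth_conditions_sampled_reward}, which together with $\bigl\|\tfrac{d\pi_{\theta_t}^\top r}{d\theta_t}\bigr\|_2\to 0$ almost surely implies the summed conditional variances $\sum_s \mathbb{E}[M_s(a)^2\mid\cF_s]$ are almost surely finite. A Bernstein-type martingale inequality (the content of \cref{thm:conc_new}) then gives almost sure convergence of $\sum_s M_s(a)$, producing the constants $c_1,c_2$ in \eqref{eq:asymp_global_converg_gradient_bandit_sampled_reward_intuition_6a}--\eqref{eq:asymp_global_converg_gradient_bandit_sampled_reward_intuition_6b}. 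This is precisely the step where the earlier version of the proof was flawed, so I expect this to be the main obstacle and would be careful to verify that the concentration inequality applies to the non-uniformly bounded quadratic variation.

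Given the two bounds, the contradiction is immediate:
\[
\pi_{\theta_t}(i)\;\le\;\frac{e^{\theta_t(i)}}{e^{\theta_t(i)}+e^{\theta_t(a^+)}}\;\le\;\frac{e^{c_2}}{e^{c_2}+e^{c_1}}\;<\;1,
\]
contradicting $\pi_{\theta_t}(i)\to 1$. Hence the limit action must be $a^*$, giving $\pi_{\theta_t}(a^*)\to 1$ almost surely. The last clause $\inf_{t\ge 1}\pi_{\theta_t}(a^*)>0$ then follows because $\pi_{\theta_t}(a^*)$ is strictly positive for every finite $t$ (softmax output) and converges to $1$, so the infimum is attained on some finite prefix and is therefore strictly positive almost surely.
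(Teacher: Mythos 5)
Your skeleton matches the paper's: proof by contradiction, a drift-plus-martingale (Doob) decomposition of each $\theta_t(a)$, the two bounds \cref{eq:asymp_global_converg_gradient_bandit_sampled_reward_intuition_6a,eq:asymp_global_converg_gradient_bandit_sampled_reward_intuition_6b}, and the final softmax-ratio contradiction. But the step you identify as decisive is carried out incorrectly, and the gap is exactly at the point you flag as the main obstacle. You claim that $\|\tfrac{d\pi_{\theta_t}^\top r}{d\theta_t}\|_2\to 0$ plus the self-bounding property of \cref{lem:strong_growth_conditions_sampled_reward} makes the summed conditional variances $\sum_s\mathbb{E}[M_s(a)^2\mid\cF_s]$ almost surely finite, so that $\sum_s M_s(a)$ converges. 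This fails in the essential sub-case. The conditional variance of the increment for action $a$ is of order $\eta^2 R_{\max}^2\,\pi_{\theta_s}(a)(1-\pi_{\theta_s}(a))$, and when some good action $a^+$ is sampled infinitely often, the extended Borel--Cantelli lemma (\cref{lem:ebc}) forces $\sum_s\pi_{\theta_s}(a^+)=\infty$ while $1-\pi_{\theta_s}(a^+)\to 1$ (since $\pi_{\theta_s}(i)\to1$), so the quadratic variation diverges and the martingale part need not converge. The strong growth condition cannot rescue this: it bounds $\mathbb{E}_t\|\hat\nabla_t\|_2^2$ by the \emph{first} power $\|\nabla_t\|_2$, and only $\sum_t\mathbb{E}\|\nabla_t\|_2^2<\infty$ is available; with the $\Theta(1/t)$ rate one expects $\sum_t\|\nabla_t\|_2=\infty$. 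The paper's actual argument does not make the martingale convergent at all: it uses the Freedman-type bound of \cref{thm:conc_new} to show the cumulative noise is $O\bigl(\sqrt{V_t\log(V_t/\delta)}\bigr)$ with $V_t\lesssim\sum_s\pi_{\theta_s}(a^+)$, and then observes that the cumulative drift is $\Omega\bigl(\sum_s\pi_{\theta_s}(a^+)\bigr)$, so drift dominates noise whether the series converges or diverges; that growth-rate comparison is what yields $c_1>-\infty$.

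The companion bound $\sup_t\theta_t(i)\le c_2<\infty$ has the same problem in your plan: summability of the positive drift $\sum_s\pi_{\theta_s}(i)\bigl(r(i)-\pi_{\theta_s}^\top r\bigr)_+$ does not follow from the convergence of $\pi_{\theta_t}^\top r$ together with $\sum_t\|\nabla_t\|_2^2<\infty$, because the positive part is governed by $\sum_s\sum_{a^-}\pi_{\theta_s}(a^-)$ (first powers again), which need not be finite. The paper instead splits into the case where all good actions are sampled finitely often (there, $\sum_s\pi_{\theta_s}(a^+)<\infty$ by Borel--Cantelli, which in turn controls $\sum_s\pi_{\theta_s}(a^-)$ and the drift of $\theta_t(i)$) and the case where some good action is sampled infinitely often (there, $\theta_t(a^+)\to\infty$, the drift of $\theta_t(i)$ becomes strictly negative and its magnitude, again compared via \cref{thm:conc_new}, outruns the noise). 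Without that case analysis and the drift-versus-$\sqrt{V\log V}$ comparison, the two bounds you need are not established.
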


From \cref{cl:contradiction_approaching_sub_optimal_generalized_one_hot_policy}, it follows that $\sum_{ j \in \gA(a^*) }{\pi_{\theta_t}(j) } \to 1$ almost surely, as $t \to \infty$ and thus the policy sequence obtained
almost surely convergences to a globally optimal policy $\pi^*$.

\textbf{Proof of \cref{cl:approaching_generalized_one_hot_policy}}. 
According to \cref{cor:almost_sure_convergence_gradient_bandit_algorithms_sampled_reward}, we have that for some (possibly random) $c \in [- R_{\max}, R_{\max}]$, almost surely,
\begin{align}
\label{eq:asymp_global_converg_gradient_bandit_sampled_reward_claim_1_intermediate_1}
    \lim_{t \to \infty}{ \pi_{\theta_t}^\top r} = c\,.
\end{align}
Thanks to $\pi_{\theta_t}^\top r \in [-R_{\max},R_{\max}]$ and
$\pi_{\theta_t}^\top r - \EEt{ \pi_{\theta_{t+1}}^\top r }\le 0 $ by \cref{lem:constant_learning_rates_expected_progress_sampled_reward},
 we have that
$X_t = \pi_{\theta_t}^\top r$ ($t\ge 1$) satisfies the conditions of
Corollary 3 in \citep{mei2022role}. Hence, by this result, 
almost surely, 
\begin{align}
\label{eq:asymp_global_converg_gradient_bandit_sampled_reward_claim_1_intermediate_2}
    \lim_{t \to \infty}\,\,{  \EEt{\pi_{\theta_{t+1}}^\top r} - \pi_{\theta_{t+1}}^\top r } & = 0\,,
\end{align}
which,
combined with 
\cref{eq:asymp_global_converg_gradient_bandit_sampled_reward_claim_1_intermediate_1}
 also gives that $\lim_{t\to\infty} \EEt{\pi_{\theta_{t+1}}^\top r} = c$ almost surely.
Hence, 
\begin{align}
\label{eq:asymp_global_converg_gradient_bandit_sampled_reward_claim_1_intermediate_3}
    \lim_{t \to \infty} \,\, \EEt{\pi_{\theta_{t+1}}^\top r} - \pi_{\theta_{t}}^\top r  & = c-c = 0,\, 
    \qquad \text{a.s.}
\end{align}
According to \cref{lem:constant_learning_rates_expected_progress_sampled_reward}, we have,
\begin{align}
\label{eq:asymp_global_converg_gradient_bandit_sampled_reward_claim_1_intermediate_4}
    \EEt{ \pi_{\theta_{t+1}}^\top r }  - \pi_{\theta_t}^\top r 
    &\ge \frac{\Delta^2}{80 \cdot K^{3/2} \cdot R_{\max}^3 } \cdot \bigg\| \frac{d \pi_{\theta_t}^\top r}{d \theta_t} \bigg\|_2^2 \\
    &= \frac{\Delta^2}{80 \cdot K^{3/2} \cdot R_{\max}^3 } \cdot \sum_{i = 1}^{K} \pi_{\theta_t}(i)^2 \cdot \left( r(i) - \pi_{\theta_t}^\top r \right)^2. \qquad \left(\text{by \cref{eq:softmax_policy_gradient_norm_squared}}\right)
\end{align}
Combining \cref{eq:asymp_global_converg_gradient_bandit_sampled_reward_claim_1_intermediate_3,eq:asymp_global_converg_gradient_bandit_sampled_reward_claim_1_intermediate_4}, we have, with probability $1$, 
\begin{align}
\label{eq:asymp_global_converg_gradient_bandit_sampled_reward_claim_1_intermediate_5}
    \lim_{t \to \infty}{  \sum_{i = 1}^{K} \pi_{\theta_t}(i)^2 \cdot \left( r(i) - \pi_{\theta_t}^\top r \right)^2 } = 0, 
\end{align}
which implies that, for all $i \in [K]$, almost surely,
\begin{align}
\label{eq:asymp_global_converg_gradient_bandit_sampled_reward_claim_1_intermediate_6}
    \lim_{t \to \infty}{ \pi_{\theta_t}(i)^2 \cdot \left( r(i) - \pi_{\theta_t}^\top r \right)^2 } = 0.
\end{align}

We claim that $c$, the almost sure limit of $\pi_{\theta_t}^\top r$, is such that almost surely, for some (possibly random)
 $i\in [K]$, $c= r(i)$ almost surely. 
We prove this by contradiction.
Let $\mathcal{E}_i = \{ c = r(i) \}$. 
Hence, our goal is to show that $\mathbb{P}( \cup_i \mathcal{E}_i ) = 1$.
Clearly, this follows from $\mathbb{P}( \cap_i \mathcal{E}_i^c ) = 0$, hence, we prove this.
On $\mathcal{E}_i^c$, since $\lim_{t\to\infty} \pi_{\theta_t}^\top r \ne r(i)$, 
we also have
\begin{align}
\label{eq:asymp_global_converg_gradient_bandit_sampled_reward_claim_1_intermediate_7}
    \lim_{t \to \infty}{ \left( r(i) - \pi_{\theta_t}^\top r \right)^2 } > 0, \quad \text{ almost surely on } \mathcal{E}_i^c\,.
\end{align}
This, together with \cref{eq:asymp_global_converg_gradient_bandit_sampled_reward_claim_1_intermediate_6} gives that almost surely on $\mathcal{E}_i^c$,
\begin{align}
\label{eq:asymp_global_converg_gradient_bandit_sampled_reward_claim_1_intermediate_8}
    \lim_{t \to \infty}{ \pi_{\theta_t}(i)^2 } = 0.
\end{align}
Hence, on $\cap_i \mathcal{E}_i^c$, almost surely, for all $i\in [K]$,
$\lim_{t \to \infty}{ \pi_{\theta_t}(i)^2 } = 0$. This contradicts
with that $\sum_i \pi_{\theta_t}(i)=1$ holds for all $t\ge 1$, and hence we must have that 
$\mathbb{P}(\cap_i \mathcal{E}_i^c)=0$, finishing the proof that 
$\mathbb{P}(\cup_i \mathcal{E}_i)=1$.

Now, let $i\in [K]$ be the (possibly random) index of the action for which $c=r(i)$ almost surely.
Recall that $\gA(i)$ contains all actions $j$ with $r(j)=r(i)$
(cf. \cref{eq:asymp_global_converg_gradient_bandit_sampled_reward_intermediate_1_a}).
Clearly, it holds that 
%
for all $j \in \gA(i)$,
\begin{align}
\label{eq:non_vanishing_nl_coefficient_stochastic_npg_value_baseline_special_claim_1_intermediate_9}
    \lim_{t \to \infty}{ \pi_{\theta_t}^\top r } = r(j), \qquad \text{a.s.},
\end{align}
and we have, for all $k \not\in \gA(i)$,
\begin{align}
\label{eq:non_vanishing_nl_coefficient_stochastic_npg_value_baseline_special_claim_1_intermediate_10}
    \lim_{t \to \infty}{ \left( r(k) - \pi_{\theta_t}^\top r \right)^2 } > 0, \qquad \text{a.s.},
\end{align}
which implies that,
\begin{align}
\label{eq:non_vanishing_nl_coefficient_stochastic_npg_value_baseline_special_claim_1_intermediate_11}
    \lim_{t \to \infty}{ \sum_{k \not\in \gA(i)} \pi_{\theta_t}(k)^2 } = 0, \qquad \text{a.s.}
\end{align}
Therefore, we have,
\begin{align}
\label{eq:non_vanishing_nl_coefficient_stochastic_npg_value_baseline_special_claim_1_intermediate_12}
    \lim_{t \to \infty}{ \sum_{j \in \gA(i)} \pi_{\theta_t}(j) } = 1, \qquad \text{a.s.},
\end{align}
which means $\pi_{\theta_t}$ a.s. approaches the ``generalized one-hot policy'' $\gP(i)$ in \cref{eq:asymp_global_converg_gradient_bandit_sampled_reward_intermediate_1_b} as $t \to \infty$, finishing the proof of the first claim.

\textbf{Proof of \cref{cl:contradiction_approaching_sub_optimal_generalized_one_hot_policy}}.
Recall that this claim stated that  
$\lim_{t \to \infty} \sum_{ j \in \gA(a^*) }{\pi_{\theta_t}(j) } = 1$.
The brief sketch of the proof is as follows:
By \cref{cl:approaching_generalized_one_hot_policy},  there exists a (possibly random) $i\in [K]$ such that 
$\sum_{ j \in \gA(i) }{\pi_{\theta_t}(j) } \to 1$ almost surely, as $t \to \infty$.
If $i=a^*$ almost surely, \cref{cl:contradiction_approaching_sub_optimal_generalized_one_hot_policy} follows. 
Hence, it suffices to consider the event that $\{ i\not = a^* \}$ and show that this event has zero probability mass. Hence,  in the rest of the proof we assume that we are on the event when $i\not =a^*$.

Since $i \not= a^*$, there exists at least one ``good'' action $a^+ \in [K]$ such that $r(a^+) > r(i)$. The two cases are as follows.
\begin{description}
    \item[2a)] All ``good'' actions are sampled finitely many times as $t \to \infty$. \label{cl:contradiction_approaching_sub_optimal_generalized_one_hot_policy:a}
    \item[2b)] At least one ``good'' action is sampled infinitely many times as $t \to \infty$.
\end{description}
In both cases, we show that $\sum_{ j \in \gA(i) }{ \exp\{ \theta_t(j) \} } < \infty$ as $t \to \infty$ (but for different reasons), \textcolor{red}{which is a contradiction with the assumption of $\sum_{ j \in \gA(i) }{\pi_{\theta_t}(j) } \to 1$ as $t \to \infty$}, given that a ``good'' action's parameter is almost surely lower bounded. Hence, $i\ne a^*$ almost surely does not happen, which means that almost surely $i=a^*$.
Let us now turn to the details of the proof. We start with 
 some useful extra notation.
For each action $a \in [K]$, for $t \ge 2$, we have the following decomposition,
\begin{align}
\label{eq:non_vanishing_nl_coefficient_stochastic_npg_value_baseline_special_claim_2_intermediate_1}
    \theta_{t}(a) = \underbrace{ \theta_{t}(a) - \chE_{t-1}{[ \theta_t(a)]} }_{ W_t(a) } + \underbrace { \chE_{t-1}{[ \theta_t(a)]} - \theta_{t-1}(a) }_{ P_{t-1}(a) } + \theta_{t-1}(a),
\end{align}
while we also have,
\begin{align}
\label{eq:non_vanishing_nl_coefficient_stochastic_npg_value_baseline_special_claim_2_intermediate_2}
    \theta_1(a) = \underbrace{ \theta_{1}(a) - \EE{\theta_{1}(a)} }_{ W_1(a) } + \EE{\theta_{1}(a)},
\end{align}
where $\EE{\theta_{1}(a)}$ accounts for possible randomness in initialization of $\theta_1$.

Define the following notations,
\begin{align}
\label{eq:non_vanishing_nl_coefficient_stochastic_npg_value_baseline_special_claim_2_intermediate_3a}
    Z_t(a) &\coloneqq W_1(a) + \cdots + W_t(a), \qquad \left( \text{``cumulative noise''} \right) \\
\label{eq:non_vanishing_nl_coefficient_stochastic_npg_value_baseline_special_claim_2_intermediate_3b}
    W_t(a) &\coloneqq \theta_t(a) - \chE_{t-1}{[ \theta_t(a)]}, \qquad \left( \text{``noise''} \right) \\
\label{eq:non_vanishing_nl_coefficient_stochastic_npg_value_baseline_special_claim_2_intermediate_3c}
    P_t(a) &\coloneqq \EEt{\theta_{t+1}(a)} - \theta_t(a). \qquad \left( \text{``progress''} \right)
\end{align}
Recursing \cref{eq:non_vanishing_nl_coefficient_stochastic_npg_value_baseline_special_claim_2_intermediate_1} gives,
\begin{align}
\label{eq:non_vanishing_nl_coefficient_stochastic_npg_value_baseline_special_claim_2_intermediate_4}
    \theta_t(a) = \EE{\theta_1(a)} + Z_t(a) + \underbrace{ P_1(a) + \cdots + P_{t-1}(a)}_{\text{``cumulative progress''}}.
\end{align}
Let
\begin{align}
\label{eq:non_vanishing_nl_coefficient_stochastic_npg_value_baseline_special_claim_2_intermediate_5}
    I_t(a) = \begin{cases}
		1, & \text{if } a_t = a\, , \\
		0, & \text{otherwise}\,.
	\end{cases}
\end{align}
The update rule (cf. \cref{alg:gradient_bandit_algorithm_sampled_reward}) is,
\begin{align}
\label{eq:non_vanishing_nl_coefficient_stochastic_npg_value_baseline_special_claim_2_intermediate_6}
    \theta_{t+1}(a) = \theta_{t}(a) + \eta \cdot \pi_{\theta_t}(a) \cdot \left( \frac{ I_t(a) }{ \pi_{\theta_t}(a) } \cdot R_t(a) - R_t(a_t) \right),
\end{align}
where $a_t \sim \pi_{\theta_t}(\cdot)$, and $x_t(a) \sim P_a$. Let $\gF_t$ be the $\sigma$-algebra generated by $a_1$, $x_1(a_1)$, $\cdots$, $a_{t-1}$, $x_{t-1}(a_{t-1})$:
\begin{align}
\cF_t = \sigma( \{ 
 a_1, R_1(a_1), \cdots, a_{t-1}, R_{t-1}(a_{t-1}) \} )\,.
\end{align}
Note that $\theta_{t},I_t$ are $\gF_t$-measurable and $\hat{x}_t$ is $\gF_{t+1}$-measurable for all $t\ge 1$. Let $\EEt{\cdot}$ denote the conditional expectation with respect to $\gF_t$: $\mathbb{E}_t[X] = \mathbb{E}[X|\gF_t]$. We have,
\begin{align}
    \EEt{W_{t+1}(a)}=0, \qquad \text{for } t=0, 1, \dots
\end{align}
Using the above notations, we have,
\begin{align}
\MoveEqLeft
\label{eq:non_vanishing_nl_coefficient_stochastic_npg_value_baseline_special_claim_2_intermediate_7}
    W_{t+1}(a) = \theta_{t+1}(a) - \EEt{\theta_{t+1}(a)} \\
    &=  \theta_{t}(a) + \eta \cdot \left( I_t(a) - \pi_{\theta_t}(a) \right) \cdot R_t(a_t)   - \left( \theta_{t}(a) + \eta \cdot \pi_{\theta_t}(a) \cdot \left( r(a) - \pi_{\theta_t}^\top r \right) \right) \\
    &= \eta \cdot \left( I_t(a) - \pi_{\theta_t}(a) \right) \cdot R_t(a_t)   - \eta \cdot \pi_{\theta_t}(a) \cdot \left( r(a) - \pi_{\theta_t}^\top r \right),
\end{align}
which implies that,
\begin{align}
\label{eq:non_vanishing_nl_coefficient_stochastic_npg_value_baseline_special_claim_2_intermediate_8}
    Z_t(a) &= W_1(a) + \cdots + W_t(a)
    \\&=\sum_{s=1}^t  \eta \cdot \left( I_s(a) - \pi_{\theta_s}(a) \right) \cdot R_s(a_s)   - \eta \cdot \pi_{\theta_s}(a) \cdot \left( r(a) - \pi_{\theta_s}^\top r \right).
\end{align}
We also have,
\begin{align}
\label{eq:non_vanishing_nl_coefficient_stochastic_npg_value_baseline_special_claim_2_intermediate_9}
    P_t(a) &= \EEt{\theta_{t+1}(a)} - \theta_t(a) 
    = \eta \cdot \pi_{\theta_t}(a) \cdot \left(  r(a) - \pi_{\theta_t}^\top r \right).
\end{align}
%
We observe that $|W_{t+1}(a)| \le 3 \, \eta \cdot R_{\max}$, and, 
\begin{align}
\label{eq:non_vanishing_nl_coefficient_stochastic_npg_value_baseline_special_claim_2_intermediate_10}
    \mathrm{Var}[ W_{t+1}(a) | \cF_t ] &\coloneqq \bbE_t [( W_{t+1}(a) )^2 ]  \\
    &\le 2 \ \eta^2 \cdot \bbE_t \left[ \left( I_t(a) - \pi_{\theta_t}(a) \right)^2 \cdot R_t(a_t)^2 \right] + 2 \ \eta^2 \cdot \pi_{\theta_t}(a)^2 \cdot \left( r(a) - \pi_{\theta_t}^\top r \right)^2,
\end{align}
where the inequality is by $(a+b)^2 \le 2 \ a^2 + 2 \ b^2$. Next, we have,
\begin{align}
\label{eq:non_vanishing_nl_coefficient_stochastic_npg_value_baseline_special_claim_2_intermediate_11}
    \bbE_t \left[ \left( I_t(a) - \pi_{\theta_t}(a) \right)^2 \cdot R_t(a_t)^2 \right] &= \pi_t(a) \cdot ( 1 - \pi_t(a) )^2 \cdot r(a)^2 + \sum_{a'\ne a} \pi_{\theta_t}(a^\prime) \cdot \pi_t(a)^2 \cdot r(a^\prime)^2 \\
    &\le R_{\max}^2 \cdot \Big( \pi_t(a) \cdot ( 1 - \pi_t(a) )^2 + ( 1 - \pi_t(a) ) \cdot \pi_t(a)^2  \Big) \\
    &= R_{\max}^2 \cdot  \pi_{\theta_t}(a) \cdot ( 1 - \pi_{\theta_t}(a) ),
\end{align}
and,
\begin{align}
\label{eq:non_vanishing_nl_coefficient_stochastic_npg_value_baseline_special_claim_2_intermediate_12}
    \left| r(a) - \pi_{\theta_t}^\top r  \right|  
    &= \bigg| \sum_{a^\prime \ne a} \pi_{\theta_t}(a^\prime) \cdot \left( r(a)-r(a^\prime) \right) \bigg| \\ 
    &\le \sum_{a^\prime \ne a} \pi_{\theta_t}(a^\prime) \cdot \left|  r(a)-r(a^\prime)  \right| 
    \\
    &\le 2 \ R_{\max} \cdot \sum_{a^\prime \ne a} \pi_{\theta_t}(a^\prime) \\
    &= 2 \ R_{\max} \cdot \left( 1-\pi_{\theta_t}(a) \right).
\end{align}
Combining \cref{eq:non_vanishing_nl_coefficient_stochastic_npg_value_baseline_special_claim_2_intermediate_10,eq:non_vanishing_nl_coefficient_stochastic_npg_value_baseline_special_claim_2_intermediate_11,eq:non_vanishing_nl_coefficient_stochastic_npg_value_baseline_special_claim_2_intermediate_12}, we have,
\begin{align}
\label{eq:non_vanishing_nl_coefficient_stochastic_npg_value_baseline_special_claim_2_intermediate_13}
    \mathrm{Var}[ W_{t+1}(a) | \cF_t ]
    &\le 2 \ \eta^2 \cdot R_{\max}^2 \cdot \pi_{\theta_t}(a) \cdot (1-\pi_{\theta_t}(a)) + 8 \ \eta^2 \cdot R_{\max}^2 \cdot \pi_{\theta_t}(a)^2 \cdot ( 1-\pi_{\theta_t}(a) )^2 \\
    &\le 10 \ \eta^2 \cdot R_{\max}^2 \cdot \pi_{\theta_t}(a) \cdot (1-\pi_{\theta_t}(a)).
\end{align}
Let $X_{t+1}(a) \coloneqq \frac{ W_{t+1}(a) }{  6 \ \eta \cdot R_{\max} }$. Then we have, $|X_{t+1}(a)| \le 1/2$ and $\mathrm{Var}[ X_{t+1}(a) | \cF_t ] \le \frac{5}{18} \cdot \pi_{\theta_t}(a) \cdot (1-\pi_{\theta_t}(a))$.
According to \cref{thm:conc_new}, there exists event $\calE_1 $ such that $\bbP(\calE_1 ) \ge 1- \delta$, and when $\calE_1 $ holds, 
\begin{align}
\label{eq:non_vanishing_nl_coefficient_stochastic_npg_value_baseline_special_claim_2_intermediate_14}
    &  \forall t:\quad \left| \sum_{s=1}^t X_{s+1}(a) \right| \le  6 \ \sqrt{  (V_t(a)+4/3)\log \left( \frac{ V_t(a)+1  }{ \delta } \right) } + 2 \ \log(1/\delta)   + \frac{4}{3} \log 3,
\end{align}
which implies that, 
\begin{align}
\MoveEqLeft
    \forall t:\quad \left| Z_{t+1}(a) \right| = \left| \sum_{s=1}^t W_{s+1}(a) \right| \\
    &\le  36 \ \eta \ R_{\max} \ \sqrt{  (V_t(a)+4/3)\log \left( \frac{  V_t(a)+1  }{ \delta } \right) } + 12 \ \eta \ R_{\max} \ \log(1/\delta) +  8 \ \eta \ R_{\max}\log 3,
    \label{eq:global_conv_apply_conc_1}
\end{align}
where $V_t(a) \coloneqq \frac{5}{18} \cdot \sum_{s=1}^t  \pi_{\theta_s}(a)\cdot (1-\pi_{\theta_s}(a))$.

Recall that $i$ is the index of the (random) action $I\in [K]$  with 
\begin{align}
\label{eq:non_vanishing_nl_coefficient_stochastic_npg_value_baseline_special_claim_2_intermediate_15}
    \lim_{t \to \infty} \sum_{ j \in \gA(I) }{\pi_{\theta_t}(j) } = 1, \qquad \text{a.s.}
\end{align}
As noted earlier we consider the event $\{ I \ne a^* \}$, where $a^*$ is the index of an optimal action and we will show that this event has zero probability.
Since $\{ I \ne a^* \} = \cup_{i\in [K]} \{ I=i, i\ne a^* \}$, it suffices to show that for any fixed $i\in [K]$ index with $r(i)<r(a^*)$, $\{ I=i, i\ne a^* \}$ has zero probability.
Hence, in what follows we fix such a suboptimal action's index $i\in [K]$ and consider the event $\{I=i,i\ne a^*\}$.

Partition the action set $[K]$ into three parts using $r(i)$ as follows,
\begin{align}
\label{eq:non_vanishing_nl_coefficient_stochastic_npg_value_baseline_special_claim_2_intermediate_16a}
    \gA(i) &\coloneqq \left\{ j \in [K]: r(j) = r(i) \right\}, \qquad \left( \text{from \cref{eq:asymp_global_converg_gradient_bandit_sampled_reward_intermediate_1_a}} \right) \\
\label{eq:non_vanishing_nl_coefficient_stochastic_npg_value_baseline_special_claim_2_intermediate_16b}
    \gA^+(i) &\coloneqq \left\{ a^+ \in [K]: r(a^+) > r(i) \right\}, \\
\label{eq:non_vanishing_nl_coefficient_stochastic_npg_value_baseline_special_claim_2_intermediate_16c}
    \gA^-(i) &\coloneqq \left\{ a^- \in [K]: r(a^-) < r(i) \right\}.
\end{align}
Because $i$ was the index of a sub-optimal action, we have $\gA^+(i)\ne \emptyset$.
According to \cref{eq:non_vanishing_nl_coefficient_stochastic_npg_value_baseline_special_claim_2_intermediate_15}, 
on $\{I=i\} \supset \{I=i,i\ne a^* \}$,
we have $\pi_{\theta_t}^\top r \to r(i)$ as $t \to \infty$ because 
\begin{align}
\label{eq:non_vanishing_nl_coefficient_stochastic_npg_value_baseline_special_claim_2_intermediate_17}
    \left| r(i) - \pi_{\theta_t}^\top r \right| \le 2 \ R_{\max} \cdot \left( 1-\pi_{\theta_t}(i) \right). \qquad \left (\text{by \cref{eq:non_vanishing_nl_coefficient_stochastic_npg_value_baseline_special_claim_2_intermediate_12}} \right)
\end{align}
Therefore, there exists $\tau\ge 1$ such that
almost surely on $\{I = i,i\ne a^* \}$ 
 $\tau < \infty$ while we also have
\begin{align}
\label{eq:non_vanishing_nl_coefficient_stochastic_npg_value_baseline_special_claim_2_intermediate_18}
    r(a^+) - c' \ge \pi_{\theta_t}^\top r \ge r(a^-) + c', \qquad \text{for all } t \ge \tau,
\end{align}
for all $a^+ \in \gA^+(i)$, $a^- \in \gA^-(i)$, where $c' > 0$.
Hence, for all $t\ge \tau$, $a^+ \in \gA^+(i)$, $a^- \in \gA^-(i)$, we have, $P_t(a^+) > 0 > P_t(a^-)$, according to the definition of $P_t(a)$ in \cref{eq:non_vanishing_nl_coefficient_stochastic_npg_value_baseline_special_claim_2_intermediate_9}. We have, for all $a^+ \in \gA^+(i)$, and for all $t 
\ge 1$, 
\begin{align} 
\label{eq:non_vanishing_nl_coefficient_stochastic_npg_value_baseline_special_claim_2_intermediate_19}
    V_t(a^+) &\coloneqq \frac{5}{18} \cdot \sum_{s=1}^{t-1}  \pi_{\theta_s}(a^+) \cdot (1-\pi_{\theta_s}(a^+)) \le \frac{5}{18} \cdot \sum_{s=1}^{t-1}  \pi_{\theta_s}(a^+).
\end{align}
On the other hand, we have, for all $a^+ \in \gA^+(i)$, when $t > \tau$,
\begin{align}
\label{eq:non_vanishing_nl_coefficient_stochastic_npg_value_baseline_special_claim_2_intermediate_20}
    \sum_{s=\tau}^t P_s(a^+) &=  \sum_{s=\tau}^t \eta \cdot \pi_{\theta_s}(a^+) \cdot ( r(a^+) - \pi_{\theta_s}^\top r ) \ge  \eta \cdot  c' \cdot \sum_{s=\tau}^t   \pi_{\theta_s}(a^+).
\end{align}
%
Hence, when $\gE_1 $ holds, we have, for all $t \ge 1$,
\begin{align}
    \theta_{t}(a^+) &= \EE{\theta_1(a^+)} + Z_t(a^+) + P_1(a^+) + \cdots + P_{\tau-1}(a^+)  + P_{\tau}(a^+) + \cdots +  P_{t-1}(a^+)\qquad \big( \text{by \cref{eq:non_vanishing_nl_coefficient_stochastic_npg_value_baseline_special_claim_2_intermediate_4}} \big) \\
    & 
    \ge  \EE{\theta_1(a^+)}  - 
     \left\{
     36 \ \eta \ R_{\max} \ \sqrt{  (V_{t-1}(a^+)+4/3) \cdot \log \left( \frac{  V_{t-1}(a^+)+1  }{ \delta } \right) } + 12 \ \eta \ R_{\max} \ \log(1/\delta)
    \right\}  
    \\
    &\qquad + P_1(a^+) + \cdots + P_{\tau-1}(a^+)  + 
     P_{\tau}(a^+) + \cdots +  P_{t-1}(a^+)  
    \qquad \left( \text{by \cref{eq:global_conv_apply_conc_1}} \right)\\
    & 
    \ge  \EE{\theta_1(a^+)}  - \underbrace{ 
     \left\{
     36 \ \eta \ R_{\max} \ \sqrt{  \left(\frac{4}{3}+  \sum_{s=1}^{t-1} \pi_{\theta_s}(a^+) \right) \cdot \log \left( \frac{  1 +   \sum_{s=1}^{t-1} \pi_{\theta_s}(a^+)  }{ \delta } \right) } + 12 \ \eta \ R_{\max} \log(1/\delta) +  8 \ \eta \ R_{\max}\log 3
    \right\}
    }_{ (\spadesuit) }
    \\
    &\qquad + P_1(a^+) + \cdots + P_{\tau-1}(a^+)  + 
    \underbrace{ \eta \cdot  c' \cdot \sum_{s=\tau}^t   \pi_{\theta_s}(a^+) }_{ (\heartsuit) }. \qquad \left( \text{by \cref{eq:non_vanishing_nl_coefficient_stochastic_npg_value_baseline_special_claim_2_intermediate_19,eq:non_vanishing_nl_coefficient_stochastic_npg_value_baseline_special_claim_2_intermediate_20}} \right)
    \label{eq:global_conv_apply_conc_2}
\end{align}
If $\sum_{s=1}^\infty \pi_{\theta_s}(a^+) < \infty$, then $\theta_t(a^+)$ is always finite and $\inf_{t\ge 1} \theta_t(a^+) > -\infty$. If $\sum_{s=1}^\infty \pi_{\theta_s}(a^+) = \infty$, we have $(\heartsuit)$ goes to $\infty$ faster than $(\spadesuit)$, and also $\inf_{t\ge 1} \theta_t(a^+) > -\infty$.

Now take any $\omega \in \gE := \{ I = i, i \ne a^*, \cap_{a^+\in \calA^+(i) } N_\infty(a^+)=\infty \}$. Because $\PP{\left( \gE  \setminus \left( \gE  \cap \calE_{1}  \right) \right)} \le \PP{\left( \Omega \setminus \calE_{1}  \right)} \le \delta \to 0$  as $\delta\to 0$, we have that $\PP$-almost surely for all $\omega \in \gE $
there exists $\delta > 0$ such that $\omega \in \gE \cap \gE_1 $
while
\cref{eq:global_conv_apply_conc_2} also holds for this $\delta$.
Take such a $\delta$. By \cref{eq:global_conv_apply_conc_2},
\begin{align}
    \inf_{t\ge 1} \theta_t(a^+)(\omega) > -\infty.
\end{align}
Hence, almost surely on $\gE $,
\begin{align}
    c_1(a^+) &\coloneqq \inf_{t \ge 1}{ \theta_t(a^+) } >- \infty.
\end{align}
Furthermore,
\begin{align}
    c_1 &\coloneqq \min_{a^+\in \gA^+(i)} \inf_{t \ge 1}{ \theta_t(a^+) } = \min_{a^+\in \gA^+(i)} c_1(a^+) >- \infty.
     \label{eq:global_conv_theta_low_good}
\end{align}
Similarly, we can show that, almost surely on $\gE $,
\begin{align}
     c_2 &\coloneqq \max_{a^-\in \gA^-(i)} \sup_{t \ge 1}{ \theta_t(a^-) } < \infty.
     \label{eq:global_conv_theta_up_bad}
\end{align}

\textbf{First case. 2a).} Consider the event,
\begin{align}
\label{eq:non_vanishing_nl_coefficient_stochastic_npg_value_baseline_special_claim_2_case_1_intermediate_1}
    \gE_0 \coloneqq \bigcap\limits_{a^+ \in \gA^+(i)} 
    \underbrace{
    \left\{ N_\infty(a^+) < \infty \right\}}_{\mathcal{E}_0(a^+)},
\end{align}
i.e., any ``good'' action $a^+ \in \gA^+(i)$ has finitely many updates as $t \to \infty$. Pick $a^+ \in \gA^+(i)$, such that $\PP{\left( N_\infty(a^+) < \infty \right) } > 0$. According to the extended Borel-Cantelli lemma (\cref{lem:ebc}), we have, almost surely,
\begin{align}
\label{eq:non_vanishing_nl_coefficient_stochastic_npg_value_baseline_special_claim_2_case_1_intermediate_2}
    \Big\{ \sum_{t \ge 1} \pi_{\theta_t}(a^+)=\infty \Big\} = \left\{ N_\infty(a^+)=\infty \right\}.
\end{align}
Hence, taking complements, we have,
\begin{align}
\label{eq:non_vanishing_nl_coefficient_stochastic_npg_value_baseline_special_claim_2_case_1_intermediate_3}
    \Big\{ \sum_{t \ge 1} \pi_{\theta_t}(a^+)<\infty \Big\} = \left\{N_\infty(a^+)<\infty\right\}
\end{align}
also holds almost surely. 
Next, we have,
\begin{align}
    1 - \sum_{ j \in \gA(i) }{\pi_{\theta_t}(j) } &= \frac{ \sum_{a^+ \in \gA^+(i)}{ e^{\theta_t(a^+)} } + \sum_{a^- \in \gA^-(i)}{ e^{\theta_t(a^-)} } }{ \sum_{a \in [K]}{ e^{\theta_t(a)} } } \\
    &\le \frac{ \sum_{a^+ \in \gA^+(i)}{ e^{\theta_t(a^+)} } + \sum_{a^- \in \gA^-(i)}{ e^{c_2} } }{ \sum_{a \in [K]}{ e^{\theta_t(a)} } } \\
    &= \frac{ \sum_{a^+ \in \gA^+(i)}{ e^{\theta_t(a^+)} } + e^{c_2 - c_1} \cdot \sum_{a^- \in \gA^-(i)}{ e^{c_1} } }{ \sum_{a \in [K]}{ e^{\theta_t(a)} } } \\
    &= \frac{ \sum_{a^+ \in \gA^+(i)}{ e^{\theta_t(a^+)} } + e^{c_2 - c_1} \cdot \frac{|\gA^-(i)| }{|\gA^+(i)|} \cdot \sum_{a^+ \in \gA^+(i)}{ e^{c_1} } }{ \sum_{a \in [K]}{ e^{\theta_t(a)} } } \qquad \left( |\gA^+(i)| \ge 1 \right) \\
    &\le \frac{ \sum_{a^+ \in \gA^+(i)}{ e^{\theta_t(a^+)} } + e^{c_2 - c_1} \cdot \frac{|\gA^-(i)| }{|\gA^+(i)|} \cdot \sum_{a^+ \in \gA^+(i)}{ e^{\theta_t(a^+)} } }{ \sum_{a \in [K]}{ e^{\theta_t(a)} } } \\
    &= \left( 1 + e^{c_2 - c_1} \cdot \frac{|\gA^-(i)| }{|\gA^+(i)|} \right) \cdot \sum_{a^+ \in \gA^+(i)}{ \pi_{\theta_t}(a^+) }.
\end{align}
According to \cref{eq:non_vanishing_nl_coefficient_stochastic_npg_value_baseline_special_claim_2_case_1_intermediate_2},  $N_\infty (a^+) < \infty $ for all $a^+\in \gA^+(i)$. Since (from the above derivation)
\begin{align}
     \sum_{t\ge 1} \sum_{a^- \in \gA^-(i)}{ \pi_{\theta_t}(a^-) } \le   e^{c_2 - c_1} \cdot \frac{|\gA^-(i)| }{|\gA^+(i)|} \cdot  \sum_{t\ge 1}\sum_{a^+ \in \gA^+(i)}{ \pi_{\theta_t}(a^+) } < \infty,
\end{align}
we have that $N_\infty (a^-) < \infty $ for all $a^- \in \gA^-(i)$ according to \cref{eq:non_vanishing_nl_coefficient_stochastic_npg_value_baseline_special_claim_2_case_1_intermediate_3}. Therefore, we have, $\sum_{j\in \gA(i)} N_\infty (i) = \infty $, which indicates that for all $j \in \gA(i)$, the first update in the following \cref{eq:dominating_action_update} will be conducted for infinitely many times, and the second update in \cref{eq:dominating_action_update} will be conducted for finitely many times. 
\begin{align}
\label{eq:dominating_action_update}
    \theta_{t+1}(j) \gets \theta_t(j) + \begin{cases}
	    \eta \cdot \left( 1 - \pi_{\theta_t}(j) \right) \cdot R_t(j), & \text{if } a_t = j\, , \\
		- \eta \cdot \pi_{\theta_t}(j) \cdot R_t(a_t), & \text{otherwise}\, .
	\end{cases}
\end{align}
According to \cref{assp:reward_no_ties}, we have {\color{red} $| \gA(i) | =1$,} and,
\begin{align}
    \theta_{t} (i) &\le \theta_1(i) + c_3 + \eta \cdot R_{\max} \cdot \sum_{s=1}^{t-1}  ( 1 - \pi_{\theta_s}(i)  ) \\
    &\le \theta_1(i) + c_3 + \eta \cdot R_{\max} \cdot\left( 1 + e^{c_2 - c_1} \cdot \frac{|\gA^-(i)| }{|\gA^+(i)|} \right) \cdot \sum_{t\ge 1} \sum_{a^+ \in \gA^+(i)}{ \pi_{\theta_t}(a^+) } < \infty,
\end{align}
where $c_3 < \infty$ upper bounds the cumulative sum of the second update in \cref{eq:dominating_action_update}, since every single update is bounded, and the second update in \cref{eq:dominating_action_update} is conducted for finitely many times. Hence, we have
\begin{align}
\label{eq:non_vanishing_nl_coefficient_stochastic_npg_value_baseline_special_claim_2_case_1_intermediate_21}
    c_4 \coloneqq \sup_{t \ge 1}{\theta_t(i)} < \infty.
\end{align}
%
Therefore, we have, for all $t \ge 1$, 
\begin{align}
\label{eq:non_vanishing_nl_coefficient_stochastic_npg_value_baseline_special_claim_2_case_1_intermediate_22}
    \sum_{j \in \gA(i)}{ \pi_{\theta_t}(j) } &= \frac{ \sum_{j \in \gA(i)}{ e^{ \theta_t(j) } } }{ \sum_{j \in \gA(i)}{ e^{ \theta_t(j) } } + \sum_{a^+ \in \gA^+(i)}{ e^{ \theta_t(a^+) } } + \sum_{a^- \in \gA^-(i)}{ e^{ \theta_t(a^-) } } } \\
    &\le \frac{ \sum_{j \in \gA(i)}{ e^{ \theta_t(j) } } }{ \sum_{j \in \gA(i)}{ e^{ \theta_t(j) } } + \sum_{a^+ \in \gA^+(i)}{ e^{ \theta_t(a^+) } } } \qquad \big( e^{ \theta_t(a^-) } > 0 \big) \\
    &\le \frac{ \sum_{j \in \gA(i)}{ e^{ \theta_t(j) } } }{ \sum_{j \in \gA(i)}{ e^{ \theta_t(j) } } + e^{c_1} \cdot \left|\gA^+(i) \right| } \qquad \left( \text{by \cref{eq:global_conv_theta_low_good}} \right) \\
    &\le \frac{ e^{c_4} \cdot \left|\gA(i) \right| }{ e^{c_4} \cdot \left|\gA(i) \right| + e^{c_1} \cdot \left|\gA^+(i) \right| } \qquad \left( \text{by \cref{eq:non_vanishing_nl_coefficient_stochastic_npg_value_baseline_special_claim_2_case_1_intermediate_21}} \right) \\
    &\not\to 1,
\end{align}
which is a contradiction with the assumption of \cref{eq:non_vanishing_nl_coefficient_stochastic_npg_value_baseline_special_claim_2_intermediate_15},
showing that 
$\mathbb{P}(\gE_0 \cap \gE )=0$.

\textbf{Second case. 2b).} Consider the complement $\gE_0^c$ of $\gE_0$, where $\gE_0$ is by \cref{eq:non_vanishing_nl_coefficient_stochastic_npg_value_baseline_special_claim_2_case_1_intermediate_1}. $\gE_0^c$ indicates the event for at least one ``good'' action $a^+ \in \gA^+(i)$ has infinitely many updates as $t \to \infty$.
%

We now show that also $\PP(\gE^{\prime})=0$ where
 $\gE^{\prime }=\gE_0^c \cap \{ I=i,i\ne a^* \}
 = (\cup_{a^+\in \cA(i)} \{ N_\infty(a^+)=\infty \}) \cap \{ I=i,i\ne a^* \}$.%
\footnote{Here, $\gE^\prime$ is redefined to minimize clutter; the previous definition is not used in this part of the proof.}
Let $\tilde{\gA}^+(i) \coloneqq \{ a^+ \in \gA^+(i): N_\infty(a^+)=\infty   \}$, and 
\begin{align}
    \gE_\infty \coloneqq\cup_{a^+\in \cA(i)} \{ N_\infty(a^+)=\infty \} = \cup_{a^+\in \tilde{\gA}^+(i)} \{ N_\infty(a^+)=\infty \}.
\end{align}
Then it suffices to show that for any $a^+ \in \tilde{\gA}^+(i)$, $\mathbb{P}{\left( \gE_{\infty}(a^+) \cap \{I=i,i\ne a^*\} \right) } = 0$, where $\gE_{\infty}(a^+) \coloneqq \{ N_\infty(a^+)=\infty \}$. Hence, assume that $\mathbb{P}{\left( \gE_{\infty}(a^+) \cap \{I=i,i\ne a^*\} \right) } > 0$. 

Fix $\delta \in [0, 1]$. 
Using a similar calculation to that of \cref{eq:global_conv_apply_conc_2}, there exists an event 
$\gE_{\delta}$ such that $\PP{\left( \gE_{\delta} \right)} \ge 1 - 2\delta$, and on $\gE_{\delta}$, for all $t\ge \tau$, for all $a^+ \in \tilde{\gA}^+(i)$,
\begin{align}
    \theta_{t}(a^+) &= \EE{\theta_1(a^+)} + Z_t(a^+) + P_1(a^+) + \cdots + P_{\tau-1}(a^+) \qquad \big( \text{by \cref{eq:non_vanishing_nl_coefficient_stochastic_npg_value_baseline_special_claim_2_intermediate_4}} \big) \\
    &\qquad + P_{\tau}(a^+) + \cdots +  P_{t-1}(a^+) \\
    &
    \ge  \EE{\theta_1(a^+)}  - \underbrace{ 
     \left\{
     36 \ \eta \ R_{\max} \ \sqrt{  \left(\frac{4}{3}+  \sum_{s=1}^{t-1} \pi_{\theta_s}(a^+) \right) \cdot \log \left( \frac{  1+  \sum_{s=1}^{t-1} \pi_{\theta_s}(a^+)  }{ \delta } \right) } + 12 \ \eta \ R_{\max} \ \log(1/\delta) +  8 \ \eta \ R_{\max}\log 3
    \right\}
    }_{ (\spadesuit) }
    \\
    &\qquad + P_1(a^+) + \cdots + P_{\tau-1}(a^+)  + 
    \underbrace{ \eta \cdot  c' \cdot \sum_{s=\tau}^t   \pi_{\theta_s}(a^+) }_{ (\heartsuit) } .
    \qquad \left( \text{by \cref{eq:global_conv_apply_conc_1,eq:non_vanishing_nl_coefficient_stochastic_npg_value_baseline_special_claim_2_intermediate_19,eq:non_vanishing_nl_coefficient_stochastic_npg_value_baseline_special_claim_2_intermediate_20}} \right)
\end{align}
On $\gE_{\infty}(a^+) \cap \gE_{\delta}$, $N_\infty(a^+) = \infty$, which indicates that $\sum_{t\ge 1} \pi_{\theta_t}(a^+) = \infty$ according to \cref{eq:non_vanishing_nl_coefficient_stochastic_npg_value_baseline_special_claim_2_case_1_intermediate_2}. 
When $t\rightarrow \infty$, both $(\spadesuit)$ and $(\heartsuit)$ go to infinity while $(\heartsuit)$ goes to infinity faster than $(\spadesuit)$. 
Hence, 
we have $\theta_t(a^+) \to \infty$ as $t \to \infty$.

Since $\PP{\left( \gE_{\infty}(a^+) \setminus \left( \gE_{\infty}(a^+) \cap \gE_{\delta} \right) \right)} \to 0$ as $\delta \to 0$, with an argument parallel to that used in the previous analysis (cf. the argument after \cref{eq:global_conv_apply_conc_2}), we have, almost surely on $\gE_{\infty}(a^+)$, 
\begin{align}
\label{eq:non_vanishing_nl_coefficient_stochastic_npg_value_baseline_special_claim_2_case_2_intermediate_3}
    \lim_{t \to \infty}{ \theta_t(a^+) } = \infty,
\end{align}
which implies that there exists $\tau_1\ge 1$ such that on 
$\gE_\infty(a^+)\cap \{I=i,i\ne a^*\}$, we have almost surely that $\tau_1<+\infty$ while we also have that for all $t\ge \tau_1$,  for all $a^+ \in \tilde{\gA}^+(i)$,
\begin{align}
\label{eq:non_vanishing_nl_coefficient_stochastic_npg_value_baseline_special_claim_2_case_2_intermediate_3b}
    \sum_{a^- \in \gA^-(i)}  \frac{ r(i) - r(a^-) }{ \exp\{ \theta_t(a^+) - c_2 \}} < c^\prime \coloneqq \frac{r(a^+)-r(i)}{2},
\end{align}
where $c_2 \le \infty$ is from \cref{eq:global_conv_theta_up_bad}. For all $\bar{a}^+ \in \gA^+(i) \setminus \tilde{\gA}^+(i)$, we have, $ \gN_\infty(\bar{a}^+) < \infty $. Note that,
\begin{align}
\label{eq:good_action_update}
    \theta_{t+1}(\bar{a}^+) \gets \theta_t(\bar{a}^+) + \begin{cases}
	    \eta \cdot \left( 1 - \pi_{\theta_t}(\bar{a}^+) \right) \cdot R_t(\bar{a}^+), & \text{if } a_t = \bar{a}^+\, , \\
		- \eta \cdot \pi_{\theta_t}(\bar{a}^+) \cdot R_t(a_t), & \text{otherwise}\, .
	\end{cases}
\end{align}
Since $N_\infty(\bar{a}^+)<\infty$, the first update in \cref{eq:good_action_update} will be conducted finitely many times as $t \to \infty$. On the other hand, the second update in \cref{eq:good_action_update} will be conducted for infinitely many times. According to $| R_t(a_t) | \le R_{\max}$ and $\sum_{t \ge 1} \pi_{\theta_t}(\bar{a}^+)<\infty$ by \cref{eq:non_vanishing_nl_coefficient_stochastic_npg_value_baseline_special_claim_2_case_1_intermediate_3}, we have,
\begin{align}
    \max_{ \bar{a}^+ \in \gA^+(i) \setminus \tilde{\gA}^+(i) } \sup_{t \ge 1} \theta_t(\bar{a}^+) < \infty.
\end{align}
Recall in \cref{eq:global_conv_theta_up_bad}, we show that  
   $c_2 \coloneqq \max_{a^-\in \gA^-(i)} \sup_{t \ge 1}{ \theta_t(a^-) } < \infty$. Therefore, we have,
\begin{align}
    \max_{ a \in ( \gA^+(i) \cup \gA^-(i) ) \setminus \tilde{\gA}^+(i) } \sup_{t \ge 1} \theta_t(a) < \infty.
\end{align}
Fix any $\tilde{a}^+\in \tilde{\gA}^+(i)$, since $ \lim_{t\rightarrow\infty}  \theta_t(\tilde{a}^+) = \infty$ by \cref{eq:non_vanishing_nl_coefficient_stochastic_npg_value_baseline_special_claim_2_case_2_intermediate_3}, there exists $\tau_2<\infty$ such that when $t\ge \tau_2$,
\begin{align}
    \sum_{ a \in ( \gA^+(i) \cup \gA^-(i) ) \setminus \tilde{\gA}^+(i) } \exp\{ \theta_t(a) \} &\le \frac{1}{2} \cdot   \exp\{\theta_t(\tilde{a}^+)\}, \quad \text{and} \\
    \sum_{ a \in ( \gA^+(i) \cup \gA^-(i) ) \setminus \tilde{\gA}^+(i) } \pi_{\theta_t}(a) &\le \frac{1}{2} \cdot \pi_{\theta_t}(\tilde{a}^+),
\end{align}
which implies that,
\begin{align}
    1 - \sum_{j\in \gA(i)} \pi_t(j) &= \sum_{ a^+ \in \tilde{\gA}^+(i) } \pi_{\theta_t}(a^+) + \sum_{ a \in ( \gA^+(i) \cup \gA^-(i) ) \setminus \tilde{\gA}^+(i) } \pi_{\theta_t}(a)  \\
    &\le \sum_{ a^+ \in \tilde{\gA}^+(i) } \pi_{\theta_t}(a^+) + \frac{1}{2} \cdot \pi_{\theta_t}(\tilde{a}^+) \\
    &\le \frac{3}{2} \cdot \sum_{ a^+ \in \tilde{\gA}^+(i) } \pi_{\theta_t}(a^+).
    \label{eq:global_conv_case_2b_upbd_pi}
\end{align}
For any $\tilde{a}^+ \in \tilde{\gA}^+(i)$, we have,
\begin{align}
\MoveEqLeft
\label{eq:non_vanishing_nl_coefficient_stochastic_npg_value_baseline_special_claim_2_case_2_intermediate_4}
    \pi_{\theta_t}^\top r = \sum_{j \in \gA(i)} \pi_{\theta_t}(j) \cdot r(i) + \sum_{a^- \in \gA^-(i)} \pi_{\theta_t}(a^-) \cdot r(a^-) + \sum_{a^+ \in \gA^+(i)} \pi_{\theta_t}(a^+) \cdot r(a^+) \\
    &= r(i) - \sum_{a^- \in \gA^-(i)} \pi_{\theta_t}(a^-) \cdot \left( r(i) - r(a^-) \right) + \sum_{a^+ \in \gA^+(i)} \pi_{\theta_t}(a^+) \cdot \left( r(a^+) - r(i) \right) \\
    &\ge r(i) - \sum_{a^- \in \gA^-(i)} \pi_{\theta_t}(a^-) \cdot \left( r(i) - r(a^-) \right) + \sum_{a^+ \in \tilde{\gA}^+(i)} \pi_{\theta_t}(a^+) \cdot \left( r(a^+) - r(i) \right) \quad (\text{fewer positive terms}) \\
    &= r(i) - \sum_{a^- \in \gA^-(i)} \pi_{\theta_t}(a^-) \cdot \left( r(i) - r(a^-) \right) + \pi_{\theta_t}(\tilde{a}^+ ) \cdot \left( r(\tilde{a}^+ ) - r(i) \right) + \sum_{a^+ \in \tilde{\gA}^+(i) \setminus \{ \tilde{a}^+ \} }{ \pi_{\theta_t}(a^+) \cdot \left( r(a^+) - r(i) \right)  }.
\end{align}
On $\gE^\prime$, we have, for $t \ge \tau_1$, almost surely
\begin{align}
\label{eq:non_vanishing_nl_coefficient_stochastic_npg_value_baseline_special_claim_2_case_2_intermediate_5}
\MoveEqLeft
    - \sum_{a^- \in \gA^-(i)} \pi_{\theta_t}(a^-) \cdot \left( r(i) - r(a^-) \right) + \pi_{\theta_t}(\tilde{a}^+ ) \cdot \left( r(\tilde{a}^+ ) - r(i) \right) \\
    &= \pi_{\theta_t}(\tilde{a}^+) \cdot \bigg[ \left( r(\tilde{a}^+) - r(i) \right)  - \sum_{a^- \in \gA^-(i)} \frac{ \pi_{\theta_t}(a^-) }{\pi_{\theta_t}(\tilde{a}^+) } \cdot \left( r(i) - r(a^-) \right) \bigg] \\
    &= \pi_{\theta_t}(\tilde{a}^+) \cdot \bigg[ \left( r(\tilde{a}^+) - r(i) \right) - \sum_{a^- \in \gA^-(i)}  \frac{ r(i) - r(a^-) }{ \exp\{ \theta_t(\tilde{a}^+) - \theta_t(a^-) \} }  \bigg] \\
    &\ge \pi_{\theta_t}(\tilde{a}^+) \cdot \bigg[ \left( r(a^+) - r(i) \right) - \sum_{a^- \in \gA^-(i)}  \frac{ r(i) - r(a^-) }{ \exp\{ \theta_t(\tilde{a}^+) - c_2 \} }  \bigg] \qquad \left( \text{by \cref{eq:global_conv_theta_up_bad}} \right) \\
    &> c^\prime \cdot \pi_{\theta_t}(\tilde{a}^+)\,. \qquad \left( \text{by \cref{eq:non_vanishing_nl_coefficient_stochastic_npg_value_baseline_special_claim_2_case_2_intermediate_3b}} \right)
\end{align}
Combining \cref{eq:non_vanishing_nl_coefficient_stochastic_npg_value_baseline_special_claim_2_case_2_intermediate_4,eq:non_vanishing_nl_coefficient_stochastic_npg_value_baseline_special_claim_2_case_2_intermediate_5}, we have,
\begin{align}
    \pi_{\theta_t}^\top r &\ge r(i) + c^\prime \cdot \pi_{\theta_t}(\tilde{a}^+) + \sum_{a^+ \in \tilde{\gA}^+(i) \setminus \{ \tilde{a}^+ \} }{ \pi_{\theta_t}(a^+) \cdot \left( r(a^+) - r(i) \right)  } \\
    &\ge r(i) + c^\prime \cdot \pi_{\theta_t}(\tilde{a}^+) + \min_{a^+ \in \tilde{\gA}^+(i) \setminus \{ \tilde{a}^+ \} }{ \left( r(a^+) - r(i) \right) } \cdot \sum_{a^+ \in \tilde{\gA}^+(i) \setminus \{ \tilde{a}^+ \} }{ \pi_{\theta_t}(a^+)} \\
    &\ge r(i) + c^{\prime\prime} \cdot \sum_{a^+ \in \tilde{\gA}^+(i) }{ \pi_{\theta_t}(a^+)}. \qquad \left( c^{\prime\prime} \coloneqq \min\Big\{ c^\prime, \min_{a^+ \in \tilde{\gA}^+(i) \setminus \{ \tilde{a}^+ \} }{ \left( r(a^+) - r(i) \right) } \Big\} \right)
\end{align}
Therefore, by \cref{eq:non_vanishing_nl_coefficient_stochastic_npg_value_baseline_special_claim_2_intermediate_9}, on 
$\gE^\prime$, we have, for all $t \ge \tau_1$, for any $j \in \gA(i)$, almost surely,
\begin{align}
    P_t(j) &= \eta \cdot \pi_{\theta_t}(j) \cdot ( r(j) - \pi_{\theta_t}^\top r ) \\
    &< - c^{\prime\prime} \cdot \pi_{\theta_t}(j) \cdot \sum_{a^+ \in \tilde{\gA}^+(i)} \pi_{\theta_t}(a^+) < 0.
\end{align}
According to \cref{assp:reward_no_ties}, we have {\color{red}$|\gA(i)|=1$,}, and  $\pi_{\theta_t}(i) \to 1$ as $t \to \infty$. Therefore, there exists $\tau_3 < \infty$, such that for all $t\ge \tau_3$, we have $\pi_{\theta_t}(i) > 1/2$. Hence, when $t\ge \max\{\tau_1, \tau_3 \}$, we have,
\begin{align}
\label{eq:non_vanishing_nl_coefficient_stochastic_npg_value_baseline_special_claim_2_intermediate_9_apply}
    P_t(j) < - \frac{c^{\prime\prime}}{2} \cdot  \sum_{a^+ \in \tilde{\gA}^+(i)} \pi_{\theta_t}(a^+).
\end{align}
Let $\tau^{\prime} \coloneqq \max\{  \tau_1, \tau_2,\tau_3\}$. When $t > \tau'$, for $j\in \gA(i)$, we have,
\begin{align}
    V_t (j) &\coloneqq \frac{5}{18} \cdot \sum_{s=1}^{t-1} \pi_{\theta_s}(a) \cdot (1-\pi_{\theta_s}(a) ) \\
    &\le \sum_{s=1}^{\tau'-1} (1- \pi_{\theta_s}(j) ) + \frac{5}{18} \cdot \sum_{s=\tau'}^{t-1} \Big( 1- \sum_{j\in \gA(i)}\pi_{\theta_s}(j) \Big) \\
    &\le \sum_{s=1}^{\tau'-1} (1- \pi_{\theta_s}(j)) + \sum_{s=\tau'}^{t-1} \sum_{a^+ \in \tilde{\gA}^+(i)} \pi_{\theta_s}(a^+).
    \qquad \left(\text{by \cref{eq:global_conv_case_2b_upbd_pi}}\right)
    \label{eq:global_conv_case_2b_upbd_pi_apply}
\end{align}
Hence, for $t\ge \tau'$, when $\gE_1$ holds (defined above \cref{eq:non_vanishing_nl_coefficient_stochastic_npg_value_baseline_special_claim_2_intermediate_14}), we have,
\begin{align}
    \theta_{t}(j) &= \EE{\theta_1(j)} + Z_t(j) + P_1(j) + \cdots + P_{\tau^\prime-1}(j)  + P_{\tau^\prime}(j) + \cdots +  P_{t-1}(j)\qquad \big( \text{by \cref{eq:non_vanishing_nl_coefficient_stochastic_npg_value_baseline_special_claim_2_intermediate_4}} \big) \\
    &\le  \EE{\theta_1(j)}  + 
     36 \ \eta \ R_{\max} \ \sqrt{  (V_t(j)+4/3)\log \left( \frac{  V_t(j)+1  }{ \delta } \right) } + 12 \ \eta \ R_{\max} \ \log(1/\delta)
    \\
    &\qquad + P_1(j) + \cdots + P_{\tau'-1}(j)  + 
     P_{\tau'}(j) + \cdots +  P_{t-1}(j)  
    \qquad \left( \text{by \cref{eq:global_conv_apply_conc_1}} \right)\\
    & \le  \EE{\theta_1(j)} +
    \underbrace{
    36 \ \eta \ R_{\max} \ \sqrt{  (V_t(j)+4/3)\log \left( \frac{  V_t(j)+1  }{ \delta } \right) } + 12 \ \eta \ R_{\max} \ \log(1/\delta)
    +  8 \ \eta \ R_{\max}\ \log 3 }_{ (\clubsuit) }
    \\
    &\qquad + P_1(j) + \cdots + P_{\tau'-1}(j)  
    - 
    \underbrace{
    \frac{c^{\prime\prime}}{2} \cdot \sum_{s=\tau'}^{t-1} \sum_{a^+ \in \tilde{A}^+} \pi_{\theta_t}(a^+)
    }_{(\diamondsuit)}.
    \qquad \big( \text{by \cref{eq:non_vanishing_nl_coefficient_stochastic_npg_value_baseline_special_claim_2_intermediate_9_apply}} \big)
\end{align}
As $t \to \infty$, both $(\clubsuit)$ and $(\diamondsuit)$ go to infinity. According to \cref{eq:global_conv_case_2b_upbd_pi_apply}, we have,  $(\diamondsuit)$ goes to infinity faster than $(\clubsuit)$. Therefore, we have $\sup_{t\ge 1} \theta_t(j) <\infty$.
 
Since $\PP(\gE_1^c) \le \delta \to 0$ as $\delta\to 0$, with an argument parallel to that used in the previous analysis (cf. the argument after \cref{eq:global_conv_apply_conc_2}), 
we get that there exists a random constant $c_5(j)$, such that almost surely on $\gE^\prime$, $c_5(j)<\infty$ and
$\sup_{t\ge \tau^\prime} \theta_t(j) \le c_5(j)$. Denote $c_5 \coloneqq \max_{j\in \gA(i)}{ c_5(j) }$. Then,
almost surely on $\gE^\prime$, $c_5<\infty$ and
\begin{align}
\label{eq:non_vanishing_nl_coefficient_stochastic_npg_value_baseline_special_claim_2_case_2_intermediate_9}
\sup_{t\ge \tau^\prime}\max_{j\in \gA(i)} \theta_t(j) \le c_5\,.
\end{align}
According to
\cref{eq:non_vanishing_nl_coefficient_stochastic_npg_value_baseline_special_claim_2_case_2_intermediate_3}, there exists $a^+ \in \gA^+(i)$, $\tau''\ge 1$, such that almost surely on $\gE^\prime$,
$\tau''<\infty$ while we also have 
\begin{align}
\inf_{t\ge \tau''} \theta_t(a^+)\ge 0,
\label{eq:ttapp}
\end{align}
 for all $t\ge \tau''$.
Hence, on $\gE^\prime$, almost surely for all $t\ge \max\{ \tau',\tau''\}$, 
\begin{align}
\label{eq:non_vanishing_nl_coefficient_stochastic_npg_value_baseline_special_claim_2_case_2_intermediate_10}
    \sum_{j \in \gA(i)}{ \pi_{\theta_t}(j) } &= \frac{ \sum_{j \in \gA(i)}{ e^{ \theta_t(j) } } }{ \sum_{j \in \gA(i)}{ e^{ \theta_t(j) } } + \sum_{\tilde{a}^+ \in \gA^+(i)}{ e^{ \theta_t(\tilde{a}^+) } } + \sum_{a^- \in \gA^-(i)}{ e^{ \theta_t(a^-) } } } \\
    &\le \frac{ \sum_{j \in \gA(i)}{ e^{ \theta_t(j) } } }{ \sum_{j \in \gA(i)}{ e^{ \theta_t(j) } } + 	{ e^{ \theta_t(a^+) } } } \qquad \big( e^{ \theta_t(k) } > 0 \text{ for any } k\in [K] \big) \\
    &\le \frac{ \sum_{j \in \gA(i)}{ e^{ \theta_t(j) } } }
    { \sum_{j \in \gA(i)}{ e^{ \theta_t(j) } } 
    + 1 } \qquad \left( \text{by \cref{eq:ttapp} } \right) \\
    &\le \frac{ e^{c_5} \cdot \left|\gA(i) \right| }{ e^{c_5} \cdot \left|\gA(i) \right| + 1 } \qquad \left( \text{by \cref{eq:non_vanishing_nl_coefficient_stochastic_npg_value_baseline_special_claim_2_case_2_intermediate_9}} \right) \\
    &\not\to 1\,.
\end{align}
Hence, $\mathbb{P}( \gE^\prime )=0$, finishing the proof. 
\end{proof}

\textbf{\cref{lem:non_uniform_lojasiewicz_softmax_special}} (Non-uniform \L{}ojasiewicz (N\L{}), \citet[Lemma 3]{mei2020global})\textbf{.}
Assume $r$ has a unique maximizing action $a^*$. Let $\pi^* = \argmax_{\pi \in \Delta}{ \pi^\top r}$. Then, 
\begin{align}
\label{eq:non_uniform_lojasiewicz_softmax_special_result_1_appendix}
    \bigg\| \frac{d \pi_\theta^\top r}{d \theta} \bigg\|_2 \ge \pi_\theta(a^*) \cdot ( \pi^* - \pi_\theta )^\top r\,.
\end{align}
\begin{proof}
Using the definition of softmax Jacobian, we have,
\begin{align}
    \bigg\| \frac{d \pi_\theta^\top r}{d \theta} \bigg\|_2^2 &= \sum_{a \in [K]}{ \pi_{\theta}(a)^2 \cdot \left( r(a) - \pi_{\theta}^\top r \right)^2 } \\ &\ge \pi_{\theta}(a^*)^2 \cdot \left( r(a^*) - \pi_{\theta}^\top r \right)^2, \qquad \left( \text{fewer terms} \right)
\end{align}
which implies \cref{eq:non_uniform_lojasiewicz_softmax_special_result_1_appendix}.
\end{proof}

\subsection{Proof of \cref{thm:convergence_rate_and_regret_gradient_bandit_sampled_reward}}
\label{pf:thm_convergence_rate_and_regret_gradient_bandit_sampled_reward}
\textbf{\cref{thm:convergence_rate_and_regret_gradient_bandit_sampled_reward}} (Convergence rate and regret)\textbf{.}
Using \cref{alg:gradient_bandit_algorithm_sampled_reward} with $\eta = \frac{\Delta^2}{40 \cdot K^{3/2} \cdot R_{\max}^3 }$, we have, for all $t \ge 1$,
\begin{align}
\label{eq:convergence_rate_and_regret_gradient_bandit_sampled_reward_result_1_appendix}
    \EE{ \left( \pi^* - \pi_{\theta_t} \right)^\top r } &\le \frac{C}{t}, \quad \text{and} \\
\label{eq:convergence_rate_and_regret_gradient_bandit_sampled_reward_result_2_appendix}
    \expectation{ \bigg[ \sum_{t=1}^{T}{ \left( \pi^* - \pi_{\theta_t} \right)^\top r } \bigg] } &\le \min\{ \sqrt{2 \, R_{\max} \, C \, T}, C \, \log{T} + 1  \},
\end{align}
where $C \coloneqq \frac{80 \cdot K^{3/2} \cdot R_{\max}^3 }{\Delta^2 \cdot \EE{ c^2 }} $, and $c \coloneqq \inf_{t \ge 1}{\pi_{\theta_t}(a^*)} > 0$ is from \cref{thm:asymp_global_converg_gradient_bandit_sampled_reward}.
\begin{proof}
\textbf{First part, \cref{eq:convergence_rate_and_regret_gradient_bandit_sampled_reward_result_1_appendix}.} According to \cref{lem:constant_learning_rates_expected_progress_sampled_reward}, we have,
\begin{align}
\label{eq:convergence_rate_and_regret_gradient_bandit_sampled_reward_result_1_intermediate_1}
    \EEt{ \pi_{\theta_{t+1}}^\top r } - \pi_{\theta_t}^\top r &\ge \frac{\Delta^2}{80 \cdot K^{3/2} \cdot R_{\max}^3 } \cdot \bigg\| \frac{d \pi_{\theta_t}^\top r}{d \theta_t} \bigg\|_2^2 \\
    &\ge \frac{\Delta^2 \cdot \pi_{\theta_t}(a^*)^2 }{80 \cdot K^{3/2} \cdot R_{\max}^3  }  \cdot \left( r(a^*) - \pi_{\theta_t}^\top r \right)^2 \qquad \left( \text{by \cref{lem:non_uniform_lojasiewicz_softmax_special}} \right) \\
    &\ge \frac{\Delta^2 \cdot \inf_{t \ge 1} \pi_{\theta_t}(a^*)^2 }{80 \cdot K^{3/2} \cdot R_{\max}^3  }  \cdot \left( r(a^*) - \pi_{\theta_t}^\top r \right)^2 \\
    &= \frac{\Delta^2 \cdot c^2 }{80 \cdot K^{3/2} \cdot R_{\max}^3  }  \cdot \left( r(a^*) - \pi_{\theta_t}^\top r \right)^2. \qquad \left( \text{by \cref{thm:asymp_global_converg_gradient_bandit_sampled_reward}} \right) 
\end{align}
Denote $\delta(\theta_t) \coloneqq \left( \pi^* - \pi_{\theta_t} \right)^\top r$ as the sub-optimality gap. We have,
\begin{align}
\label{eq:convergence_rate_and_regret_gradient_bandit_sampled_reward_result_1_intermediate_2}
    \delta(\theta_t) - \EEt{ \delta(\theta_{t+1})} &= \left( \pi^* - \pi_{\theta_t} \right)^\top r - \left( \pi^* - \EEt{\pi_{\theta_{t+1}}} \right)^\top r \\
    &= \EEt{\pi_{\theta_{t+1}}^\top r} - \pi_{\theta_t}^\top r \\
    &\ge \frac{\Delta^2 \cdot c^2 }{80 \cdot K^{3/2} \cdot R_{\max}^3  } \cdot \delta(\theta_t)^2.
\end{align}
Taking expectation, we have,
\begin{align}
\label{eq:convergence_rate_and_regret_gradient_bandit_sampled_reward_result_1_intermediate_3}
    \expectation{ [ \delta(\theta_t) ]} - \expectation{ [ \delta(\theta_{t+1}) ]} &\ge \frac{\Delta^2 \cdot \EE{ c^2 }  }{80 \cdot K^{3/2} \cdot R_{\max}^3  } \cdot \expectation{ [ \delta(\theta_t)^2 ] } \qquad \left( c \coloneqq \inf_{t \ge 1}{\pi_{\theta_t}(a^*)} > 0 \text{ is independent with } t \right) \\
    &\ge \frac{\Delta^2 \cdot \EE{ c^2 }  }{80 \cdot K^{3/2} \cdot R_{\max}^3  } \cdot \left( \expectation{ [ \delta(\theta_t) ] } \right)^2 \qquad \left( \text{by Jensen's inequality}\right) \\
    &= \frac{1}{C} \cdot \left( \expectation{ [ \delta(\theta_t) ] } \right)^2.
\end{align}
Therefore, we have,
\begin{align}
\label{eq:convergence_rate_and_regret_gradient_bandit_sampled_reward_result_1_intermediate_4}
    \frac{1}{ \expectation{ [ \delta(\theta_t) ]} } &= \frac{1}{\expectation{ [ \delta(\theta_{1}) ]}} + \sum_{s=1}^{t-1}{ \left[ \frac{1}{\expectation{ [ \delta(\theta_{s+1}) ]}} - \frac{1}{\expectation{ [ \delta(\theta_{s}) ]}} \right] } \\
    &= \frac{1}{\expectation{ [ \delta(\theta_{1}) ]}} + \sum_{s=1}^{t-1}{ \frac{1}{\expectation{ [ \delta(\theta_{s+1}) ]} \cdot \expectation{ [ \delta(\theta_{s}) ]} } \cdot \left( \expectation{ [ \delta(\theta_{s}) ]} - \expectation{ [ \delta(\theta_{s+1}) ]} \right) } \\
    &\ge \frac{1}{\expectation{ [ \delta(\theta_{1}) ]}} + \sum_{s=1}^{t-1}{ \frac{1}{\expectation{ [ \delta(\theta_{s+1}) ]} \cdot \expectation{ [ \delta(\theta_{s}) ]} } \cdot \frac{1}{C} \cdot \left( \expectation{ [ \delta(\theta_s) ] } \right)^2 } \qquad \left( \text{by \cref{eq:convergence_rate_and_regret_gradient_bandit_sampled_reward_result_1_intermediate_3}} \right)  \\
    &\ge \frac{1}{\expectation{ [ \delta(\theta_{1}) ]}} + \sum_{s=1}^{t-1}{ \frac{1}{C} }  \qquad \left(  \expectation{ [ \delta(\theta_{s}) ]} \ge  \expectation{ [ \delta(\theta_{s+1}) ]} > 0 \right) \\
    &= \frac{1}{\expectation{ [ \delta(\theta_{1}) ]}} + \frac{1}{C} \cdot \left( t - 1 \right) \\
    &\ge \frac{t}{C}, \qquad \left( \expectation{ [ \delta(\theta_{1}) ]} \le 2 \, R_{\max} \le C = \max_{s \le t} \frac{80 \cdot K^{3/2} \cdot R_{\max}^3 }{\Delta^2 \cdot \EE{ c^2 }}  \right)
\end{align}
which implies \cref{eq:convergence_rate_and_regret_gradient_bandit_sampled_reward_result_1_appendix}.

\textbf{Second part, \cref{eq:convergence_rate_and_regret_gradient_bandit_sampled_reward_result_2_appendix}.} According to \cref{eq:convergence_rate_and_regret_gradient_bandit_sampled_reward_result_1_intermediate_4},we have,
\begin{align}
\label{eq:convergence_rate_and_regret_gradient_bandit_sampled_reward_result_2_intermediate_1}
    \expectation{ \bigg[ \sum_{t=1}^{T}{ \delta(\theta_t) } \bigg] } = \sum_{t=1}^{T}{ \EE{ \delta(\theta_t) } } \le \sum_{t=1}^{T}{ \frac{C}{t} } \le C \cdot \log{T} + 1.
\end{align}
On the other hand, we have
\begin{align}
\label{eq:convergence_rate_and_regret_gradient_bandit_sampled_reward_result_2_intermediate_2}
    \sum_{t=1}^{T}{ \EE{ \delta(\theta_t) } } &\le \sqrt{T} \cdot \left[ \sum_{t=1}^T{ \left( \expectation{ [ \delta(\theta_t) ] } \right)^2 } \right]^{\frac{1}{2}} \qquad \left( \text{by Cauchy–Schwarz} \right) \\
    &\le \sqrt{T} \cdot \left[ \sum_{t=1}^T{ C \cdot \left( \EE{\delta(\theta_t)} - \EE{\delta(\theta_{t+1})} \right)  } \right]^{\frac{1}{2}} \qquad \left( \text{by \cref{eq:convergence_rate_and_regret_gradient_bandit_sampled_reward_result_1_intermediate_3}} \right) \\
    &= \sqrt{C \cdot T \cdot \left( \EE{\delta(\theta_1)} - \EE{\delta(\theta_{T+1})} \right) } \\
    &\le \sqrt{C \cdot T \cdot 2 \cdot R_{\max} }, \qquad \left( \EE{\delta(\theta_{T+1})} \ge 0. \text{ and } \EE{\delta(\theta_1)} \le 2 \, R_{\max} \right) 
\end{align}
Combining \cref{eq:convergence_rate_and_regret_gradient_bandit_sampled_reward_result_2_intermediate_1,eq:convergence_rate_and_regret_gradient_bandit_sampled_reward_result_2_intermediate_2}, we have \cref{eq:convergence_rate_and_regret_gradient_bandit_sampled_reward_result_2_appendix}.
\end{proof}

\section{Proofs for Using Baselines}
\label{sec:proofs_for_gradient_bandit_algorithm_sampled_reward_baselines}

The following \cref{alg:gradient_bandit_algorithm_sampled_reward_baselines} is same as the gradient bandit algorithm in \citet[Section 2.8]{sutton2018reinforcement}.

\begin{algorithm}[ht]
   \caption{Gradient bandit algorithm with baselines}
\begin{algorithmic}
   \STATE {\bfseries Input:} initial parameters $\theta_1 \in \sR^K$, learning rate $\eta > 0$.
   \STATE {\bfseries Output:} policies $\pi_{\theta_t} = \softmax(\theta_t)$.
   \WHILE{$t \ge 1$}
   \STATE Sample one action $a_t \sim \pi_{\theta_t}(\cdot)$.
   \STATE Observe one reward sample $R_t(a_t)\sim P_{a_t}$.
   \STATE Choose a baseline $B_t \in \sR$.
   \FOR{all $a \in [K]$}
   \IF{$a = a_t$}
   \STATE $\theta_{t+1}(a) \gets \theta_t(a) + \eta \cdot \left( 1 - \pi_{\theta_t}(a) \right) \cdot \left( R_t(a_t) - B_t \right)$.
   \ELSE
   \STATE $\theta_{t+1}(a) \gets \theta_t(a) - \eta \cdot \pi_{\theta_t}(a) \cdot \left( R_t(a_t) - B_t \right)$.
   \ENDIF
   \ENDFOR
   \ENDWHILE
\end{algorithmic}
\label{alg:gradient_bandit_algorithm_sampled_reward_baselines}
\end{algorithm}

\begin{proposition}
\label{prop:gradient_bandit_algorithm_equivalent_to_stochastic_gradient_ascent_sampled_reward_baselines}
\cref{alg:gradient_bandit_algorithm_sampled_reward_baselines} is equivalent to the following stochastic gradient ascent update on $\pi_{\theta}^\top r$.
\begin{align}
\label{eq:stochastic_gradient_ascent_sampled_reward_baselines}
    \theta_{t+1} &\gets  \theta_{t} + \eta \cdot \frac{d \pi_{\theta_t}^\top \big( \hat{r}_t - \hat{b}_t \big) }{d \theta_t} \\
    &= \theta_t + \eta \cdot \left(  \diagonalmatrix{(\pi_{\theta_t})} - \pi_{\theta_t} \pi_{\theta_t}^\top \right) \big( \hat{r}_t - \hat{b}_t \big),
\end{align}
where $\left( \frac{d \pi_{\theta}}{d \theta} \right)^\top = \diagonalmatrix{(\pi_{\theta})} - \pi_{\theta} \pi_{\theta}^\top $ is the Jacobian of $\theta \mapsto \pi_\theta \coloneqq \softmax(\theta)$, and $\hat{r}_t(a) \coloneqq \frac{ \sI\left\{ a_t = a \right\} }{ \pi_{\theta_t}(a) } \cdot R_t(a)$ for all $a \in [K]$ is the importance sampling (IS) estimator, and we set $R_t(a)=0$ for all $a \not= a_t$. The baseline is defined as $\hat{b}_t(a) \coloneqq \frac{ \sI\left\{ a_t = a \right\} }{ \pi_{\theta_t}(a) } \cdot B_t \,$ for all $a \in [K]$.
\end{proposition}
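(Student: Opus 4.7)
The plan is to mirror the proof of \cref{prop:gradient_bandit_algorithm_equivalent_to_stochastic_gradient_ascent_sampled_reward}, treating the baseline term $\hat{b}_t$ as passing through the softmax Jacobian in exactly the same way as the reward estimator $\hat{r}_t$. First I would expand the softmax Jacobian acting on an arbitrary vector $v \in \sR^K$ component-wise to obtain, for every $a \in [K]$,
\begin{align*}
    \left[ \left( \diagonalmatrix(\pi_{\theta_t}) - \pi_{\theta_t} \pi_{\theta_t}^\top \right) v \right](a) = \pi_{\theta_t}(a) \cdot \big( v(a) - \pi_{\theta_t}^\top v \big),
\end{align*}
and then specialize $v \coloneqq \hat{r}_t - \hat{b}_t$. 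This reduces the proposition to a single algebraic identity per action.

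The key simplification comes from the importance-sampling structure of both $\hat{r}_t$ and $\hat{b}_t$. Since $\hat{r}_t(a) = \sI\{a_t = a\}/\pi_{\theta_t}(a) \cdot R_t(a)$, a direct calculation gives $\pi_{\theta_t}^\top \hat{r}_t = R_t(a_t)$, and the same argument with $B_t$ in place of $R_t(a)$ yields $\pi_{\theta_t}^\top \hat{b}_t = B_t$. Substituting back, for the sampled action $a = a_t$ the update direction collapses to $\pi_{\theta_t}(a_t) \cdot \big[ (R_t(a_t) - B_t) / \pi_{\theta_t}(a_t) - (R_t(a_t) - B_t) \big] = (1 - \pi_{\theta_t}(a_t)) \cdot (R_t(a_t) - B_t)$, while for any $a \neq a_t$ the indicator vanishes and we are left with $-\pi_{\theta_t}(a) \cdot (R_t(a_t) - B_t)$. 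These are precisely the two branches of the update in \cref{alg:gradient_bandit_algorithm_sampled_reward_baselines}, which establishes the equivalence \cref{eq:stochastic_gradient_ascent_sampled_reward_baselines}.

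Since this is a computational identity rather than a result with any probabilistic or analytic content, I do not expect a genuine obstacle; the only step that even requires care is to keep track of the two ``off-diagonal'' cancellations at once (the $R_t(a_t)$ piece and the $B_t$ piece), each of which uses the same identity $\pi_{\theta_t}^\top \hat{r}_t = R_t(a_t)$, resp.\ $\pi_{\theta_t}^\top \hat{b}_t = B_t$, that makes the non-baseline case work. In particular, no additional hypothesis on the baseline $B_t$ is needed: the decomposition is purely algebraic and goes through for any (possibly data-dependent) choice of $B_t \in \sR$, which also explains why an action-independent baseline does not introduce bias at the level of the true gradient once expectations are taken.
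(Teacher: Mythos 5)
Your proposal is correct and follows essentially the same route as the paper's proof: a direct component-wise expansion of the softmax Jacobian applied to $\hat{r}_t - \hat{b}_t$, using the identities $\pi_{\theta_t}^\top \hat{r}_t = R_t(a_t)$ and $\pi_{\theta_t}^\top \hat{b}_t = B_t$ to collapse the update into the two branches of \cref{alg:gradient_bandit_algorithm_sampled_reward_baselines}. The paper's argument is the same purely algebraic computation, so there is nothing further to add.
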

\begin{proof}
Using the definition of softmax Jacobian, $\hat{r}_t$ and $\hat{b}_t$, we have, for all $a \in [K]$,
\begin{align}
    \theta_{t+1}(a) &\gets \theta_t(a) + \eta \cdot \pi_{\theta_t}(a) \cdot \left( \hat{r}_t(a) - \hat{b}_t(a) - \pi_{\theta_t}^\top \big( \hat{r}_t - \hat{b}_t \big) \right) \\
    &= \theta_t(a) + \eta \cdot \pi_{\theta_t}(a) \cdot \left( \hat{r}_t(a) -\hat{b}_t(a) - \left( R_t(a_t) - B_t \right) \right) \\
    &= \theta_t(a) + \begin{cases}
		\eta \cdot \left( 1 - \pi_{\theta_t}(a) \right) \cdot \left( R_t(a_t) - B_t \right), & \text{if } a_t = a\, , \\
		- \eta \cdot \pi_{\theta_t}(a) \cdot \left( R_t(a_t) - B_t \right), & \text{otherwise}\,.
    \end{cases}
    \qedhere
\end{align}
\end{proof}

\begin{lemma}[Unbiased stochastic gradient with bounded variance / scale]
\label{lem:unbiased_stochastic_gradient_bounded_scale_sampled_reward_baselines}
Using \cref{alg:gradient_bandit_algorithm_sampled_reward_baselines}, we have, for all $t \ge 1$,
\begin{align}
\label{eq:unbiased_stochastic_gradient_bounded_scale_sampled_reward_baselines_result_1_appendix}
    &\mathbb{E}_t{ \bigg[ \frac{d \pi_{\theta_t}^\top \big( \hat{r}_t - \hat{b}_t \big) }{d \theta_t} \bigg] } = \frac{d \pi_{\theta_t}^\top r}{d \theta_t }, \\
\label{eq:unbiased_stochastic_gradient_bounded_scale_sampled_reward_baselines_result_2_appendix}
    &\mathbb{E}_t{ \left[ \bigg\| \frac{d \pi_{\theta_t}^\top \big( \hat{r}_t - \hat{b}_t \big) }{d \theta_t} \bigg\|_2^2 \right] } \le 2 \, \bar{R}_{\max}^2,
\end{align}
where $\EEt{\cdot}$ is on randomness from the on-policy sampling $a_t \sim \pi_{\theta_t}(\cdot)$ and reward sampling $R_t(a_t)\sim P_{a_t}$, and $\bar{R}_{\max}$ is the range of reward minus baselines, i.e.,
\begin{align}
\label{eq:unbiased_stochastic_gradient_bounded_scale_sampled_reward_baselines_result_3_appendix}
    R_t(a_t) - B_t \in [ - \bar{R}_{\max}, \bar{R}_{\max} ].
\end{align}
\end{lemma}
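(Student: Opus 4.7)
The plan is to mirror the proof of \cref{prop:unbiased_stochastic_gradient_bounded_scale_sampled_reward} almost verbatim, using \cref{prop:gradient_bandit_algorithm_equivalent_to_stochastic_gradient_ascent_sampled_reward_baselines} to rewrite the update in stochastic gradient form and then observing that the ``reward'' driving the update is simply $R_t(a_t) - B_t$, which by assumption lies in $[-\bar R_{\max}, \bar R_{\max}]$. Everything else is structurally identical to the baseline-free case.

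For the unbiasedness claim \cref{eq:unbiased_stochastic_gradient_bounded_scale_sampled_reward_baselines_result_1_appendix}, I would first expand the per-coordinate update as in \cref{eq:unbiased_stochastic_gradient_bounded_scale_sampled_reward_result_1_intermediate_2}, giving
\begin{align*}
    \frac{d \pi_{\theta_t}^\top (\hat r_t - \hat b_t)}{d \theta_t(a)} = \bigl( \sI\{a_t = a\} - \pi_{\theta_t}(a) \bigr) \cdot (R_t(a_t) - B_t),
\end{align*}
since $\pi_{\theta_t}^\top \hat r_t = R_t(a_t)$ and $\pi_{\theta_t}^\top \hat b_t = B_t$. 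Then I would split the conditional expectation over $a_t \sim \pi_{\theta_t}$ into the ``$a_t = a$'' and ``$a_t \neq a$'' cases exactly as in \cref{eq:unbiased_stochastic_gradient_bounded_scale_sampled_reward_result_1_intermediate_6}. The baseline contribution vanishes because $B_t$ is $\gF_t$-measurable and action-independent, so $\sum_a (\sI\{a_t=a\} - \pi_{\theta_t}(a)) \cdot B_t$ has conditional expectation zero. What remains is $\pi_{\theta_t}(a) \cdot (r(a) - \pi_{\theta_t}^\top r)$, which equals $\frac{d \pi_{\theta_t}^\top r}{d \theta_t(a)}$ as desired.

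For the bounded-scale claim \cref{eq:unbiased_stochastic_gradient_bounded_scale_sampled_reward_baselines_result_2_appendix}, the argument of \cref{eq:unbiased_stochastic_gradient_bounded_scale_sampled_reward_result_2_intermediate_1} transfers directly once $R_t(a_t)$ is replaced by $R_t(a_t) - B_t$: squaring the coordinate expression above yields
\begin{align*}
    \bigg\| \frac{d \pi_{\theta_t}^\top (\hat r_t - \hat b_t)}{d \theta_t} \bigg\|_2^2 \le \bar R_{\max}^2 \cdot \sum_{a \in [K]} \bigl( \sI\{a_t = a\} - \pi_{\theta_t}(a) \bigr)^2 \le 2 \, \bar R_{\max}^2 \, (1 - \pi_{\theta_t}(a_t))^2,
\end{align*}
using $\|\cdot\|_2 \le \|\cdot\|_1$ on the tail mass as in the baseline-free derivation. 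Averaging over $a_t \sim \pi_{\theta_t}$ and bounding $\sum_a \pi_{\theta_t}(a)(1-\pi_{\theta_t}(a))^2 \le \sum_a \pi_{\theta_t}(a) = 1$ concludes the bound.

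The proof is genuinely routine and not the main obstacle in this section; the only subtlety worth flagging is that the unbiasedness step requires $B_t$ to be measurable with respect to $\gF_t$ (i.e., chosen before observing $a_t$ and $R_t(a_t)$), which is the standard convention for baselines and is implicit in \cref{alg:gradient_bandit_algorithm_sampled_reward_baselines}. Once this is noted, the rest of the proof is a direct textual substitution of $R_t(a_t) - B_t$ for $R_t(a_t)$ throughout the proof of \cref{prop:unbiased_stochastic_gradient_bounded_scale_sampled_reward}.
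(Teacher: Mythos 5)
Your proposal is correct and follows essentially the same route as the paper's proof: expand the per-coordinate stochastic gradient as $(\sI\{a_t=a\}-\pi_{\theta_t}(a))\cdot(R_t(a_t)-B_t)$, show the baseline contribution cancels in conditional expectation, and repeat the baseline-free variance bound with $\bar R_{\max}$ in place of $R_{\max}$. Your remark that unbiasedness requires $B_t$ to be $\gF_t$-measurable is a worthwhile observation that the paper leaves implicit, but it does not change the argument.
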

\begin{proof}
\textbf{First part, \cref{eq:unbiased_stochastic_gradient_bounded_scale_sampled_reward_baselines_result_1_appendix}.} For all action $a \in [K]$, the true softmax PG is,
\begin{align}
\label{eq:unbiased_stochastic_gradient_bounded_scale_sampled_reward_baselines_result_1_intermediate_1}
    \frac{d \pi_{\theta_t}^\top r}{d \theta_t(a)} = \pi_{\theta_t}(a) \cdot \left( r(a) - \pi_{\theta_t}^\top r \right).
\end{align}
For all $a \in [K]$, the stochastic softmax PG is,
\begin{align}
\label{eq:unbiased_stochastic_gradient_bounded_scale_sampled_reward_baselines_result_1_intermediate_2}
    \frac{d \pi_{\theta_t}^\top \big( \hat{r}_t - \hat{b}_t \big) }{d \theta_t(a)} &= \pi_{\theta_t}(a) \cdot \left( \hat{r}_t(a) - \hat{b}_t(a) - \pi_{\theta_t}^\top \big( \hat{r}_t - \hat{b}_t \big) \right) \\
    &= \pi_{\theta_t}(a) \cdot \left( \hat{r}_t(a) -\hat{b}_t(a) - \left( R_t(a_t) - B_t \right) \right) \\
    &= \left( \sI\left\{ a_t = a \right\} - \pi_{\theta_t}(a) \right) \cdot \left( R_t(a_t) - B_t \right).
\end{align}
For the sampled action $a_t$, we have,
\begin{align}
\label{eq:unbiased_stochastic_gradient_bounded_scale_sampled_reward_baselines_result_1_intermediate_3}
    \expectation_{ R_t(a_t) \sim P_{a_t} }{ \bigg[ \frac{d \pi_{\theta_t}^\top \big( \hat{r}_t - \hat{b}_t \big) }{d \theta_t(a_t)} \bigg] } &= \expectation_{ R_t(a_t) \sim P_{a_t} }{ \Big[ \left( 1 - \pi_{\theta_t}(a_t) \right) \cdot \left( R_t(a_t) - B_t \right) \Big] } \\
    &= \left( 1 - \pi_{\theta_t}(a_t) \right) \cdot \expectation_{ R_t(a_t) \sim P_{a_t} }{ \Big[ R_t(a_t) - B_t \Big] } \\
    &= \left( 1 - \pi_{\theta_t}(a_t) \right) \cdot \left( r(a_t) - B_t \right).
\end{align}
For any other not sampled action $a \not= a_t$, we have,
\begin{align}
\label{eq:unbiased_stochastic_gradient_bounded_scale_sampled_reward_baselines_result_1_intermediate_4}
    \expectation_{ R_t(a_t) \sim P_{a_t} }{ \bigg[ \frac{d \pi_{\theta_t}^\top \big( \hat{r}_t - \hat{b}_t \big) }{d \theta_t(a)} \bigg] } &= \expectation_{ R_t(a_t) \sim P_{a_t} }{ \Big[ - \pi_{\theta_t}(a) \cdot \left( R_t(a_t) - B_t \right) \Big] } \\
    &= - \pi_{\theta_t}(a) \cdot \expectation_{ R_t(a_t) \sim P_{a_t} }{ \Big[ R_t(a_t) - B_t \Big] } \\
    &= - \pi_{\theta_t}(a) \cdot \left( r(a_t) - B_t \right).
\end{align}
Combing \cref{eq:unbiased_stochastic_gradient_bounded_scale_sampled_reward_baselines_result_1_intermediate_3,eq:unbiased_stochastic_gradient_bounded_scale_sampled_reward_baselines_result_1_intermediate_4}, we have, for all $a \in [K]$,
\begin{align}
\label{eq:unbiased_stochastic_gradient_bounded_scale_sampled_reward_baselines_result_1_intermediate_5}
    \expectation_{ R_t(a_t) \sim P_{a_t} }{ \bigg[ \frac{d \pi_{\theta_t}^\top \big( \hat{r}_t - \hat{b}_t \big) }{d \theta_t(a)} \bigg] } = \left( \sI\left\{ a_t = a \right\} - \pi_{\theta_t}(a) \right) \cdot \left( r(a_t) - B_t \right).
\end{align}
Taking expectation over $a_t \sim \pi_{\theta_t}(\cdot)$, we have,
\begin{align}
\label{eq:unbiased_stochastic_gradient_bounded_scale_sampled_reward_baselines_result_1_intermediate_6}
    \mathbb{E}_t{ \bigg[ \frac{d \pi_{\theta_t}^\top \big( \hat{r}_t - \hat{b}_t \big) }{d \theta_t(a)} \bigg] } &= \probability{\left( a_t = a \right) } \cdot \expectation_{ R_t(a_t) \sim P_{a_t} }{ \bigg[ \frac{d \pi_{\theta_t}^\top \big( \hat{r}_t - \hat{b}_t \big) }{d \theta_t(a)} \ \Big| \ a_t = a \bigg] } 
    \\& \qquad
    + \probability{\left( a_t \not= a \right) } \cdot \expectation_{ R_t(a_t) \sim P_{a_t} }{ \bigg[ \frac{d \pi_{\theta_t}^\top \big( \hat{r}_t - \hat{b}_t \big) }{d \theta_t(a)} \ \Big| \ a_t \not= a \bigg] } \\
    &= \pi_{\theta_t}(a) \cdot \left( 1 - \pi_{\theta_t}(a) \right) \cdot \left( r(a) - B_t \right) + \sum_{a^\prime \not= a}{ \pi_{\theta_t}(a^\prime) \cdot \left( - \pi_{\theta_t}(a) \right) \cdot \left( r(a^\prime) - B_t \right) } \\
    &= \pi_{\theta_t}(a) \cdot \sum_{a^\prime \not= a}{ \pi_{\theta_t}(a^\prime) \cdot \Big[ \left( r(a) - B_t \right) - \left( r(a^\prime) - B_t \right) \Big] } \\
    &= \pi_{\theta_t}(a) \cdot \left( r(a) - \pi_{\theta_t}^\top r \right).
\end{align}
Combining \cref{eq:unbiased_stochastic_gradient_bounded_scale_sampled_reward_baselines_result_1_intermediate_1,eq:unbiased_stochastic_gradient_bounded_scale_sampled_reward_baselines_result_1_intermediate_6}, we have, for all $a \in [K]$,
\begin{align}
\label{eq:unbiased_stochastic_gradient_bounded_scale_sampled_reward_baseline_result_1_intermediate_7}
    \mathbb{E}_t{ \bigg[ \frac{d \pi_{\theta_t}^\top \big( \hat{r}_t - \hat{b}_t \big) }{d \theta_t(a)} \bigg] } = \frac{d \pi_{\theta_t}^\top r}{d \theta_t(a)},
\end{align}
which implies \cref{eq:unbiased_stochastic_gradient_bounded_scale_sampled_reward_baselines_result_1_appendix} since $a \in [K]$ is arbitrary.

\textbf{Second part, \cref{eq:unbiased_stochastic_gradient_bounded_scale_sampled_reward_baselines_result_2_appendix}.} The squared stochastic PG norm is,
\begin{align}
\label{eq:unbiased_stochastic_gradient_bounded_scale_sampled_reward_baselines_result_2_intermediate_1}
    \bigg\| \frac{d \pi_{\theta_t}^\top \big( \hat{r}_t - \hat{b}_t \big) }{d \theta_t} \bigg\|_2^2 &= \sum_{a \in [K]}{ \left( \frac{d \pi_{\theta_t}^\top \big( \hat{r}_t - \hat{b}_t \big) }{d \theta_t(a)} \right)^2 } \\
    &= \sum_{a \in [K]}{ \left( \sI\left\{ a_t = a \right\} - \pi_{\theta_t}(a) \right)^2 \cdot \left( R_t(a_t) - B_t \right)^2 } \qquad \left( \text{by \cref{eq:unbiased_stochastic_gradient_bounded_scale_sampled_reward_baselines_result_1_intermediate_2}} \right) \\
    &\le \bar{R}_{\max}^2 \cdot \sum_{a \in [K]}{ \left( \sI\left\{ a_t = a \right\} - \pi_{\theta_t}(a) \right)^2 } \qquad \left( \text{by \cref{eq:unbiased_stochastic_gradient_bounded_scale_sampled_reward_baselines_result_3_appendix}} \right) \\
    &= \bar{R}_{\max}^2 \cdot \bigg[ \left( 1 - \pi_{\theta_t}(a_t) \right)^2 + \sum_{a \not= a_t}{ \pi_{\theta_t}(a)^2 } \bigg] \\
    &\le \bar{R}_{\max}^2 \cdot \bigg[ \left( 1 - \pi_{\theta_t}(a_t) \right)^2 + \Big( \sum_{a \not= a_t}{ \pi_{\theta_t}(a) } \Big)^2 \bigg] \qquad \left( \left\| x \right\|_2 \le \left\| x \right\|_1 \right) \\
    &= 2 \cdot \bar{R}_{\max}^2 \cdot \left( 1 - \pi_{\theta_t}(a_t) \right)^2.
\end{align}
Therefore, we have, for all $a \in [K]$, conditioning on $a_t = a$,
\begin{align}
\label{eq:unbiased_stochastic_gradient_bounded_scale_sampled_reward_baselines_result_2_intermediate_2}
    \Bigg[ \bigg\| \frac{d \pi_{\theta_t}^\top \big( \hat{r}_t - \hat{b}_t \big) }{d \theta_t} \bigg\|_2^2 \ \Big| \ a_t = a \Bigg] \le 2 \cdot \bar{R}_{\max}^2 \cdot \left( 1 - \pi_{\theta_t}(a) \right)^2.
\end{align}
Taking expectation over $a_t \sim \pi_{\theta_t}(\cdot)$, we have,
\begin{align}
\label{eq:unbiased_stochastic_gradient_bounded_scale_sampled_reward_baselines_result_2_intermediate_3}
    \mathbb{E}_t{ \left[ \bigg\| \frac{d \pi_{\theta_t}^\top \big( \hat{r}_t - \hat{b}_t \big) }{d \theta_t} \bigg\|_2^2 \right] } &= \sum_{a \in [K]}{ \probability{\left( a_t = a \right) } \cdot \Bigg[ \bigg\| \frac{d \pi_{\theta_t}^\top \big( \hat{r}_t - \hat{b}_t \big) }{d \theta_t} \bigg\|_2^2 \ \Big| \ a_t = a \Bigg] } \\
    &\le \sum_{a \in [K]}{ \pi_{\theta_t}(a) \cdot 2 \cdot \bar{R}_{\max}^2 \cdot \left( 1 - \pi_{\theta_t}(a) \right)^2 } \\
    &\le 2 \cdot \bar{R}_{\max}^2 \cdot \sum_{a \in [K]}{ \pi_{\theta_t}(a)} \qquad \left( \pi_{\theta_t}(a) \in (0, 1) \text{ for all } a \in [K] \right) \\
    &= 2 \,  \bar{R}_{\max}^2. \qedhere
\end{align}
\end{proof}

\begin{lemma}[NS between iterates]
\label{lem:non_uniform_smoothness_special_two_iterations_baselines}
Using \cref{alg:gradient_bandit_algorithm_sampled_reward_baselines} with $\eta \in \big(0, 2 / (9 \, \bar{R}_{\max} ) \big)$, we have, for all $t \ge 1$,
\begin{align}
    D(\theta_{t+1}, \theta_t) \coloneqq \left| ( \pi_{\theta_{t+1}} - \pi_{\theta_t})^\top r - \Big\langle \frac{d \pi_{\theta_t}^\top r}{d \theta_t}, \theta_{t+1} - \theta_t \Big\rangle \right| \le \frac{\beta(\theta_t)}{2} \cdot \| \theta_{t+1} - \theta_t \|_2^2,
\end{align}
where $\bar{R}_{\max}$ is from \cref{eq:unbiased_stochastic_gradient_bounded_scale_sampled_reward_baselines_result_3_appendix}, and
\begin{align}
    \beta(\theta_t) = \frac{6}{2 - 9 \cdot \bar{R}_{\max} \cdot \eta } \cdot \bigg\| \frac{d \pi_{\theta_t}^\top r}{d \theta_t} \bigg\|_2.
\end{align}
\end{lemma}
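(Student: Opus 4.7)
The proof will parallel that of Lemma~\ref{lem:non_uniform_smoothness_special_two_iterations}, with the only essential change being that the variance bound now comes from Lemma~\ref{lem:unbiased_stochastic_gradient_bounded_scale_sampled_reward_baselines} rather than Proposition~\ref{prop:unbiased_stochastic_gradient_bounded_scale_sampled_reward}, so the constant $R_{\max}$ is replaced by $\bar{R}_{\max}$ throughout. The starting step is Taylor's theorem: for some $\zeta \in [0,1]$ and $\theta_\zeta \coloneqq \theta_t + \zeta(\theta_{t+1}-\theta_t)$,
\begin{align*}
D(\theta_{t+1},\theta_t) = \tfrac{1}{2} \left| (\theta_{t+1}-\theta_t)^\top \tfrac{d^2 \pi_{\theta_\zeta}^\top r}{d\theta_\zeta^2} (\theta_{t+1}-\theta_t) \right| \le \tfrac{3}{2} \bigg\| \tfrac{d \pi_{\theta_\zeta}^\top r}{d\theta_\zeta} \bigg\|_2 \cdot \| \theta_{t+1}-\theta_t \|_2^2,
\end{align*}
by Lemma~\ref{lem:non_uniform_smoothness_softmax_special}.

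The next step is to control the gradient norm at the intermediate point $\theta_\zeta$ by the gradient norm at $\theta_t$. Using the fundamental theorem of calculus together with Lemma~\ref{lem:non_uniform_smoothness_softmax_special}, I obtain
\begin{align*}
\bigg\| \tfrac{d \pi_{\theta_\zeta}^\top r}{d \theta_\zeta} - \tfrac{d \pi_{\theta_t}^\top r}{d \theta_t} \bigg\|_2 \le 3\,\eta \bigg\| \tfrac{d \pi_{\theta_t}^\top (\hat{r}_t-\hat{b}_t)}{d \theta_t} \bigg\|_2 \int_0^1 \bigg\| \tfrac{d \pi_{\theta_{\zeta_1}}^\top r}{d \theta_{\zeta_1}} \bigg\|_2 d\zeta_1,
\end{align*}
since $\theta_\zeta-\theta_t = \zeta \cdot \eta \cdot \frac{d\pi_{\theta_t}^\top(\hat{r}_t-\hat{b}_t)}{d\theta_t}$ with $\zeta\in[0,1]$. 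Iterating this contraction-type inequality (as in \cref{eq:non_uniform_smoothness_special_two_iterations_intermediate_5,eq:non_uniform_smoothness_special_two_iterations_intermediate_6}) produces a geometric series whose common ratio is $3\,\eta\,\|d\pi_{\theta_t}^\top(\hat{r}_t-\hat{b}_t)/d\theta_t\|_2$.

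The key quantitative substitution is then made: by \cref{eq:unbiased_stochastic_gradient_bounded_scale_sampled_reward_baselines_result_2_intermediate_1} in the proof of Lemma~\ref{lem:unbiased_stochastic_gradient_bounded_scale_sampled_reward_baselines}, $\|d\pi_{\theta_t}^\top(\hat{r}_t-\hat{b}_t)/d\theta_t\|_2 \le \sqrt{2}\,\bar{R}_{\max}\cdot(1-\pi_{\theta_t}(a_t)) < \sqrt{2}\,\bar{R}_{\max}$. Combined with the assumption $\eta < 2/(9\,\bar{R}_{\max})$, this gives $3\,\eta\,\|d\pi_{\theta_t}^\top(\hat{r}_t-\hat{b}_t)/d\theta_t\|_2 < 1$, so the geometric series sums and yields
\begin{align*}
\bigg\| \tfrac{d \pi_{\theta_\zeta}^\top r}{d \theta_\zeta} \bigg\|_2 \le \tfrac{1}{1 - 3\,\eta\cdot\sqrt{2}\,\bar{R}_{\max}} \bigg\| \tfrac{d \pi_{\theta_t}^\top r}{d \theta_t} \bigg\|_2 < \tfrac{1}{1 - \tfrac{9}{2}\bar{R}_{\max}\,\eta} \bigg\| \tfrac{d \pi_{\theta_t}^\top r}{d \theta_t} \bigg\|_2.
\end{align*}
Plugging this bound back into the Taylor estimate gives $\beta(\theta_t) = \tfrac{6}{2-9\,\bar{R}_{\max}\,\eta}\cdot\|d\pi_{\theta_t}^\top r/d\theta_t\|_2$, as claimed.

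There is no real obstacle: the proof is essentially a mechanical adaptation of Lemma~\ref{lem:non_uniform_smoothness_special_two_iterations}. The only point requiring care is to verify that the uniform scale bound used to ensure the geometric-series convergence (namely $\|d\pi_{\theta_t}^\top(\hat{r}_t-\hat{b}_t)/d\theta_t\|_2 \le \sqrt{2}\,\bar{R}_{\max}$) indeed holds pointwise (not just in expectation), which follows directly from \cref{eq:unbiased_stochastic_gradient_bounded_scale_sampled_reward_baselines_result_2_intermediate_1}, and that the learning-rate threshold is stated correctly in terms of $\bar{R}_{\max}$ (which is consistent with the statement $\eta \in (0, 2/(9\,\bar{R}_{\max}))$).
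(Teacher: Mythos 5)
Your proposal is correct and follows exactly the route the paper intends: the paper's own proof of this lemma is literally the one-line instruction to repeat the proof of \cref{lem:non_uniform_smoothness_special_two_iterations} with $R_{\max}$ replaced by $\bar{R}_{\max}$ and $\frac{d \pi_{\theta_t}^\top \hat{r}_t}{d\theta_t}$ replaced by $\frac{d \pi_{\theta_t}^\top (\hat{r}_t - \hat{b}_t)}{d\theta_t}$, which is precisely the substitution you carry out. Your added care in checking that the pointwise (not merely in-expectation) bound $\big\|\frac{d \pi_{\theta_t}^\top (\hat{r}_t - \hat{b}_t)}{d\theta_t}\big\|_2 \le \sqrt{2}\,\bar{R}_{\max}$ drives the geometric-series convergence, and that $3\sqrt{2} < 9/2$ yields the stated constant, matches the original argument step for step.
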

\begin{proof}
In the proofs for \cref{lem:non_uniform_smoothness_special_two_iterations}, replacing $R_{\max}$ with $\bar{R}_{\max}$, and replacing $\frac{d \pi_{\theta_t}^\top \hat{r}_t }{d \theta_t}$ with $\frac{d \pi_{\theta_t}^\top ( \hat{r}_t - \hat{b}_t) }{d \theta_t}$, we have the results.
\end{proof}

\textbf{\cref{lem:strong_growth_conditions_sampled_reward_baselines}} (Strong growth conditions / Self-bounding noise property)\textbf{.}
Using \cref{alg:gradient_bandit_algorithm_sampled_reward_baselines}, we have, for all $t \ge 1$,
\begin{align}
    \mathbb{E}_t{ \left[ \bigg\| \frac{d \pi_{\theta_t}^\top \big( \hat{r}_t - \hat{b}_t \big) }{d \theta_t} \bigg\|_2^2 \right] } 
    \le \frac{8 \cdot \bar{R}_{\max}^2 \cdot R_{\max} \cdot K^{3/2} }{ \Delta^2 } \cdot  \bigg\| \frac{d \pi_{\theta_t}^\top r}{d \theta_t} \bigg\|_2,
\end{align}
where $\Delta \coloneqq \min_{i \not= j}{ | r(i) - r(j) | } $, and $\bar{R}_{\max}$ is from \cref{eq:unbiased_stochastic_gradient_bounded_scale_sampled_reward_baselines_result_3_appendix}.
\begin{proof}
Given $t \ge 1$, denote $k_t$ as the action with largest probability, i.e., $k_t \coloneqq \argmax_{a \in [K]}{ \pi_{\theta_t}(a) }$. We have,
\begin{align}
\label{eq:strong_growth_conditions_sampled_reward_baselines_intermedita_1}
    \pi_{\theta_t}(k_t) \ge \frac{1}{K}.
\end{align}
According to \cref{eq:unbiased_stochastic_gradient_bounded_scale_sampled_reward_baselines_result_2_intermediate_3}, we have,
\begin{align}
\label{eq:strong_growth_conditions_sampled_reward_baselines_intermedita_2}
    \mathbb{E}_t{ \left[ \bigg\| \frac{d \pi_{\theta_t}^\top \big( \hat{r}_t - \hat{b}_t \big) }{d \theta_t} \bigg\|_2^2 \right] } &= \sum_{a \in [K]}{ \probability{\left( a_t = a \right) } \cdot \Bigg[ \bigg\| \frac{d \pi_{\theta_t}^\top \big( \hat{r}_t - \hat{b}_t \big) }{d \theta_t} \bigg\|_2^2 \ \Big| \ a_t = a \Bigg] } \\
    &\le \sum_{a \in [K]}{ \pi_{\theta_t}(a) \cdot 2 \cdot \bar{R}_{\max}^2 \cdot \left( 1 - \pi_{\theta_t}(a) \right)^2 } \\
    &= 2 \cdot \bar{R}_{\max}^2 \cdot \bigg[ \pi_{\theta_t}(k_t) \cdot \left( 1 - \pi_{\theta_t}(k_t) \right)^2 + \sum_{a \not= k_t}{ \pi_{\theta_t}(a) \cdot \left( 1 - \pi_{\theta_t}(a) \right)^2  } \bigg] \\
    &\le 2 \cdot \bar{R}_{\max}^2 \cdot \bigg[ 1 - \pi_{\theta_t}(k_t) + \sum_{a \not= k_t}{ \pi_{\theta_t}(a) } \bigg] \qquad \left( \pi_{\theta_t}(a) \in (0, 1) \text{ for all } a \in [K] \right) \\
    &= 4 \cdot \bar{R}_{\max}^2 \cdot \left( 1 - \pi_{\theta_t}(k_t) \right).
\end{align}
Therefore, we have,
\begin{align}
    \mathbb{E}_t{ \left[ \bigg\| \frac{d \pi_{\theta_t}^\top \big( \hat{r}_t - \hat{b}_t \big) }{d \theta_t} \bigg\|_2^2 \right] } &\le 4 \cdot \bar{R}_{\max}^2 \cdot \left( 1 - \pi_{\theta_t}(k_t) \right) \\
    &\le \frac{4 \cdot \bar{R}_{\max}^2 \cdot K}{ \Delta^2 } \cdot \sum_{a \in [K] }{ \pi_{\theta_t}(a) \cdot (r(a) - \pi_{\theta_t}^\top r)^2 } \qquad \left( \text{by \cref{eq:strong_growth_conditions_sampled_reward_intermedita_6}} \right) \\
    &\le \frac{4 \cdot \bar{R}_{\max}^2 \cdot K}{ \Delta^2 } \cdot 2 \cdot \sqrt{K} \cdot R_{\max} \cdot \bigg\| \frac{d \pi_{\theta_t}^\top r}{d \theta_t} \bigg\|_2 \qquad \left( \text{by \cref{eq:strong_growth_conditions_sampled_reward_intermedita_4}} \right) \\
    &= \frac{8 \cdot \bar{R}_{\max}^2 \cdot R_{\max} \cdot K^{3/2} }{ \Delta^2 } \cdot \bigg\| \frac{d \pi_{\theta_t}^\top r}{d \theta_t} \bigg\|_2. \qedhere
\end{align}
\end{proof}

\begin{lemma}[Constant learning rate]
Using \cref{alg:gradient_bandit_algorithm_sampled_reward_baselines} with $\eta = \frac{\Delta^2}{40 \cdot K^{3/2} \cdot \bar{R}_{\max}^2 \cdot R_{\max} }$, we have, for all $t \ge 1$,
\begin{align}
    \pi_{\theta_t}^\top r - \EEt{ \pi_{\theta_{t+1}}^\top r } \le - \frac{\Delta^2}{80 \cdot K^{3/2} \cdot \bar{R}_{\max}^2 \cdot R_{\max} } \cdot \bigg\| \frac{d \pi_{\theta_t}^\top r}{d \theta_t} \bigg\|_2^2,
\end{align}
where $\bar{R}_{\max}$ is from \cref{eq:unbiased_stochastic_gradient_bounded_scale_sampled_reward_baselines_result_3_appendix}.
\end{lemma}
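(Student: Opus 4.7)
The plan is to mirror the proof of \cref{lem:constant_learning_rates_expected_progress_sampled_reward} essentially line-by-line, with the single substitution that every $R_{\max}^2$ arising from a ``squared noise scale'' is replaced by $\bar{R}_{\max}^2$ (since the reward range after baseline subtraction drives the stochastic gradient magnitude), while the $R_{\max}$ coming from the true reward scale in the growth-condition bound stays put. Concretely, I will invoke \cref{lem:non_uniform_smoothness_special_two_iterations_baselines} and \cref{lem:strong_growth_conditions_sampled_reward_baselines} in place of \cref{lem:non_uniform_smoothness_special_two_iterations} and \cref{lem:strong_growth_conditions_sampled_reward} respectively.

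First I would verify that the chosen step size $\eta = \frac{\Delta^2}{40 K^{3/2} \bar{R}_{\max}^2 R_{\max}}$ lies in the admissible range $\big(0, \frac{2}{9 \bar{R}_{\max}}\big)$ required by \cref{lem:non_uniform_smoothness_special_two_iterations_baselines}. The same algebra as in \crefrange{eq:constant_learning_rates_expected_progress_sampled_reward_intermediate_0a}{eq:constant_learning_rates_expected_progress_sampled_reward_intermediate_0b} works, using $\Delta \le 2 R_{\max}$ (so $\Delta^2/R_{\max}^2 \le 4$), $K \ge 2$, and canceling the remaining factor $\bar{R}_{\max}$; this gives $\eta < \frac{4}{45 \bar{R}_{\max}} < \frac{2}{9 \bar{R}_{\max}}$. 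Consequently $\beta(\theta_t) \le 5 \cdot \bigl\| \frac{d \pi_{\theta_t}^\top r}{d \theta_t} \bigr\|_2$ exactly as before.

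Next I would apply the NS inequality to get
\begin{align*}
    \pi_{\theta_t}^\top r - \pi_{\theta_{t+1}}^\top r \le - \eta \cdot \Big\langle \tfrac{d \pi_{\theta_t}^\top r}{d \theta_t}, \tfrac{d \pi_{\theta_t}^\top (\hat r_t - \hat b_t)}{d \theta_t} \Big\rangle + \tfrac{5}{2} \cdot \Big\| \tfrac{d \pi_{\theta_t}^\top r}{d \theta_t} \Big\|_2 \cdot \eta^2 \cdot \Big\| \tfrac{d \pi_{\theta_t}^\top (\hat r_t - \hat b_t)}{d \theta_t} \Big\|_2^2,
\end{align*}
then take conditional expectation. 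The cross term collapses to $-\eta \bigl\| \frac{d \pi_{\theta_t}^\top r}{d \theta_t} \bigr\|_2^2$ by the unbiasedness conclusion of \cref{lem:unbiased_stochastic_gradient_bounded_scale_sampled_reward_baselines}, and the quadratic term is handled by \cref{lem:strong_growth_conditions_sampled_reward_baselines}, yielding an upper bound of the form
\begin{align*}
    \Bigl( - \eta + \eta^2 \cdot \tfrac{20 \bar R_{\max}^2 R_{\max} K^{3/2}}{\Delta^2} \Bigr) \cdot \Big\| \tfrac{d \pi_{\theta_t}^\top r}{d \theta_t} \Big\|_2^2.
\end{align*}

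Finally, substituting the specified $\eta$ makes the parenthesized coefficient equal to $-\eta/2 = - \tfrac{\Delta^2}{80 K^{3/2} \bar R_{\max}^2 R_{\max}}$, delivering the claim. There is no genuine obstacle here, since the entire argument is a template copy of the baseline-free case; the only point demanding mild care is tracking which factor of the reward range is $R_{\max}$ (coming from the deterministic mean-reward scale in the N\L{}-type bound \cref{eq:strong_growth_conditions_sampled_reward_intermedita_4}) versus $\bar{R}_{\max}$ (coming from $\Bigl|R_t(a_t) - B_t\Bigr|$ in the noise variance and smoothness calculations), so that the final constants match the statement.
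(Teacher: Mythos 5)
Your proposal tracks the paper's proof essentially line by line: check that the step size is admissible, invoke \cref{lem:non_uniform_smoothness_special_two_iterations_baselines} and \cref{lem:strong_growth_conditions_sampled_reward_baselines} in place of their baseline-free counterparts, take conditional expectations using the unbiasedness part of \cref{lem:unbiased_stochastic_gradient_bounded_scale_sampled_reward_baselines}, and substitute $\eta$ so that the coefficient becomes $-\eta + \eta^2 \cdot \frac{20\,\bar{R}_{\max}^2 R_{\max} K^{3/2}}{\Delta^2} = -\eta/2$. That is exactly the paper's argument, and your bookkeeping of which reward range ($R_{\max}$ from the true-gradient lower bound in \cref{eq:strong_growth_conditions_sampled_reward_intermedita_4}, $\bar{R}_{\max}$ from the noise scale) enters where is the right thing to watch.

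The one step that does not go through as you wrote it is the step-size verification. You bound $\Delta^2 \le 4 R_{\max}^2$ and then say the remaining factor of $\bar{R}_{\max}$ cancels; but this yields $\eta \le \frac{R_{\max}}{10\, K^{3/2} \bar{R}_{\max}^2}$, and concluding $\eta < \frac{2}{9 \bar{R}_{\max}}$ from that requires $R_{\max} \lesssim K^{3/2}\bar{R}_{\max}$, which can fail: a baseline that substantially shrinks the effective range makes $R_{\max}/\bar{R}_{\max}$ arbitrarily large. The correct route, which is what the paper does in \cref{eq:constant_learning_rates_expected_progress_sampled_reward_baselines_intermediate_0a}, is to bound the two factors of $\Delta$ asymmetrically, $\Delta^2 \le (2\bar{R}_{\max})(2R_{\max})$, so that $\frac{\Delta^2}{\bar{R}_{\max} R_{\max}} \le 4$ and hence $\eta \le \frac{1}{10\, K^{3/2}\, \bar{R}_{\max}} < \frac{2}{9\bar{R}_{\max}}$. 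The extra inequality $\Delta \le 2\bar{R}_{\max}$ is immediate from \cref{eq:unbiased_stochastic_gradient_bounded_scale_sampled_reward_baselines_result_3_appendix}, since $r(i) - B_t$ and $r(j) - B_t$ both lie in $[-\bar{R}_{\max}, \bar{R}_{\max}]$, so $|r(i)-r(j)| \le 2\bar{R}_{\max}$. With that one-line repair the rest of your argument is complete.
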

\begin{proof}
Using the learning rate,
\begin{align}
\label{eq:constant_learning_rates_expected_progress_sampled_reward_baselines_intermediate_0a}
    \eta &= \frac{\Delta^2}{40 \cdot K^{3/2} \cdot \bar{R}_{\max}^2 \cdot R_{\max} } \\
    &= \frac{4}{45 \cdot \bar{R}_{\max} } \cdot \frac{\Delta^2}{\bar{R}_{\max} \cdot R_{\max}} \cdot \frac{1}{K^{3/2}} \cdot \frac{45}{4} \cdot \frac{1}{40} \\
    &\le \frac{4}{45 \cdot \bar{R}_{\max} } \cdot 4 \cdot \frac{1}{2 \cdot \sqrt{2}} \cdot  \frac{45}{4} \cdot \frac{1}{40}, \qquad \left( \Delta \le 2 \cdot R_{\max}, \, \Delta \le 2 \cdot \bar{R}_{\max}, \text{ and } K \ge 2 \right) \\
\label{eq:constant_learning_rates_expected_progress_sampled_reward_baselines_intermediate_0b}
    &< \frac{4}{45 \cdot \bar{R}_{\max} },
\end{align}
we have $\eta \in \big(0, 2 / (9 \, \bar{R}_{\max}) \big)$. According to \cref{lem:non_uniform_smoothness_special_two_iterations_baselines}, we have,
\begin{align}
\MoveEqLeft
\label{eq:constant_learning_rates_expected_progress_sampled_reward_baselines_intermediate_1}
    \left| ( \pi_{\theta_{t+1}} - \pi_{\theta_t})^\top r - \Big\langle \frac{d \pi_{\theta_t}^\top r}{d \theta_t}, \theta_{t+1} - \theta_t \Big\rangle \right| \le \frac{3}{2 - 9 \cdot \bar{R}_{\max} \cdot \eta } \cdot \bigg\| \frac{d \pi_{\theta_t}^\top r}{d \theta_t} \bigg\|_2 \cdot \| \theta_{t+1} - \theta_t \|_2^2 \\
    &\le \frac{3}{2 - 9 \cdot \bar{R}_{\max} \cdot \frac{4}{45 \cdot \bar{R}_{\max} } } \cdot \bigg\| \frac{d \pi_{\theta_t}^\top r}{d \theta_t} \bigg\|_2 \cdot \| \theta_{t+1} - \theta_t \|_2^2 \qquad \left( \text{by \cref{eq:constant_learning_rates_expected_progress_sampled_reward_baselines_intermediate_0b}} \right) \\
    &= \frac{5}{2} \cdot \bigg\| \frac{d \pi_{\theta_t}^\top r}{d \theta_t} \bigg\|_2 \cdot \| \theta_{t+1} - \theta_t \|_2^2,
\end{align}
which implies that,
\begin{align}
\label{eq:constant_learning_rates_expected_progress_sampled_reward_baselines_intermediate_2}
    \pi_{\theta_t}^\top r - \pi_{\theta_{t+1}}^\top r &\le - \Big\langle \frac{d \pi_{\theta_t}^\top r}{d \theta_t}, \theta_{t+1} - \theta_t \Big\rangle +  \frac{5}{2} \cdot \bigg\| \frac{d \pi_{\theta_t}^\top r}{d \theta_t} \bigg\|_2 \cdot \| \theta_{t+1} - \theta_{t} \|_2^2 \\
    &= - \eta \cdot \Big\langle \frac{d \pi_{\theta_t}^\top r}{d \theta_t}, \frac{d \pi_{\theta_t}^\top \big( \hat{r}_t - \hat{b}_t \big) }{d \theta_t} \Big\rangle + \frac{5}{2} \cdot \bigg\| \frac{d \pi_{\theta_t}^\top r}{d \theta_t} \bigg\|_2 \cdot \eta^2 \cdot \bigg\| \frac{d \pi_{\theta_t}^\top \big( \hat{r}_t - \hat{b}_t \big) }{d \theta_t} \bigg\|_2^2,
\end{align}
where the last equation uses \cref{alg:gradient_bandit_algorithm_sampled_reward_baselines}.
Taking expectation over $a_t \sim \pi_{\theta_t}(\cdot)$ and $R_t(a_t) \sim P_{a_t}$, we have,
\begin{align}
\MoveEqLeft
\label{eq:constant_learning_rates_expected_progress_sampled_reward_baselines_intermediate_3}
    \pi_{\theta_t}^\top r - \EEt{ \pi_{\theta_{t+1}}^\top r } \le - \eta \cdot \Big\langle \frac{d \pi_{\theta_t}^\top r}{d \theta_t}, \mathbb{E}_t{ \bigg[ \frac{d \pi_{\theta_t}^\top \hat{r}_t}{d \theta_t} \bigg] } \Big\rangle + \frac{5}{2} \cdot \bigg\| \frac{d \pi_{\theta_t}^\top r}{d \theta_t} \bigg\|_2 \cdot \eta^2 \cdot \mathbb{E}_t{ \Bigg[ \bigg\| \frac{d \pi_{\theta_t}^\top \hat{r}_t}{d \theta_t} \bigg\|_2^2  \Bigg] } \\
    &= - \eta \cdot \bigg\| \frac{d \pi_{\theta_t}^\top r}{d \theta_t} \bigg\|_2^2 + \frac{5}{2} \cdot \bigg\| \frac{d \pi_{\theta_t}^\top r}{d \theta_t} \bigg\|_2 \cdot \eta^2 \cdot \mathbb{E}_t{ \Bigg[ \bigg\| \frac{d \pi_{\theta_t}^\top \big( \hat{r}_t - \hat{b}_t \big) }{d \theta_t} \bigg\|_2^2  \Bigg] } \qquad \left( \text{by \cref{lem:unbiased_stochastic_gradient_bounded_scale_sampled_reward_baselines}} \right) \\
    &\le - \eta \cdot \bigg\| \frac{d \pi_{\theta_t}^\top r}{d \theta_t} \bigg\|_2^2 + \frac{5}{2} \cdot \bigg\| \frac{d \pi_{\theta_t}^\top r}{d \theta_t} \bigg\|_2 \cdot \eta^2 \cdot \frac{8 \cdot \bar{R}_{\max}^2 \cdot R_{\max} \cdot K^{3/2} }{ \Delta^2 } \cdot \bigg\| \frac{d \pi_{\theta_t}^\top r}{d \theta_t} \bigg\|_2 \qquad \left( \text{by \cref{lem:strong_growth_conditions_sampled_reward_baselines}} \right) \\
    &= \left( - \eta + \eta^2 \cdot \frac{20 \cdot \bar{R}_{\max}^2 \cdot R_{\max} \cdot K^{3/2} }{ \Delta^2 } \right) \cdot \bigg\| \frac{d \pi_{\theta_t}^\top r}{d \theta_t} \bigg\|_2^2 \\
    &= - \frac{\Delta^2}{80 \cdot K^{3/2} \cdot \bar{R}_{\max}^2 \cdot R_{\max} } \cdot \bigg\| \frac{d \pi_{\theta_t}^\top r}{d \theta_t} \bigg\|_2^2. \qquad \left( \text{by \cref{eq:constant_learning_rates_expected_progress_sampled_reward_baselines_intermediate_0a}} \right) \qedhere
\end{align}
\end{proof}

\begin{theorem}
    Using \cref{alg:gradient_bandit_algorithm_sampled_reward_baselines}, we have, the sequence $\{ \pi_{\theta_t}^\top r \}_{t\ge 1}$ converges with probability one.
\end{theorem}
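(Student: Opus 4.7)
The plan is to mirror the argument used in \cref{cor:almost_sure_convergence_gradient_bandit_algorithms_sampled_reward} for the baseline-free version, now leveraging the just-established constant-learning-rate progress lemma for \cref{alg:gradient_bandit_algorithm_sampled_reward_baselines}. Concretely, I would set $Y_t = r(a^*) - \pi_{\theta_t}^\top r$, which lies in $[-R_{\max}, R_{\max}]$ by \cref{eq:true_mean_reward_expectation_bounded_sampled_reward} and is therefore uniformly bounded, so in particular $\sup_t \EE{|Y_t|} < \infty$.

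Next, I would introduce the natural filtration $\cF_t = \sigma(a_1, R_1(a_1), \dots, a_{t-1}, R_{t-1}(a_{t-1}))$ generated by the on-policy samples and rewards. Since $\theta_t$, and hence $\pi_{\theta_t}$ and $Y_t$, are deterministic functions of the history through step $t-1$, $Y_t$ is $\cF_t$-measurable. The constant-learning-rate progress inequality just proved for the baseline algorithm gives
\begin{align*}
    \pi_{\theta_t}^\top r - \EEt{\pi_{\theta_{t+1}}^\top r} \le -\frac{\Delta^2}{80 \cdot K^{3/2} \cdot \bar{R}_{\max}^2 \cdot R_{\max}} \cdot \bigg\| \frac{d \pi_{\theta_t}^\top r}{d \theta_t} \bigg\|_2^2 \le 0,
\end{align*}
so that $\EE{Y_{t+1} \mid \cF_t} \le Y_t$, i.e., $\{Y_t\}_{t \ge 1}$ is a bounded supermartingale (with respect to $\cF_t$).

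At this point I would invoke Doob's supermartingale convergence theorem (the same invocation used in \cref{cor:almost_sure_convergence_gradient_bandit_algorithms_sampled_reward}): any $L^1$-bounded supermartingale converges almost surely to a finite limit. Hence $Y_t$ converges w.p.$1$, which is equivalent to saying $\pi_{\theta_t}^\top r$ converges w.p.$1$ to some (possibly random) limit in $[-R_{\max}, R_{\max}]$, establishing the claim.

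There is no real obstacle: the only step that required new work was verifying the supermartingale property with the modified update, and that is exactly what the preceding constant learning rate lemma for \cref{alg:gradient_bandit_algorithm_sampled_reward_baselines} supplies. The boundedness condition needed for Doob's theorem is immediate from $r \in [-R_{\max}, R_{\max}]^K$, so the baseline version inherits almost sure convergence of $\{\pi_{\theta_t}^\top r\}_{t \ge 1}$ verbatim from the baseline-free case.
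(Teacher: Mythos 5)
Your proposal is correct, and it is essentially the argument the paper uses: the baseline version of the constant-learning-rate progress lemma makes $Y_t = r(a^*) - \pi_{\theta_t}^\top r$ a bounded supermartingale with respect to the natural filtration, and Doob's supermartingale convergence theorem (\cref{thm:smc}) gives almost sure convergence, exactly as in \cref{cor:almost_sure_convergence_gradient_bandit_algorithms_sampled_reward}. The only difference is presentational: the paper's written proof of this theorem sets up the noise/progress decomposition and defers to the proof of \cref{thm:asymp_global_converg_gradient_bandit_sampled_reward} with $R_{\max}$ replaced by $\bar{R}_{\max}$, which is heavier machinery than the stated claim requires; your direct route through the bounded-supermartingale argument is cleaner and fully sufficient for the convergence of $\{\pi_{\theta_t}^\top r\}_{t \ge 1}$.
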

\begin{proof}
    As in the proof for \cref{thm:asymp_global_converg_gradient_bandit_sampled_reward}, we set
\begin{align}
\MoveEqLeft
    W_{t+1}(a) = \theta_{t+1}(a) - \EEt{\theta_{t+1}(a)} \\
    &=  \theta_{t}(a) + \eta \cdot \left[ I_t(a) - \pi_{\theta_t}(a) \right] \cdot ( R_t(a_t) - B_t )  - \left[ \theta_{t}(a) + \eta \cdot \pi_{\theta_t}(a) \cdot \left( r(a) - \pi_{\theta_t}^\top r \right) \right] \\
    &= \eta \cdot \left[ I_t(a) - \pi_{\theta_t}(a) \right] \cdot ( R_t(a_t) - B_t )    - \eta \cdot \pi_{\theta_t}(a) \cdot \left[ r(a) - \pi_{\theta_t}^\top r \right],
\end{align}
which implies that,
\begin{align}
    Z_t(a) &= W_1(a) + \cdots + W_t(a)
    \\&=\sum_{s=1}^t  \eta \cdot \left[ I_s(a) - \pi_{\theta_s}(a) \right] \cdot ( R_s(a_s) - B_s )    - \eta \cdot \pi_{\theta_s}(a) \cdot \left[ r(a) - \pi_{\theta_s}^\top r \right] .
\end{align}
We also have,
\begin{align}
    P_t(a) &= \EEt{\theta_{t+1}(a)} - \theta_t(a) 
    = \eta \cdot \pi_{\theta_t}(a) \cdot \left[  r(a) - \pi_{\theta_t}^\top r \right].
\end{align}
In the remaining part of the proofs for \cref{thm:asymp_global_converg_gradient_bandit_sampled_reward}, replacing $R_{\max}$ with $\Bar{R}_{\max}$, we have the results.
\end{proof}

\section{Miscellaneous Extra Supporting Results}
\label{sec:supporting_results}
Recall that 
$(X_t,\cF_t)_{t\ge 1}$ is a \emph{sub-martingale} (super-martingale, martingale) if $(X_t)_{t\ge 1}$ is adapted to the filtration $(\cF_t)_{t\ge 1}$ 
and $\EE{X_{t+1}|\cF_t} \ge X_t$
($\EE{X_{t+1}|\cF_t} \le X_t$, $\EE{X_{t+1}|\cF_t} = X_t$, respectively)
 holds almost surely for any $t\ge 1$.
For brevity, let $\EEt{\cdot}$ denote $\EE{\cdot|\cF_t}$ where the filtration should be clear from the context and we also extend this notation to $t=0$ such that $\chE_0{U} = \EE{U}$.

\begin{theorem}[Doob's supermartingale convergence theorem \citep{doob2012measure}]
\label{thm:smc}
If $(Y_t)_{t\ge 1}$ is an $\{ \cF_t \}_{t\ge 1}$-adapted sequence such that $\EE{Y_{t+1}|\cF_t}\le Y_t$ and
$\sup_t \EE{ |Y_t| }<\infty$ then $\{Y_t\}_{t\ge 1}$ almost surely converges (a.s.) and, in particular, $Y_t \to Y$ a.s. as $t\to\infty$ where $Y=\limsup_{t\to\infty}Y_t$ is such that $\EE{|Y|}<\infty$.
\end{theorem}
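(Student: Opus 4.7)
The plan is to prove this classical result by the standard upcrossing argument due to Doob. Fix rationals $a<b$ and, for each $N\ge 1$, let $U_N[a,b]$ denote the number of upcrossings of the interval $[a,b]$ by the finite sequence $Y_1,\dots,Y_N$, defined via the usual stopping times $\tau_1 = \inf\{t\ge 1: Y_t \le a\}$, $\sigma_1 = \inf\{t>\tau_1: Y_t \ge b\}$, and so on. The first step is to establish Doob's upcrossing inequality, namely
\begin{align*}
(b-a)\,\EE{U_N[a,b]} \le \EE{(Y_N - a)^-} \le |a| + \EE{|Y_N|}.
\end{align*}
This is proved by considering the predictable process $H_t = \sum_{k\ge 1} \mathbf{1}_{\{\tau_k < t \le \sigma_k\}}$, noting that $H_t \in \{0,1\}$ is $\cF_{t-1}$-measurable, and observing that the gambling transform $(H\cdot Y)_N := \sum_{t=2}^N H_t (Y_t - Y_{t-1})$ satisfies $(H\cdot Y)_N \ge (b-a)U_N[a,b] - (Y_N - a)^-$ path-by-path, while by the supermartingale property $\EE{(H\cdot Y)_N}\le 0$.

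Next, I would let $N\to\infty$. By monotone convergence $U_N[a,b] \uparrow U_\infty[a,b]$, and the right-hand side above is bounded by $|a| + \sup_t \EE{|Y_t|} < \infty$ uniformly in $N$, so $\EE{U_\infty[a,b]} < \infty$ and hence $U_\infty[a,b] < \infty$ almost surely. Taking a countable union over pairs of rationals $a<b$, the event
\begin{align*}
\Omega_0 = \bigcap_{\substack{a,b\in\mathbb{Q}\\ a<b}} \{U_\infty[a,b] < \infty\}
\end{align*}
has probability one. On $\Omega_0$, I claim $Y := \limsup_t Y_t = \liminf_t Y_t$ exists in $[-\infty,+\infty]$: otherwise $\liminf_t Y_t < \limsup_t Y_t$ and one could insert rationals $a<b$ strictly between them, forcing $U_\infty[a,b] = \infty$, a contradiction.

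Finally, to show $Y$ is almost surely finite and in $L^1$, I would apply Fatou's lemma: $\EE{|Y|} = \EE{\liminf_t |Y_t|} \le \liminf_t \EE{|Y_t|} \le \sup_t \EE{|Y_t|} < \infty$. Since $\EE{|Y|}<\infty$ forces $|Y|<\infty$ almost surely, combining this with the convergence on $\Omega_0$ gives $Y_t \to Y$ a.s. with $\EE{|Y|}<\infty$, as claimed. The main technical obstacle is the upcrossing inequality itself; the pathwise bookkeeping that links $(H\cdot Y)_N$ to the number of completed upcrossings minus a boundary term must be stated carefully, but once that is in place the rest follows from Fatou and the countable-union argument.
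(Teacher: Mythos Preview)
Your argument is the standard and correct upcrossing proof of Doob's supermartingale convergence theorem: the upcrossing inequality via the predictable ``buy low / sell high'' indicator $H_t$, the uniform $L^1$ bound to control $\EE{U_\infty[a,b]}$, the countable union over rational pairs to force $\liminf=\limsup$, and Fatou to get $\EE{|Y|}<\infty$. All steps are sound as written.

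Note, however, that the paper does \emph{not} prove this statement at all: \cref{thm:smc} appears in the appendix section of auxiliary results with only a citation to \citet{doob2012measure}, and is invoked as a black box (e.g., in the proof of \cref{cor:almost_sure_convergence_gradient_bandit_algorithms_sampled_reward}). So there is nothing to compare against in the paper itself; you have supplied a full classical proof where the authors simply quoted the theorem.
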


\begin{lemma}[Extended Borel-Cantelli Lemma, Corollary 5.29 of \citep{breiman1992probability}]
\label{lem:ebc}
Let $( \cF_n)_{n \ge 1}$ be a filtration, $A_n \in \cF_n$.
Then, almost surely, 
\begin{align*}
\{ \omega \,: \, \omega \in A_n \text{ infinitely often } \} = \left\{ \omega \, : \, 
\sum_{n=1}^\infty \PP(A_n|\cF_n) \right\}\,.
\end{align*}
\end{lemma}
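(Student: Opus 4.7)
The plan is to derive the identity from Doob's supermartingale convergence theorem applied to the Doob decomposition of the counting process $S_n = \sum_{k=1}^n \mathbf{1}_{A_k}$. Write $p_k = \PP(A_k \mid \cF_{k-1})$ and $T_n = \sum_{k=1}^n p_k$. Since $\EE{\mathbf{1}_{A_k}\mid\cF_{k-1}} = p_k$, the process $M_n = S_n - T_n$ is an $(\cF_n)$-martingale with increments bounded by $1$ and predictable quadratic variation $\langle M\rangle_n = \sum_{k=1}^n p_k(1-p_k) \le T_n$. The two inclusions
$\{T_\infty < \infty\} \subseteq \{A_n \text{ i.o.}\}^c$ and $\{T_\infty = \infty\} \subseteq \{A_n \text{ i.o.}\}$, both holding almost surely, will be established separately.

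For the forward inclusion, I would localize by the predictable stopping time $\tau_c = \inf\{n : T_{n+1} > c\}$ for each $c > 0$. The stopped martingale $M_{n \wedge \tau_c}$ then satisfies $\EE{M_{n\wedge\tau_c}^2} = \EE{\langle M\rangle_{n\wedge\tau_c}} \le c < \infty$, so by the $L^2$-martingale convergence theorem (a corollary of \cref{thm:smc} applied to $M_{n\wedge\tau_c}^2$ dominated by its compensator), $M_{n\wedge\tau_c}$ converges almost surely. On the event $\{T_\infty \le c\}$ we have $\tau_c = \infty$, whence $M_n$ itself converges; combined with the convergence of $T_n$, this forces $S_n = M_n + T_n$ to converge. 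Because $S_n$ is $\mathbb{N}$-valued and non-decreasing, a finite limit means only finitely many $A_n$ occur. Taking $c \uparrow \infty$ and using $\{T_\infty < \infty\} = \bigcup_{c\in\mathbb{N}} \{T_\infty \le c\}$ finishes this direction.

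For the reverse inclusion, I would show that on $\{T_\infty = \infty\}$, $M_n / (1 + T_n) \to 0$ almost surely, which forces $S_n / T_n \to 1$ and hence $S_\infty = \infty$. To that end, consider the martingale $N_n = \sum_{k=1}^n (\mathbf{1}_{A_k} - p_k)/(1 + T_k)$. Its conditional quadratic variation is $\sum_k p_k(1-p_k)/(1 + T_k)^2 \le \sum_k p_k/(1+T_k)^2$, and an Abel-summation (or integral-comparison) argument gives $\sum_{k=1}^\infty p_k/(1+T_k)^2 \le \int_0^\infty dx/(1+x)^2 = 1$ on $\{T_\infty = \infty\}$ (and the bound is trivially finite on $\{T_\infty<\infty\}$). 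Hence $\langle N\rangle_\infty < \infty$ almost surely, so $N_n$ converges almost surely by the same $L^2$-convergence reasoning as in the forward direction. Kronecker's lemma applied pathwise on $\{T_\infty = \infty\}$ then yields $M_n/(1+T_n) \to 0$, completing the argument.

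The main technical obstacle is the reverse direction, where the martingale has unbounded compensator and one must convert an almost-sure convergence of a reweighted series ($N_n$) into the desired ratio limit via Kronecker's lemma; the forward direction is a relatively routine localization. The paper cites this lemma verbatim from Breiman, so the proposal closely mirrors the standard textbook treatment.
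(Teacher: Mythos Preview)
The paper does not prove this lemma; it is merely cited from Breiman's textbook (Corollary~5.29) and used as a black box in the proof of \cref{thm:asymp_global_converg_gradient_bandit_sampled_reward}. Your proposal is therefore not being compared against anything in the paper itself.

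That said, your argument is the standard proof of L\'evy's extension of Borel--Cantelli and is correct. The Doob decomposition $S_n = M_n + T_n$, the localization $\tau_c = \inf\{n: T_{n+1}>c\}$ for the forward inclusion, and the Kronecker-lemma reduction via the reweighted martingale $N_n = \sum_k (\mathbf{1}_{A_k}-p_k)/(1+T_k)$ for the reverse inclusion are all sound. Two very minor remarks: (i) your parenthetical ``applied to $M_{n\wedge\tau_c}^2$ dominated by its compensator'' is slightly loose --- the clean statement is that $\sup_n \EE{M_{n\wedge\tau_c}^2} = \sup_n \EE{\langle M\rangle_{n\wedge\tau_c}} \le c$, which gives $L^2$- (hence $L^1$-) boundedness and then a.s.\ convergence via \cref{thm:smc}; (ii) you correctly wrote $p_k = \PP(A_k\mid\cF_{k-1})$, whereas the paper's displayed statement has $\PP(A_n\mid\cF_n)$ (a typo, since $A_n\in\cF_n$ would make that indicator trivially equal to $\mathbf{1}_{A_n}$) and omits the ``$=\infty$'' on the right-hand side.
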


\begin{theorem}\label{thm:conc_new}
    Let $X_1, X_2, \ldots$ be a sequence of random variables, such that for all $t \ge 1$, $|X_t|\le 1/2 $. Define 
\begin{align}
    S_n \coloneqq \left| \sum_{t=1}^n \bbE[ X_t | X_1, \ldots, X_{t-1} ] - X_t \right|
    \quad \text{ and } \quad 
    V_n \coloneqq \sum_{t=1}^n \mathrm{Var}[ X_t | X_1, \ldots, X_{t-1} ].
\end{align}
Then, for all $\delta> 0$,
\begin{align}
    \bbP\left( \exists \ n:\ S_n\ge  6 \ \sqrt{  \left(V_n+ \frac{4}{3} \right) \ \log \left( \frac{  V_n+1  }{ \delta } \right) } + 2\log \left( \frac{1}{\delta} \right)  + \frac{4}{3} \log 3
    ~ \right)  \le \delta.
\end{align}
\end{theorem}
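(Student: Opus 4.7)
\textbf{Proof proposal for \cref{thm:conc_new}.} The natural strategy is to reduce to a standard Freedman-type inequality and then handle the randomness of $V_n$ by a peeling argument. First I would introduce the martingale differences $Y_t \coloneqq X_t - \mathbb{E}[X_t \mid X_1,\dots,X_{t-1}]$, so that $|Y_t| \le 1$ and the conditional variances $\mathrm{Var}[Y_t \mid \mathcal{F}_{t-1}] = \mathrm{Var}[X_t \mid \mathcal{F}_{t-1}]$ still sum to $V_n$. Then $S_n = |M_n|$ for $M_n \coloneqq \sum_{t=1}^n Y_t$, and the two-sided control reduces to a one-sided tail via a union bound (losing a factor of $2$ inside the log).

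Next, recall Freedman's inequality: for any $\sigma^2 > 0$ and $\epsilon > 0$,
\begin{align*}
\mathbb{P}\!\left( \exists n:\ M_n \ge \epsilon \text{ and } V_n \le \sigma^2 \right) \;\le\; \exp\!\left( -\frac{\epsilon^2}{2\sigma^2 + 2\epsilon/3} \right),
\end{align*}
which, by solving for $\epsilon$, gives a tail of the form $\sqrt{2\sigma^2 \log(1/\delta')} + \tfrac{2}{3}\log(1/\delta')$ at confidence $\delta'$. To convert this fixed-$\sigma^2$ statement into an inequality that depends on the random $V_n$, I would peel the range of $V_n$ into geometric strata $\sigma_k^2 = \tfrac{4}{3} \cdot 3^k$ for $k = 0,1,2,\dots$, defining stopping times $\tau_k \coloneqq \min\{n : V_n > \sigma_k^2\}$, and apply Freedman on each stratum $\{\tau_{k-1} \le n < \tau_k\}$ with confidence $\delta_k \coloneqq \tfrac{2}{3} \cdot 3^{-k} \cdot \delta$, so that $\sum_k \delta_k = \delta$.

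On the $k$-th stratum, both $\sigma_k^2$ and $k \log 3$ can be controlled in terms of $V_n$: for $k \ge 1$ one has $V_n > \tfrac{4}{3} 3^{k-1}$, which gives $\sigma_k^2 \le 3 V_n \le 3(V_n + \tfrac{4}{3})$ and $3^k < \tfrac{9}{4}(V_n+1)$, hence $k\log 3 \le \log(V_n+1) + \log(9/4)$; for $k = 0$ the offset $\tfrac{4}{3}$ in $V_n + \tfrac{4}{3}$ absorbs the trivial case. Substituting these bounds into the Freedman tail $\sqrt{2\sigma_k^2 \log(1/\delta_k)} + \tfrac{2}{3}\log(1/\delta_k)$, collecting the additive constants into the $\tfrac{4}{3}\log 3$ term, and using $\sqrt{a}+\sqrt{b}\le\sqrt{2(a+b)}$ to merge terms, yields the claimed bound $6\sqrt{(V_n + \tfrac{4}{3})\log((V_n+1)/\delta)} + 2\log(1/\delta) + \tfrac{4}{3}\log 3$, where the factor $6$ is generous to absorb the constants from two-sided union, the geometric ratio $3$, and the $\sqrt{a}+\sqrt{b}$ merging.

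The main obstacle will be the last step: bookkeeping the constants so that the peeling constants $(\tfrac{4}{3}, 3, \tfrac{2}{3}\cdot 3^{-k})$ exactly line up with the stated form of the bound, in particular recovering the factor $6$, the offset $\tfrac{4}{3}$ inside the square root, and the explicit $\tfrac{4}{3}\log 3$ term from combining $\log(9/4)$, $\log(3/2)$, and the two-sided union-bound offsets. The concentration content itself is essentially Freedman; the delicate part is the clean algebraic packaging that makes the bound hold for \emph{all} $n$ simultaneously (rather than at a stopping time) while remaining expressed purely in terms of the realized $V_n$.
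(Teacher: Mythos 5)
Your overall strategy---reduce to Freedman's inequality, peel over the realized value of $V_n$ with summable failure probabilities, then repackage algebraically---is exactly the strategy of the paper's proof, so the probabilistic content is the same. The substantive difference is the stratification, and it is where your version does not close. The paper peels over unit-width arithmetic strata $\{i \le V_n < i+1\}$ with weights $\delta/((i+1)(i+3))$ (summing to $\tfrac34\delta$ by telescoping), so the Freedman variance proxy $i+1$ exceeds the realized $V_n$ by at most an \emph{additive} $1$; the final constants then come from $(2+\sqrt2)\sqrt3\le 6$ together with the elementary inequality $(x+\tfrac43)\log(x+3)\le 2(x+1)\log(x+1)+\tfrac43\log 3$ (\cref{claim:conc_algebra} plays the role of your "exponent at least $g(i)/2$" step).

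Your geometric strata $\sigma_k^2=\tfrac43\cdot 3^k$ give only $\sigma_k^2\le 3V_n$, a \emph{multiplicative} factor of $3$, which already spends $\sqrt{2\cdot 3}=\sqrt6$ of the leading coefficient before paying for the two-sided union bound or the offsets $\log(9/4)$, $\log(3/2)$. More importantly, the cross term produced by splitting $\log(1/\delta_k)\le\log((V_n+1)/\delta)+\mathrm{const}$ inside the square root is of order $\sqrt{(V_n+\tfrac43)\cdot\mathrm{const}}$, which grows like $\sqrt{V_n}$ and therefore \emph{cannot} be "collected into the $\tfrac43\log 3$ term" as you propose; it must be absorbed into the leading $6\sqrt{(V_n+\tfrac43)\log((V_n+1)/\delta)}$ term, which requires a uniform lower bound on $\log((V_n+1)/\delta)$. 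That lower bound fails when $\delta$ is close to $1$ and $V_n$ is moderate (e.g.\ $\delta=0.9$, $V_n=1$ gives $\log((V_n+1)/\delta)\approx 0.80$, and a direct check shows your budget falls short by a constant). So with your specific parameters the stated constants are not recovered; you would need either a finer peeling (geometric ratio near $1$, or the paper's unit-width strata) or a worse leading constant. This is precisely the obstacle you flagged, and it is not merely cosmetic bookkeeping---the choice of additive rather than multiplicative strata is what makes the exact bound attainable.
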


\subsection{Proof of \cref{thm:conc_new}}
\begin{proof}
Let $g(x) \coloneqq  2\log \left( \frac{ (x+1)(x+3) }{ \delta } \right)$ with $x\ge 0$.
We have
\begin{align}
    &\bbP\left( \exists \ n:\ S_n\ge g(V_n) + \sqrt{ g(V_n) \ V_n }  \right)  \\
    &= \sum_{i=0}^\infty \bbP \left( \exists \ n:\ S_n\ge g(V_n) + \sqrt{ g(V_n) \ V_n}, \ i\le V_n <i+1 \right) \\
    &\le \sum_{i=0}^\infty \bbP \left( \exists \ n:\ S_n\ge g(V_n) + \sqrt{ g(V_n) \ i}, \ V_n \le i+1 \right) \\
    &= \sum_{i=0}^\infty \bbP \left( \exists \  n:\ S_n \ge \frac{ g(V_n) + \sqrt{ g(V_n) \ i} }{i+1} \cdot (i+1), \ V_n \le i+1 \right) \\
    &
    \le \sum_{i=0}^\infty \exp\left\{ - \frac{ \left( g(V_n) + \sqrt{ g(V_n) \ i} \right)^2 \cdot (i+1) }{ 2 \ (i+1)^2 \cdot \left(  1+  \frac{ g(V_n) + \sqrt{ g(V_n) \ i} }{3\ (i+1)} \right) }   \right\} \qquad (\text{by \cref{lem:freedman_ineq}}) \\
    &= \sum_{i=0}^\infty \exp\left\{ - \frac{ \left( g(V_n) + \sqrt{ g(V_n) \ i} \right)^2 }{  2\ (i+1) + \frac{ 2 }{ 3} \left( g(V_n) + \sqrt{ g(V_n) \ i}\right)  }   \right\} \\
    &\le  \sum_{i=0}^\infty \exp\left( - \frac{ g(i)   }{  2 }   \right) \qquad \left(g(x) \ge 2 \log 3, \text{ and by \cref{claim:conc_algebra}}\right)\label{eq:conc_derive} \\
    &= \delta \cdot \sum_{i=0}^\infty \frac{1}{ (x+1)(x+3)}
    = \frac{\delta}{2} \cdot \sum_{i=0}^\infty \left(  \frac{1}{x+1} - \frac{1}{x+3}  \right)
    = \frac{ \delta }{2} \cdot \left( 1- \frac{1}{3} + \frac{1}{2} - \frac{1}{4} + \frac{1}{3} - \frac{1}{5} + \ldots \right) \\
    & \le \frac{ \delta }{2} \cdot \left(  1 + \frac{1}{2} \right)
    \le \delta. ~\label{eq:conc_to_rewrite}
\end{align}
Plugging $g(x)=  2\log \left( \frac{ (x+1)(x+3) }{ \delta } \right)$ into \cref{eq:conc_to_rewrite}, we have
\begin{align}
    \bbP\left( \exists \ n:\ S_n\ge 2\log \left( \frac{ (V_n+1)(V_n+3) }{ \delta } \right) + \sqrt{ 2 \ V_n \ \log \left( \frac{ (V_n+1)(V_n+3) }{ \delta } \right) } ~ \right)  \le \delta.
\end{align}
Since $x+4/3 \ge \log\left((x+1)(x+3)\right)$ for all $x\ge 0$, we have
\begin{align}
    & 2\log \left( \frac{ (V_n+1)(V_n+3) }{ \delta } \right) + \sqrt{ 2 \ V_n \ \log \left( \frac{ (V_n+1)(V_n+3) }{ \delta } \right) } \\
    &= 2\log(1/\delta) + \sqrt{ 2 \ V_n \ \log \left( \frac{ (V_n+1)(V_n+3) }{ \delta } \right) } +  2\log \left( (V_n+1)(V_n+3) \right)  \\
    &\le 2\log(1/\delta) + \sqrt{ 2 \ V_n \ \log \left( \frac{ (V_n+1)(V_n+3) }{ \delta } \right) } +  2  \sqrt{  (V_n+4/3)\log \left( \frac{ (V_n+1)(V_n+3) }{ \delta } \right) } \\
    &\le 2\log(1/\delta) + (2+ \sqrt{2})\cdot  \sqrt{  (V_n+4/3)\log \left( \frac{ (V_n+1)(V_n+3) }{ \delta } \right) }.
\end{align}
Let $f(x) = (x+4/3)\log(x+3) - 2(x+1)\log(x+1) - 4/3 \log 3 $ for all $x\ge 0$, then
\begin{align}
    & f'(x) = \log(x+3) + \frac{x+4/3}{x+3} - 2\log(x+1) - 2,
    \\& f''(x)= \frac{1}{x+3} - \frac{2}{x+1} + \frac{5/3}{(x+3)^2} = - \frac{ x+5 }{(x+1)(x+3)} + \frac{5/3}{(x+3)^2}.
\end{align}
%
Since $f''(x)<0$ for all $x\ge 0$, indicating that $f'(x)$ is non-increasing with $x$. Since $f'(0) = \log 3 -14/9<0$, we have $f'(x)<0$ for all $x\ge 0$ and hence $f(x)$ is non-increasing with $x$. Since $f(0)=0$, we have $f(x)\le 0$ for all $x\ge 0$.
Hence, we have
\begin{align}
    & (V_n+4/3)\log \left( \frac{ (V_n+1)(V_n+3) }{ \delta } \right) 
    \\&
    = (V_n+4/3)\log \left( \frac{ V_n+1  }{ \delta } \right) + (V_n+4/3)\log \left( \frac{ V_n+3  }{ \delta } \right) 
    \\&
    \le (V_n+4/3)\log \left( \frac{ V_n+1  }{ \delta } \right) + 2(V_n+1)\log \left( \frac{ V_n+1  }{ \delta } \right) + \frac{4}{3} \log 3
    \\&
    \le 3(V_n+4/3)\log \left( \frac{ V_n+1  }{ \delta } \right) + \frac{4}{3}  \log 3 .
\end{align}
Since $(2+\sqrt{2})\cdot \sqrt{3} \le 6$, we have
\begin{equation*}
    \bbP\left( \exists n:\ S_n\ge  6\sqrt{  (V_n+4/3)\log \left( \frac{  V_n+1  }{ \delta } \right) } + 2\log(1/\delta) + \frac{4}{3} \log 3
    ~ \right)  \le \delta. \qedhere
\end{equation*}

\end{proof}

\begin{lemma}
\label{lem:freedman_ineq}
    Let $L_1, L_2, \ldots$ be a sequence of random variables, such that for all $t \ge 1$, $ | L_t|\le 1/2 $. Define the bounded martingale difference sequence $\xi_t = \bbE[ L_t | L_1, \ldots, L_{t-1} ] - L_t$ and the associated martingale $S_n = | \xi_1 + \ldots + \xi_n | $  with conditional variance $V_n = \sum_{t=1}^n \mathrm{Var}[ L_t | L_1, \ldots, L_{t-1} ]$. 
    Then, for all $x, v^2>0$,
    \begin{align}
        \bbP(   S_n \ge xv^2, \ V_n \le v^2 ) \le \exp \left(  - \frac{ x^2 v^4 }{ 2v^2+2xv^2/3 } \right)
        = \exp \left(  - \frac{ x^2 v^2 }{ 2(1+x /3) } \right).
    \end{align}
\end{lemma}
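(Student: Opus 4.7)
The statement is a form of Freedman's inequality \citep{freedman1975tail} for bounded martingale differences, so the plan is to reduce it to a standard Bennett/Bernstein-type moment-generating-function argument applied to the martingale $M_n = \sum_{t=1}^n \xi_t$.

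First, I would set up the martingale structure. Let $\mathcal{F}_t = \sigma(L_1, \ldots, L_t)$. By construction $\mathbb{E}[\xi_t \mid \mathcal{F}_{t-1}] = 0$, so $(M_n)$ is a martingale adapted to $(\mathcal{F}_n)$. From $|L_t| \le 1/2$ and Jensen's inequality, $|\mathbb{E}[L_t \mid \mathcal{F}_{t-1}]| \le 1/2$, hence $|\xi_t| \le 1$. Moreover $\mathrm{Var}[\xi_t \mid \mathcal{F}_{t-1}] = \mathrm{Var}[L_t \mid \mathcal{F}_{t-1}]$, so the conditional quadratic variation of $M$ coincides with $V_n$.

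Second, I would establish the standard one-step exponential bound. Using the inequality $e^u \le 1 + u + u^2 \phi(u)$ with $\phi$ suitably chosen, or more directly the bound $e^{\lambda u} \le 1 + \lambda u + (e^\lambda - 1 - \lambda) u^2$ valid for $|u| \le 1$ and $\lambda \ge 0$, combined with $\mathbb{E}[\xi_t \mid \mathcal{F}_{t-1}] = 0$, yields
\begin{equation*}
\mathbb{E}\bigl[e^{\lambda \xi_t} \,\big|\, \mathcal{F}_{t-1}\bigr] \le \exp\bigl((e^\lambda - 1 - \lambda)\,\mathrm{Var}[\xi_t \mid \mathcal{F}_{t-1}]\bigr).
\end{equation*}
Multiplying these together shows that $Z_n(\lambda) := \exp\bigl(\lambda M_n - (e^\lambda - 1 - \lambda) V_n\bigr)$ is a nonnegative supermartingale with $\mathbb{E}[Z_n(\lambda)] \le 1$.

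Third, I would run the standard Freedman peeling/truncation: on the event $\{M_n \ge xv^2,\, V_n \le v^2\}$, $\lambda M_n - (e^\lambda - 1 - \lambda)V_n \ge \lambda x v^2 - (e^\lambda - 1 - \lambda)v^2$, so Markov's inequality gives
\begin{equation*}
\mathbb{P}(M_n \ge xv^2,\, V_n \le v^2) \le \exp\bigl(-\lambda xv^2 + (e^\lambda - 1 - \lambda) v^2\bigr).
\end{equation*}
Optimizing by choosing $\lambda = \log(1 + x)$ produces Bennett's bound, and then using the elementary inequality $(1+u)\log(1+u) - u \ge u^2 / (2 + 2u/3)$ for $u \ge 0$ (with $u = x$) yields the target Bernstein form $\exp\bigl(-x^2 v^2 / (2(1 + x/3))\bigr)$. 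An identical argument applied to $-\xi_t$ handles the other side of the absolute value in $S_n$, giving the two-sided bound (absorbing the harmless factor of $2$ into the constants in the way the lemma is invoked).

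The only mildly delicate step is the final algebraic manipulation turning Bennett's form into the Bernstein form with the exact $(1 + x/3)$ denominator — this is a known but fiddly inequality and is where one must be careful about the bound $|\xi_t| \le 1$ (i.e., $R=1$); everything else is routine once the supermartingale $Z_n(\lambda)$ is in hand.
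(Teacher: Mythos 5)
Your proposal is correct and essentially self-contained: the exponential supermartingale $Z_n(\lambda)=\exp\bigl(\lambda M_n-(e^\lambda-1-\lambda)V_n\bigr)$, the Markov step, the choice $\lambda=\log(1+x)$, and the elementary inequality $(1+u)\log(1+u)-u\ge u^2/(2+2u/3)$ together deliver exactly the stated Bernstein-form tail, and your use of $|L_t|\le 1/2$ to get $|\xi_t|\le 1$ (hence the $R=1$ normalization) is the right bookkeeping. The paper, by contrast, does not prove the lemma at all: it cites Freedman (1975) and Lemma 1 of Cesa-Bianchi and Lugosi (2005), and handles the range $[-1/2,1/2]$ by applying that one-sided result to $L_t+1/2$ and to $1/2-L_t$ separately. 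So the underlying mathematics is identical --- your argument is precisely the standard proof behind the cited results --- and what you add is a complete derivation in place of a citation. One point to make explicit: since $S_n=|\xi_1+\cdots+\xi_n|$ is two-sided, the union bound over the two signs gives $2\exp\bigl(-x^2v^2/(2(1+x/3))\bigr)$ rather than the stated bound without the factor of $2$; you flag this and claim it is absorbed downstream, and the paper's citation-based argument (applying the one-sided lemma ``individually'' to each sign) has exactly the same omission, so this is a minor inaccuracy in the lemma statement shared by both rather than a gap specific to your proof.
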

\cref{{lem:freedman_ineq}} is known as Freedman's inequality, which was originally from \citet{freedman1975on}. 
In detail, it is implied by Lemma 1 in \citet{cesa2005improved}, which bounds $\bbP(  \sum_{t=1}^n \xi_i \ge xv^2, \ V_n \le v^2 )$ for $0\le L_t\le 1$. For $|M_t|\le 1/2$, we can apply Lemma 1 in \citet{cesa2005improved} with $L_t^{(1)} = M_t+1/2$ and $L_t^{(2)} = 1/2-M_t$ individually to obtain \cref{lem:freedman_ineq}.

\begin{lemma} 
\label{claim:conc_algebra}
    If $u\ge 2\log 3$, then
    \begin{align}
        \min_{i\ge 0} \frac{  \left( u + \sqrt{u\cdot i} \right)^2 }{ \frac{2}{3}  \left( u + \sqrt{u\cdot i} \right) + 2(i+1) } \ge \frac{u}{2}.
    \end{align}
\end{lemma}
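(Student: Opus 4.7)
The plan is to reduce the inequality to an elementary polynomial inequality by a clean substitution, then use the hypothesis $u \ge 2\log 3$ to close out the proof.

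First I would substitute $s \coloneqq \sqrt{u\cdot i} \ge 0$, so that $i = s^2/u$, and it suffices to show that for every $s \ge 0$,
\begin{align*}
    \frac{(u+s)^2}{\tfrac{2}{3}(u+s) + 2(s^2/u + 1)} \ge \frac{u}{2}.
\end{align*}
Since the denominator is strictly positive, this is equivalent (after multiplying through by $2$ and by the denominator, and then by $u$) to showing
\begin{align*}
    2u\,(u+s)^2 \;\ge\; u^2\Big[\tfrac{2}{3}(u+s) + 2(s^2/u + 1)\Big].
\end{align*}

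Next I would expand both sides. The left side equals $2u^3 + 4u^2 s + 2u s^2$. The right side equals $\tfrac{2u^3}{3} + \tfrac{2u^2 s}{3} + 2u s^2 + 2u^2$. Subtracting, the $2us^2$ terms cancel and one obtains
\begin{align*}
    2u\,(u+s)^2 - u^2\!\Big[\tfrac{2}{3}(u+s) + 2(s^2/u + 1)\Big] \;=\; \tfrac{4u^3}{3} + \tfrac{10 u^2 s}{3} - 2u^2 \;=\; \tfrac{2u^2}{3}\bigl(2u + 5s - 3\bigr).
\end{align*}

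Therefore the claimed inequality is equivalent to $2u + 5s \ge 3$. Since $s \ge 0$, it is enough to verify $2u \ge 3$. The hard part (which is in fact easy) is simply noting that $2u \ge 4 \log 3 = \log 81 > \log e^3 = 3$, since $81 > e^3 \approx 20.09$. Hence $2u + 5s \ge 2u > 3$, which proves the lemma. The whole argument is purely algebraic once the substitution $s = \sqrt{ui}$ is made; I do not anticipate any real obstacle beyond bookkeeping of terms after expansion.
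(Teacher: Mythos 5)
Your proof is correct, and it takes a genuinely different and more elementary route than the paper's. After the substitution $s=\sqrt{u\cdot i}$ and cross-multiplication, your identity
\begin{align*}
    2u\,(u+s)^2 - u^2\Big[\tfrac{2}{3}(u+s) + 2(s^2/u+1)\Big] = \tfrac{2u^2}{3}\left(2u+5s-3\right)
\end{align*}
checks out (the $2us^2$ terms indeed cancel), so the whole lemma collapses to the linear inequality $2u+5s\ge 3$, which holds since $s\ge 0$ and $u\ge 2\log 3>3/2$. The paper instead runs a calculus argument: it differentiates $f(x)=\frac{(u+\sqrt{ux})^2}{\frac{2}{3}(u+\sqrt{ux})+2(x+1)}$, shows $f$ increases on $[0,u/25]$ and decreases thereafter, and then bounds the infimum by $\min\{f(0),\lim_{x\to\infty}f(x)\}=\min\{\tfrac{u^2}{2u/3+2},\,u/2\}$, using $u\ge 3/2$ to handle $f(0)$. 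Your algebraic reduction buys brevity and avoids the derivative bookkeeping entirely; it also makes transparent that the hypothesis is only used through $u\ge 3/2$ (so $2\log 3$ is not tight for this lemma), whereas the paper's monotonicity analysis gives a slightly richer picture of where the infimum is approached (as $i\to\infty$, where $f\to u/2$) at the cost of a longer computation. One cosmetic remark: as both arguments show, the infimum over $i\ge 0$ is approached but not attained at infinity, so strictly speaking the statement's $\min$ should be an $\inf$; this does not affect either proof.
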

\begin{proof}
    Let
    \begin{align}
        f(x)=  \frac{  \left( u + \sqrt{u\cdot x} \right)^2 }{ \frac{2}{3}  \left( u + \sqrt{u\cdot x} \right) + 2(x+1) },
    \end{align}
    then
    \begin{align}
        &
        f'(x) = \frac{ \sqrt{u/x} \left( u + \sqrt{u\cdot x} \right) \cdot  \left[  \frac{2}{3}  \left( u + \sqrt{u\cdot x} \right) + 2(x+1) \right] -  \left( u + \sqrt{u\cdot x} \right)^2  \cdot \left[ \sqrt{u/x}/3  + 2\right]   }{  \left[  \frac{2}{3}  \left( u + \sqrt{u\cdot x} \right) + 2(x+1) \right]^2}
        \\&
        = \frac{ \left( u + \sqrt{u\cdot x} \right) \cdot \left[ \left( \frac{2}{3}   u\sqrt{u/x} + \frac{2}{3}u   + \cancel{ 2 \sqrt{u \cdot x} }+ 2\sqrt{u/x} \right) - \left( u\sqrt{u/x}/3  + 2u + u/3   + \cancel{2 \sqrt{u \cdot x}} \right) \right]   }{  \frac{4}{9}  \left[  \left( u + \sqrt{u\cdot x} \right) + 2(x+1) \right]^2}
        \\&
        = \frac{ \left( u + \sqrt{u\cdot x} \right) \cdot  \left( \frac{1}{3}   u\sqrt{u/x} - \frac{5}{3}u   \right)    }{  \frac{4}{9}  \left[  \left( u + \sqrt{u\cdot x} \right) + 2(x+1) \right]^2}
        \\&
        = \frac{ u\left( u + \sqrt{u\cdot x} \right) \cdot  \left( \sqrt{u/x} - 5  \right)    }{  \frac{4}{3}  \left[  \left( u + \sqrt{u\cdot x} \right) + 2(x+1) \right]^2}.
    \end{align}
    We see that
    \begin{itemize}
        \item $x<u/25$: $f'(x)>0$ and $f(x)$ increases with $x$; 
        \item $x= u/25$: $f'(x)=0$;
        \item $x>u/25$: $f'(x)<0$ and $f(x)$ decreases with $x$.
    \end{itemize}
    Moreover, we have
    \begin{align}
        & f(0) = \frac{ u^2 }{  \frac{2u}{3} + 2},
        \quad
        \lim_{x\rightarrow \infty} f(x) =  \lim_{x\rightarrow \infty} \frac{  ux }{ 2x } = \frac{u}{2}.
    \end{align}
    Note that by assumption, we have $u \ge 2\log 3 \ge 3/2$, and then
    \begin{align}
        u \ge \frac{ 3 }{2}
        ~\Leftrightarrow~  \frac{ 2u }{3} \ge 1
        ~\Leftrightarrow~  u\ge \frac{1}{2} \left( \frac{2u}{3} + 2 \right)
        ~\Leftrightarrow~ f(0) \ge \frac{u}{2}.
    \end{align}
Hence, when  $u \ge 2\log 3$, $f(x)\ge u/2$ for all $x\ge 0$, which completes the proof.
\end{proof}


\end{document}